\definecolor{c1}{HTML}{E83100}
\definecolor{c2}{HTML}{2F70AF}
\definecolor{c3}{HTML}{c40f40}
\definecolor{hotpink}{RGB}{10,  78,  139}
\newtheorem{theorem}{Theorem}
\newtheorem{lemma}{Lemma}
\newtheorem{definition}{Definition}
\newenvironment{proof}{{\noindent\it Proof}\quad}{\hfill $\square$\par}
\newcommandx{\unsure}[2][1=]{\todo[linecolor=cyan,backgroundcolor=cyan!25,bordercolor=cyan,#1]{#2}}
\newcommandx{\change}[2][1=]{\todo[linecolor=pink,backgroundcolor=pink!25,bordercolor=pink,#1]{#2}}
\newcommandx{\info}[2][1=]{\todo[linecolor=Olive,backgroundcolor=Olive!25,bordercolor=Olive,#1]{#2}}
\newcommandx{\improvement}[2][1=]{\todo[linecolor=green,backgroundcolor=green!25,bordercolor=green,#1]{#2}}
\newcommandx{\thiswillnotshow}[2][1=]{\todo[disable,#1]{#2}}
\title{Rethinking Semi-Supervised Imbalanced Node Classification from Bias-Variance Decomposition}
\author{
 Liang Yan,\hspace{0.03cm}
 Gengchen Wei,\hspace{0.03cm}
 Chen Yang,\hspace{0.03cm}
 Shengzhong Zhang,\hspace{0.03cm}
 Zengfeng Huang\thanks{Corresponding Author.}
\\[1em] 
Fudan University, \texttt{\{yanl21, gcwei22, yanc22\}@m.fudan.edu.cn}\\
\texttt{\{szzhang17, huangzf\}@fudan.edu.cn}\\ 
}
\begin{document}

\maketitle

\begin{abstract}

This paper introduces a new approach to address the issue of class imbalance in graph neural networks (GNNs) for learning on graph-structured data. Our approach integrates imbalanced node classification and Bias-Variance Decomposition, establishing a theoretical framework that closely relates data imbalance to model variance. We also leverage graph augmentation technique to estimate the variance, and design a regularization term to alleviate the impact of imbalance. Exhaustive tests are conducted on multiple benchmarks, including naturally imbalanced datasets and public-split class-imbalanced datasets, demonstrating that our approach outperforms state-of-the-art methods in various imbalanced scenarios. This work provides a novel theoretical perspective for addressing the problem of imbalanced node classification in GNNs.

\end{abstract}

\section{Introduction}
Graphs are ubiquitous in the real world, encompassing social networks, financial networks, chemical molecules \cite{mohammadrezaei2018identifying,ying2018graph,hamilton2017inductive} and so on. Recently, graph neural networks (GNNs) \cite{kipf2016semi,velivckovic2017graph,hamilton2017inductive} have shown exceptional proficiency in representation learning on graphs, facilitating their broad application in various fields. Nevertheless, the process of learning on graphs, analogous to its counterparts in computer vision and natural language processing, frequently encounters significant obstacles in the form of skewed or insufficient data, leading to the prevalent issue of class imbalance. It is undeniable that real-world data often exhibits biases and numerous categories, which entail data limitations and distribution shifts to be common. Furthermore, coupled with the fact that GNNs often feature shallow architectures to prevent "over-smoothing" or "information loss", training with such datasets leads to severe over-fitting problems in minor classes.

Designing GNN imbalanced learning schemes for graph-structured data poses unique challenges, unlike image data. Graph-structured data often requires consideration of the data's topology and environment, and empirical evidence \cite{chen2021topology,park2021graphens,song2022tam} shows that topological asymmetries can also affect the model's performance.  However, due to the extremely irregular and structured nature of the data, quantifying and solving topological asymmetry is difficult and computationally intensive. As a result, existing methods such as oversampling~\citep{shi2020multi, zhao2021graphsmote, park2021graphens} and loss function engineering~\cite{chen2021topology, song2022tam} are problematic for achieving satisfactory outcomes. In depth, the modeling approach for data personalization exhibits very poor scalability and generalization capability. Furthermore, the prevalence of this problem has prompted the community to seek a more effective framework for imbalanced learning. \textbf{\textit{Thus, a more fundamental and theoretical perspective is urgently needed when considering the imbalance node classification problem.}}

Following the idea, in this work, we propose a novel viewpoint to understand graph imbalance through the lens of \textit{Bias-Variance Decomposition}. The Bias-Variance Decomposition~\cite{belkin2019reconciling, briscoe2011conceptual, neal2018modern, yang2020rethinking2} has long been applied in terms of model complexity and fitting capacity. Our theoretical analyses have confirmed the relationship between model variance and the degree of dataset imbalance, whereby an imbalanced training set can lead to poor prediction accuracy due to the resulting increase in variance. Furthermore, we conducted some experiments on real-world dataset and the results demonstrate a significant relationship between the variance and the imbalance ratio of the dataset. As far as we know, we were the first to establish a connection between imbalanced node classification and model variance, which conducts theoretical analysis in this field. 

Moreover, we have devised a regularization term for approximating the variance of model, drawing on our theoretical analysis. The main challenge of this idea lies in estimating the expectation across different training sets. Our key insight is to leverage graph data augmentation to model the different training sets in the distribution, which has not been explored before. Our empirical evaluations have consistently demonstrated that our algorithm, leveraging three basic GNN models~\citep{kipf2016semi, velivckovic2017graph, hamilton2017inductive}, yields markedly superior performance in diverse imbalanced data settings, surpassing the current state-of-the-art methods by a substantial margin. Notably, we have conducted meticulous experiments on two naturally imbalanced datasets, which serve as the realistic and representative benchmarks for real-world scenarios.

Our contribution can be succinctly summarized as follows: \textbf{(i)} We are the first to integrate imbalanced node classification and \textit{Bias-Variance Decomposition}. Our work establishes the close relationship between data imbalance and model variance, based on theoretical analysis. \textbf{(ii)} Moreover, we are the first to conduct a detailed theoretical analysis for imbalanced node classification domain. \textbf{(iii)} Our principal insight is to leverage graph data augmentation as a means of representing the varied training sets that lie within the distribution, whilst simultaneously designing a regularization term that approximates the model's variance. \textbf{(iv)} We conduct exhaustive tests on multiple benchmarks, such as the naturally imbalanced datasets and the public-split class-imbalanced datasets. The numerical results demonstrate that our model consistently outperforms state-of-the-art machine learning methods and significantly outperforms them in heavily-imbalanced scenarios.

\section{Preliminaries}

\subsection{Notations}

We concentrates on the task of semi-supervised imbalanced node classification within an undirected and unweighted graph, denoted as $\mathbf{G}=(\mathbf{V}, \mathbf{E}, \mathbf{V_{L}})$. $\mathbf{V}$ represents the node set, $\mathbf{E}$ stands for the edge set, and $\mathbf{V_{L}} \subset \mathbf{V}$ denotes the set of labeled nodes. The set of unlabeled nodes is denoted as $\mathbf{V_{U}} :=\mathbf{V}\setminus\mathbf{V_{L}}$.
The feature matrix is $\mathbf{X} \in \mathbb{R}^{n \times f}$, where $n$ is the number of nodes and $f$ is the feature dimension. The adjacency matrix is denoted by $\mathbf{A} \in 
 \{0,1\}^{n \times n}$. Let $\mathbf{N}(v)$ represent set of adjacent nodes to node $v$.
The labeled sets for each class are denoted as $(\mathbf{C}_{1}, \mathbf{C}_{2}, \ldots, \mathbf{C}_{k})$, where $\mathbf{C}_{i}$ represents the labeled set for class $i$. The imbalance ratio $\rho$, is defined as $\rho = \max_{i} \left|\mathbf{C}_{i}\right| / \min_{i} \left|\mathbf{C}_{i}\right|$.

\subsection{Graph Dataset Augmentations}
To craft diversified perspectives of the graph dataset, we deploy advanced graph augmentation methodologies, leading to the formation of $\tilde{\mathbf{G}}=(\tilde{\mathbf{A}}, \tilde{\mathbf{X}})$ and $\tilde{\mathbf{G}}^{\prime}=(\tilde{\mathbf{A}}^{\prime}, \tilde{\mathbf{X}}^{\prime})$. Elements such as node attributes and connections within the foundational graph are selectively obfuscated. These augmented perspectives are subsequently channeled through a common Graph Neural Network (GNN) encoder, symbolized as $f_{\theta}: \mathbb{R}^{n \times n} \times \mathbb{R}^{n \times f} \rightarrow \mathbb{R}^{n \times d}$, in order to distill compact node-centric embeddings: $f_{\theta}(\tilde{\mathbf{A}}, \tilde{\mathbf{X}})=\mathrm{h} \in \mathbb{R}^{n \times d}$ and $f_{\theta}(\tilde{\mathbf{A}}^{\prime}, \tilde{\mathbf{X}}^{\prime})=\mathrm{h}^{\prime} \in \mathbb{R}^{n \times d}$.

\subsection{Related Work}
\paragraph{Semi-Supervised Imbalanced Node Classification.} Numerous innovative approaches\citep{shi2020multi, wang2020network, zhao2021graphsmote, liu2021pick, qu2021imgagn, chen2021topology, park2021graphens, song2022tam, yan2023unreal} have been proposed to tackle the difficulties arising from imbalanced node classification on graph data. GraphSMOTE \cite{zhao2021graphsmote} employs the SMOTE \cite{chawla2002smote} algorithm to perform interpolation at the low dimensional embedding space to synthesize the minority nodes, while ImGAGN \cite{qu2021imgagn} introduces GAN \cite{goodfellow2020generative} for the same purpose.
Another work GraphENS \cite{park2021graphens} synthesizes mixed nodes by combining the ego network of minor nodes with target nodes.
To address topology imbalance in imbalanced node classification task, ReNode \cite{chen2021topology} adjusts node weights based on their proximity to class boundaries. TAM \cite{song2022tam} leverages local topology that automatically adapts the margins of minor nodes. Due to spatial constraints, we provide a comprehensive exposition of other relevant literature in Appendix \ref{Other Related Work}.

\section{Theory}
\subsection{Theoretical Motivation}
This section presents a succinct overview of the classical Bias-variance Decomposition~\cite{belkin2019reconciling, briscoe2011conceptual, neal2018modern, yang2020rethinking2} and its application to semi-supervised node classification. Furthermore, we illustrate the influence of imbalance on classification results, which subsequently amplifies the variance.

\paragraph{Bias-variance Decomposition.}
Let $x\in X$ and $y\in Y$ denote the input and label, respectively. Consider an underlying mapping $f: X\to Y$ and the label can be expressed as $y=f(x)+\epsilon$, where $\epsilon$ is some noise. Given a training set $D={(x_i, y_i)}$, we train a model $\hat{f}(x)=\hat{f}(x; D)$ using the samples in $D$.

\begin{definition} Bias-Variance Decomposition is that the expected predicted error of $f$ can be decomposed into Eq(\ref{bias-variance-decomposition}),
\begin{equation}
\label{bias-variance-decomposition}
\mathbb{E}_{D, x}\left[(y-\hat{f}(x ; D))^2\right]=\left(\operatorname{Bias}[\hat{f}(x)]\right)^2+\operatorname{Var}_D[\hat{f}(x ; D)]+\operatorname{Irreducible\ error},
\end{equation}
where $
\mathrm{Bias}[\hat{f}(x)]=\mathbb{E}_x[\hat{f}(x ; D)-f(x)]=\mathbb{E}_D[\hat{f}(x ; D)]-\mathrm{E}[y(x)], $
and $\mathrm{Var}_D[\hat{f}(x ; D)]=\mathbb{E}_D\left[\left(\mathbb{E}_D[\hat{f}(x ; D)]-\hat{f}(x ; D)\right)^2\right].$

\end{definition}

It is important to note that this expectation is taken with respect to both the training sets $D$ and the predicted data $x$. The bias term reflects the model's ability to fit the given data, while the variance term indicates the stability of the model's results with different training sets. In other words, the variance describes the model's generalization performance. 

Next, we consider the specific formulation of variance in the setting of semi-supervised node classification on graph. First, we make some assumptions to simplify the analysis.

\paragraph{Assumptions.}
We make two assumptions in our approach. Firstly, we assume that the node embeddings $h^i$  of  node $x^i$ extracted by a graph neural network for nodes belonging to class $i$ follow a multivariate normal distribution $h^{i} \sim N(\mu^{i}, \Lambda^{i})$, where $\Lambda^{i}$ is a diagonal matrix for all $i = 1,2,\dots,c$. Here, let $\epsilon^{i}=h^i-\mu^i$ which follows the distribution $\epsilon^{i} \sim N(0, \Lambda^{i})$.

Secondly, we consider a simple classifier that estimates the probability of a node belonging to a particular class based the distance $h^TC^i$. Here, $C^i=C^i(D)=\frac{1}{n_{i}} \left(h_{1}^{i} + \dots + h_{n_{i}}^{i}\right)$ is the average of labeled node embedding in training set $D$, where $n_{i}$ is the number of nodes in class $i$. $C^i$ follows the distribution $C^i\sim N\left(\mu^i, \frac{1}{n_{i}} \Lambda^{i}\right)$. Similarly, we denote $e^i=C^i-\mu^i$ and it follows that $e^i\sim N\left(0, \frac{1}{n_{i}} \Lambda^{i}\right)$.

\paragraph{Variance for Semi-supervised Node Classification.}
Under the above assumption, we can write the variance generated by different sampling training set explicitly. For a node $x$ that belongs to class $j$, the variance can be written:
\begin{equation}
\begin{aligned}
     \text{Var}(x) &=\sum_{i=1}^c\mathbb{E}_{D}\left[\left(h^T(x)C^i -\mathbb{E}_{D}\left[h^T(x)C^i\right]\right)^{2}\right] \\
     &= \sum_{i=1}^c\mathbb{E}_{e^i}\left[\left(h^T(x)(\mu^i+e^i) -h^T(x)\mu^i\right)^{2}\right]\\
     &= \sum_{i=1}^c\mathbb{E}_{e^i}\left[\left(h^T(x)e^i\right)^{2}\right] \
    =\sum_{i=1}^c \frac{1}{n_i} h^T(x) \Lambda^i h(x) 
\label{variance-for-im-node}
\end{aligned}
\end{equation}
The last equation is derived from the variance of multivariate normal distribution. The variance over the whole graph  $\sum_{i=1}^c \mathbb{E}_x \left[\mathrm{Var}(x) \right]=\sum_{i=1}^c \mathbb{E}_x \left[\frac{1}{n_i} h^T(x) \Lambda^i h(x) \right]$ is the expectation of Equation \ref{variance-for-im-node} on node $x$. 

\paragraph{Variance and Imbalance.} In Equation \ref{variance-for-im-node}, we notice that the variance of a specific class $i$ is proportional to $\frac{1}{n_i}$. This reveals a relation between \textit{imbalance} and variance. If we assume $\mathbb{E}_x[h^T(x)\Lambda^ih(x)]$ is the same for different $i$, then the following theorem holds:
\begin{theorem}\label{variance and imbalance theorem}
Under the condition that $\sum_i n_i$ is a constant, the variance $\sum_{i=1}^c \mathbb{E}_x \left[\frac{1}{n_i} h^T(x) \Lambda^i h(x) \right]$ reach its minimum when all $n_i$ equal.
\label{theorem1}
\end{theorem}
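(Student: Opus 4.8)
The plan is to reduce the statement to a one-line application of the Cauchy--Schwarz inequality (equivalently, the AM--HM inequality). First I would invoke the stated homogeneity assumption: write $a := \mathbb{E}_x\!\left[h^T(x)\Lambda^i h(x)\right]$, which is independent of $i$ by hypothesis, and note $a > 0$ because each $\Lambda^i$ is a diagonal covariance matrix (positive semidefinite) and $h(x)$ is nonzero on a set of positive probability. This lets me pull the constant out of the sum, so the objective becomes $a\sum_{i=1}^{c}\frac{1}{n_i}$. Minimizing the variance is therefore equivalent to minimizing $g(n_1,\dots,n_c)=\sum_{i=1}^{c}\frac{1}{n_i}$ over the set $\{n_i>0:\ \sum_i n_i = N\}$, where $N=\sum_i n_i$ is the fixed total.

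Next I would apply Cauchy--Schwarz in the form $\left(\sum_{i=1}^{c} n_i\right)\!\left(\sum_{i=1}^{c}\tfrac{1}{n_i}\right)\ge c^2$, which gives $g(n_1,\dots,n_c)\ge c^2/N$, with equality if and only if the vectors $(\sqrt{n_i})_i$ and $(1/\sqrt{n_i})_i$ are proportional, i.e.\ iff $n_1=\dots=n_c=N/c$. Multiplying back by $a>0$ yields the lower bound $a c^2/N$ on the variance and shows it is attained exactly at the balanced configuration. As an alternative derivation I could use Lagrange multipliers: stationarity of $\sum_i 1/n_i - \lambda(\sum_i n_i - N)$ forces $1/n_i^2 = \lambda$ for every $i$, hence all $n_i$ coincide, and convexity of $t\mapsto 1/t$ on $(0,\infty)$ certifies that this unique critical point is the global minimum; a Jensen's inequality argument on the convex function $1/t$ works equally well.

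There is no serious analytical obstacle here; the delicate points are modelling assumptions rather than inequalities. The argument hinges entirely on the homogeneity hypothesis that $\mathbb{E}_x[h^T(x)\Lambda^i h(x)]$ is the same for all $i$ --- without it the objective is a weighted sum $\sum_i a_i/n_i$ whose minimizer is $n_i\propto\sqrt{a_i}$ rather than the uniform allocation --- so I would state this assumption prominently and briefly discuss its plausibility (comparable intra-class embedding geometry across classes). I would also flag the relaxation from integer counts to positive reals: strictly each $n_i\in\mathbb{Z}_{>0}$, so the exact optimum is the most balanced integer partition of $N$ and equality $n_i=N/c$ is achievable only when $c\mid N$; this does not affect the qualitative conclusion --- that imbalance inflates the variance --- which is what we use downstream.
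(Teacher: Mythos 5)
Your proof is correct and takes essentially the same route as the paper: both reduce to minimizing $\sum_i 1/n_i$ under $\sum_i n_i$ fixed, both establish the bound via the AM--HM inequality (your Cauchy--Schwarz form is algebraically identical), and both offer Lagrange multipliers as a secondary derivation. Your added remarks on the necessity of the homogeneity assumption and the integer-relaxation caveat are sensible commentary but do not change the argument.
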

The proof is include in Appendix \ref{proof of theorem1}. As the ratio of imbalance increases, the minority class exhibits a smaller sample size $n_{i}$, which consequently makes a greater contribution to the overall variance. This result gives an new perspective of explanation about why the model shows poor performance under imbalance training, that variance will increase as the training set becomes imbalanced. To substantiate our analysis, we conducted an experimental study to show the relation between imbalance and the variance. The result is shown in Figure \ref{variance imb}.
\begin{figure*}[!ht]
\centering
\subfigure[\textit{CiteSeer-GCN}]{
\begin{minipage}[t]{0.27\linewidth}
\centering
\includegraphics[scale=0.22]{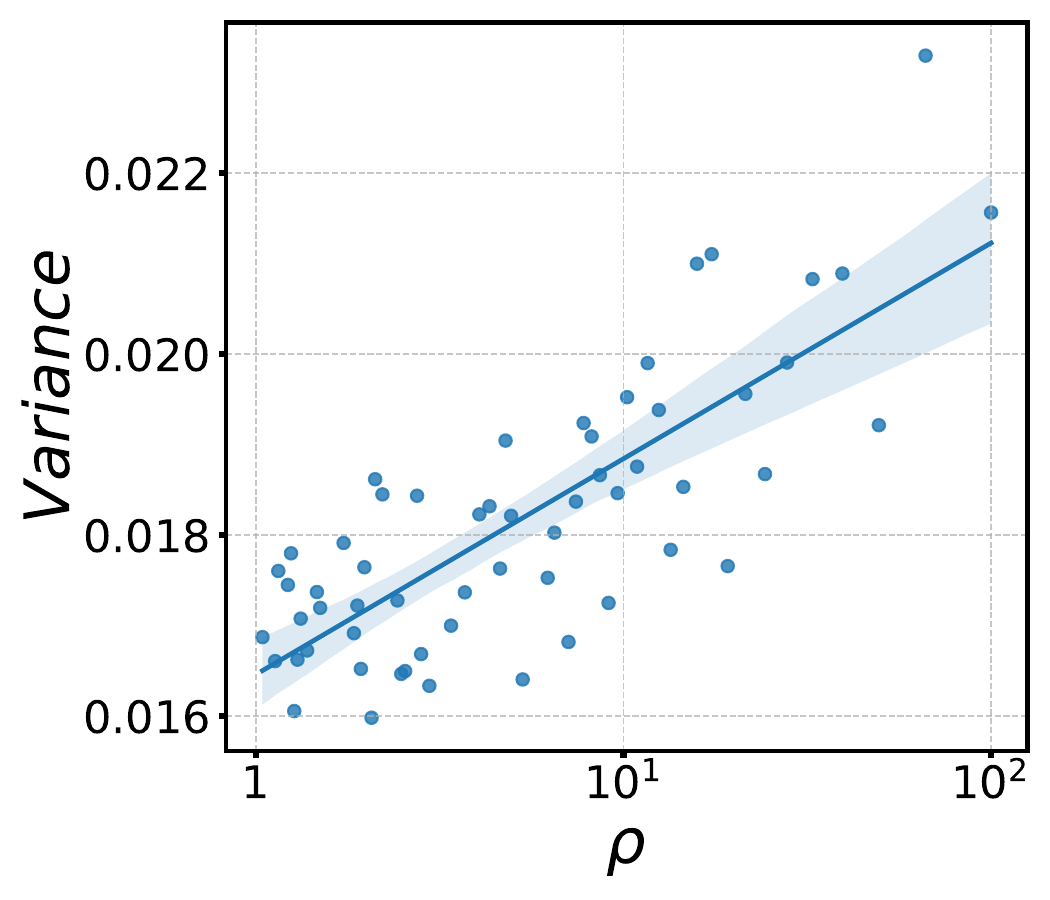}
\end{minipage}%
}%
\subfigure[\textit{CiteSeer-GAT}]{
\begin{minipage}[t]{0.27\linewidth}
\centering
\includegraphics[scale=0.22]{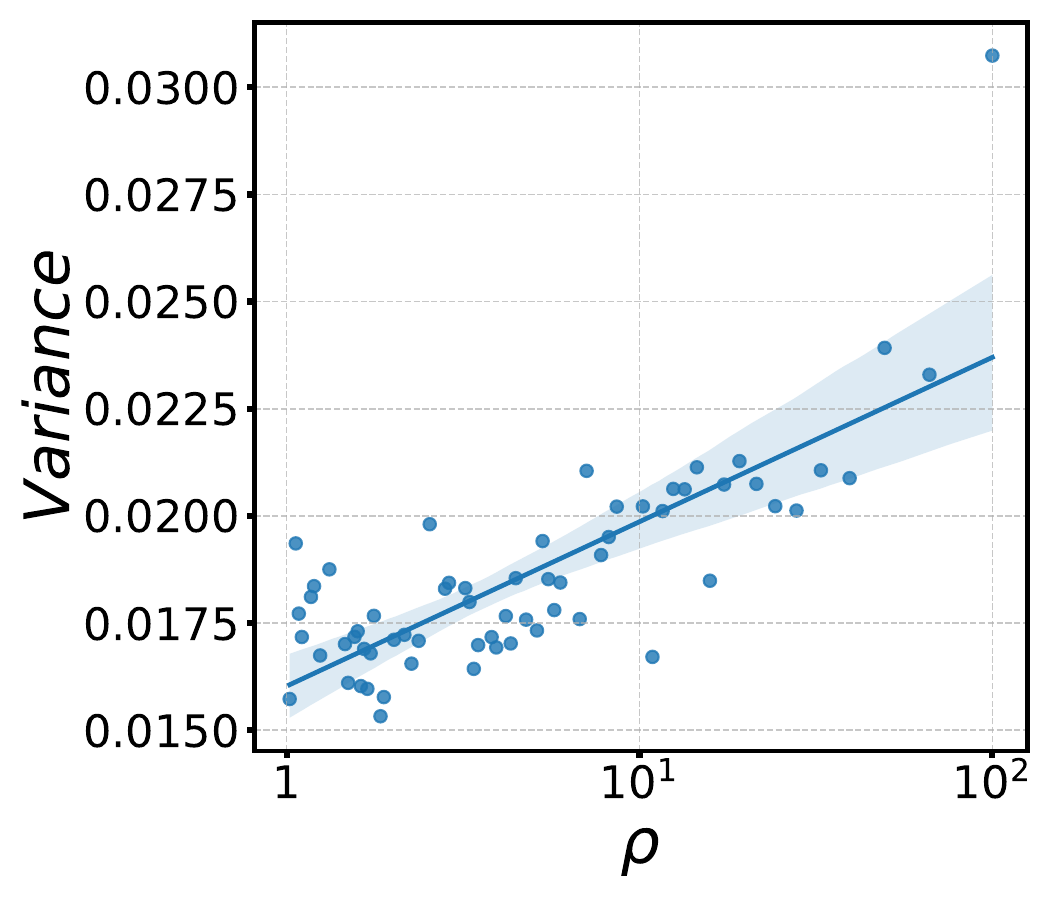}
\end{minipage}%
}%
\subfigure[\textit{CiteSeer-SAGE}]{
\begin{minipage}[t]{0.27\linewidth}
\centering
\includegraphics[scale=0.22]{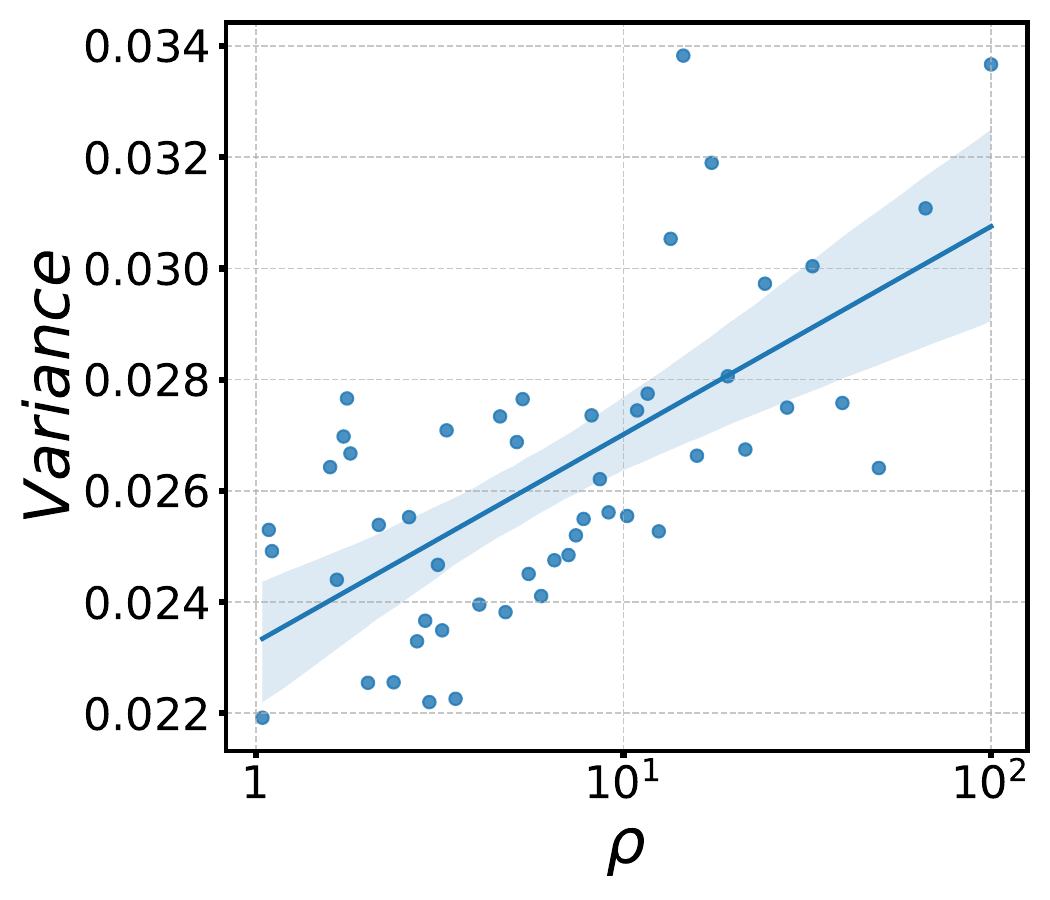}
\end{minipage}%
}%
\centering
\caption{We examine the alteration in variance concerning node classification as the imbalance ratio increases on and plot the regression curves for variance and imbalance ratio. We conduct this experiment using a fixed number of training set nodes but different ones, to mitigate the influence of the number of training set nodes on variance. Detailed experimental setup is in Appendix \ref{More experiments for variance imbalance}.}
\label{variance imb}
\end{figure*}

\subsection{Variance Regularization}
\label{variance regularization}
\paragraph{Optimize Variance during Training.}
Both our analysis and the experimental results show that the increase of variance is a reason for the deterioration of performance for imbalance training set. Therefore, We propose to estimate the variance and use it as a regularization during training. We decompose the model into two parts. A GNN $E$ is utilized as a feature extractor, and a classifier $P$ predict the probability based on the feature extracted by GNN. We define the \textit{variance of $P$} conditioning on $E$ of as Equation \ref{variance-for-im-node} evaluated for a fixed GNN $E$. Then we can take this as regularization. By optimizing this term, we allow the model to automatically search for the feature extractor that yields the smallest variance during the optimization process. 

\paragraph{Estimate the Expectation with Labeled Nodes. } The most challenging aspect lies in the estimation of variance, as we lack access to other training sets. Therefore, we propose Lemma \ref{estimate-expectation} to estimate the variance on training set with the variance on labeled data. The proof is included in Appendix \ref{Proofs of Theorem 2}.

\begin{lemma}
\label{estimate-expectation}
    Under the above assumption for $h^{i} \sim N(\mu^{i}, \Lambda^{i})$,  $C^i\sim N\left(\mu^i, \frac{1}{n_{i}} \Lambda^{i}\right)$, minimizing the $\sum_{i=1}^c \mathbb{E}_x \left[\mathrm{Var}(x) \right]$ is equivalent to minimizing Equation \ref{eq-estimate-variance}: 
    \begin{equation}
    \label{eq-estimate-variance}
        \frac{1}{N}\sum _{x\in G}\sum _{i=1}^{c}\left( h(x)^{T}\frac{1}{\sqrt{2n_{i}}}  \left( h_{1}^{i} -h_{2}^{i}\right)\right)^{2}=\frac{1}{N}\sum_{k=1}^{n_j}\sum _{j=1}^c\sum _{i=1}^{c}\left( (\mu_{k}^j + \epsilon_{k}^j)^{T}\frac{1}{\sqrt{2n_{i}}}  \left( \epsilon_1^{i} -\epsilon_2^{i}\right)\right)^{2}.
    \end{equation}
\label{lemma1}
\end{lemma}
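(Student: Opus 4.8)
The plan is to split the claim into two parts: (i) the displayed equality in Equation~\ref{eq-estimate-variance}, which is pure bookkeeping, and (ii) the assertion that its left-hand side is, in expectation, the true variance $\sum_{i=1}^{c}\mathbb{E}_x[\operatorname{Var}(x)]$, so that the two are minimized together. For (ii) the starting point is Equation~\ref{variance-for-im-node}, which already gives $\sum_{i=1}^{c}\mathbb{E}_x[\operatorname{Var}(x)]=\sum_{i=1}^{c}\frac{1}{n_i}\mathbb{E}_x[h(x)^{T}\Lambda^{i}h(x)]$; the only thing missing is a way to realize the covariance $\frac{1}{n_i}\Lambda^{i}$ of $e^{i}=C^{i}-\mu^{i}$ out of quantities we can actually observe, namely labeled embeddings.

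The key device is a paired-sample trick. If $h_1^{i},h_2^{i}$ are two independent labeled embeddings of class $i$, then under the Gaussian assumption $h_1^{i}-h_2^{i}\sim N(0,2\Lambda^{i})$, so $g^{i}:=\frac{1}{\sqrt{2n_i}}\,(h_1^{i}-h_2^{i})$ has law $N(0,\frac{1}{n_i}\Lambda^{i})$, i.e.\ exactly the distribution of $e^{i}$. Hence, conditioning on $h(x)$ (equivalently, letting $x$ range over nodes independent of the sampled pair), $\mathbb{E}_{h_1^{i},h_2^{i}}\big[(h(x)^{T}g^{i})^{2}\big]=h(x)^{T}\,\mathrm{Cov}(g^{i})\,h(x)=\frac{1}{n_i}h(x)^{T}\Lambda^{i}h(x)$, using $\mathbb{E}[(a^{T}Z)^{2}]=a^{T}\Sigma a$ for $Z\sim N(0,\Sigma)$. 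Summing over $i$ and replacing $\mathbb{E}_x$ by the empirical node average $\frac{1}{N}\sum_{x\in G}$ shows that the expectation of the left-hand side of Equation~\ref{eq-estimate-variance} equals $\sum_{i=1}^{c}\frac{1}{n_i}\mathbb{E}_x[h(x)^{T}\Lambda^{i}h(x)]=\sum_{i=1}^{c}\mathbb{E}_x[\operatorname{Var}(x)]$, which settles (ii). For (i) I would simply substitute: a labeled node $x=x_k^{j}$ of class $j$ has $h(x)=\mu_k^{j}+\epsilon_k^{j}$, each $h_\ell^{i}=\mu^{i}+\epsilon_\ell^{i}$ so that $h_1^{i}-h_2^{i}=\epsilon_1^{i}-\epsilon_2^{i}$ (the class mean cancels), and re-indexing the node sum as $\sum_{x\in G}=\sum_{j=1}^{c}\sum_{k=1}^{n_j}$ reproduces the right-hand side verbatim.

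The main obstacle is making step (ii) rigorous rather than heuristic: one must justify treating $h(x)$ as independent of the labeled pair $h_1^{i},h_2^{i}$ (cleanest if $x$ ranges over nodes not used to form that pair, or by conditioning on $h(x)$ while randomizing only the pair), and one must argue that the finite-sample node average $\frac{1}{N}\sum_{x\in G}$ faithfully represents $\mathbb{E}_x$ — in particular that evaluating the surrogate only on labeled nodes, as the right-hand side does, introduces no bias under the modeling assumptions. Once those reductions are granted, the rest is elementary linear algebra: the second moment of a linear form under a Gaussian, and the cancellation $h_1^{i}-h_2^{i}=\epsilon_1^{i}-\epsilon_2^{i}$.
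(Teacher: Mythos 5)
Your proposal is correct and follows essentially the same route as the paper's proof: both hinge on the distributional identity $\tfrac{1}{\sqrt{2n_i}}(h_1^i - h_2^i)\sim N\bigl(0,\tfrac{1}{n_i}\Lambda^i\bigr)$, i.e.\ that the paired difference of labeled embeddings is a drop-in surrogate for the unobservable center noise $e^i$, after which the displayed equality is the substitution $h(x)=\mu_k^j+\epsilon_k^j$, $h_\ell^i-\mu^i=\epsilon_\ell^i$ with the class means cancelling in the difference. Your write-up is in fact slightly cleaner than the paper's in that it explicitly separates the algebraic identity from the expectation-matching step (via $\mathbb{E}[(a^{T}Z)^2]=a^{T}\Sigma a$) and candidly flags the two implicit assumptions the paper also makes without comment — independence of $h(x)$ from the sampled pair, and treating the empirical node average over labeled nodes as a faithful stand-in for $\mathbb{E}_x$ — but the underlying argument is the same.
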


\paragraph{Estimate the Expectation with Unlabeled Nodes.}
Lemma \ref{estimate-expectation} allows us to replace the sampling on training set with sampling on labeled node pairs $(h_1^i, h_2^i)$ of class $i$.
However, to evaluate Equation \ref{eq-estimate-variance} it still needs to have access to embedding pair $(h_1^i, h_2^i)$ that belong to the same class $i$. Since labels are scarce, such node pairs is difficult to obtain. To make our algorithm more practical, a way of utilizing unlabeled nodes for estimating Equation \ref{eq-estimate-variance} is required. We accomplish this goal through two steps. In the first step, we use graph augmentation to obtain pseudo embedding pair $(h_1, h_2)$. Graph augmentation is a typical used technique in graph contrastive learning. It generate new view $G'$ of a graph $G$ by adding noise to the graph structure. For a node $x$ in two different views $G, G'$, the resulting embedding $h, h'$ can be seen as that of two different nodes. Thus, they can be used to replace true embedding pair in Equation \ref{eq-estimate-variance}.

However, these pseudo node pairs give no information about which class it belongs to. Thus it prohibits us from assign proper $\frac{1}{\sqrt{2n_i}}$ for different $i$ in Equation \ref{eq-estimate-variance}. It is such coefficients that compensate on the minority class, so it's crucial to reintroduce the message about class number for constructing variance regularization. Therefore, in the second step, we use the class center $C^i$ to replace $h$ in Equation \ref{eq-estimate-variance}, as shown in Equation \ref{new-reg-loss}. Since $C^i=\mu^i+e^i$ and the variance of $e^i$ is proportional to $\frac{1}{n_i}$, using Equation \ref{new-reg-loss} to estimate Lemma \ref{estimate-expectation} can apply the similar compensatory for the minority class. More mathematical details are presented in Appendix \ref{Proofs of Theorem 2}.
\begin{equation}
\begin{aligned}
    \label{new-reg-loss}
    &\frac{1}{N}\sum _{x\in G}\sum _{i=1}^{c}\left( (C^{i})^{T} h(x) - (C^{i'})^{T}h'(x)\right)^{2}\\
    &=\frac{1}{N}\sum_{k=1}^{n_j}\sum_{j=1}^c\sum_{i=1}^{c}\left((\mu^{i})^{T}(\epsilon_{k}^j-\epsilon_{k}^{j'})+(e^{i})^{T}\epsilon_{k}^{j}-(e^{i'})^{T}\epsilon_{k}^{j'}\right)^{2}
    \end{aligned}
\end{equation}

\paragraph{From the Viewpoint of Graph Sampling.}
We notice that an alternative perspective can be adopted to interpret this regularization term. Given a graph data $G$, we should acknowledge that there exists some noise in its structure or feature, as a result of measuring. This randomness also attribute to the variance of classification. Assume each graph $G$ in our dataset is a sample drawn from an underlying true graph $\tilde{G}$. Denote the classification result trained on $G$ as $f(x; G)$. Then the variance on $G$ can be written as 
\begin{equation}
    \label{variance-on-graph}
    \mathrm{Var}_G=\mathbb{E}_{x\in G}\left[\mathbb{E}_{G}\left[[f(x; G) -\mathbb{E}_{G}[f(x; G)]^{2}\right]\right]
\end{equation}
Taking $f(x)=h^T(x)C$, it is straitforward to show that Equation \ref{new-reg-loss} and Equation \ref{variance-on-graph} are same.

Despite the algorithm obtained through this new interpretation being identical to the previous one, it is worth noting that the two interpretations differ significantly. The former incorporates information from unlabeled nodes and assumes the variance originates from selecting different nodes as the training set. Graph augmentation facilitates the sampling of node pairs belonging to the same class.
The latter, on the other hand, considers the same training set, but the variance originates from the entire graph's information. Graph augmentation is used to simulate the process of sampling the training graph from the underlying true graph.

\section{The Final Algorithm}

In this section, we introduce our ReVar ($\underline{\textbf{Re}}$gularize $\underline{\textbf{Var}}$iance) framework which is based on previous theoretical frameworks for optimizing model variance. We explain our innovative approach to approximating the model's variance and using it as a regularization term in our algorithm in Section \ref{Variance-constrained Optimization with Adaptive Regularization}. In Section \ref{Intra-Class Aggregation Regularization}, we discuss how we integrate the GCL framework with a new contrastive learning term designed for semi-supervised learning tasks. Lastly, we present the final formulation of our optimization objective in Section \ref{Final Objective}, which optimizes both the cross-entropy loss function and our variance-constrained regularization term, providing a comprehensive approach to improve the performance of semi-supervised imbalanced node classification tasks.
\begin{figure}[ht]
\centering
\includegraphics[width=14cm]{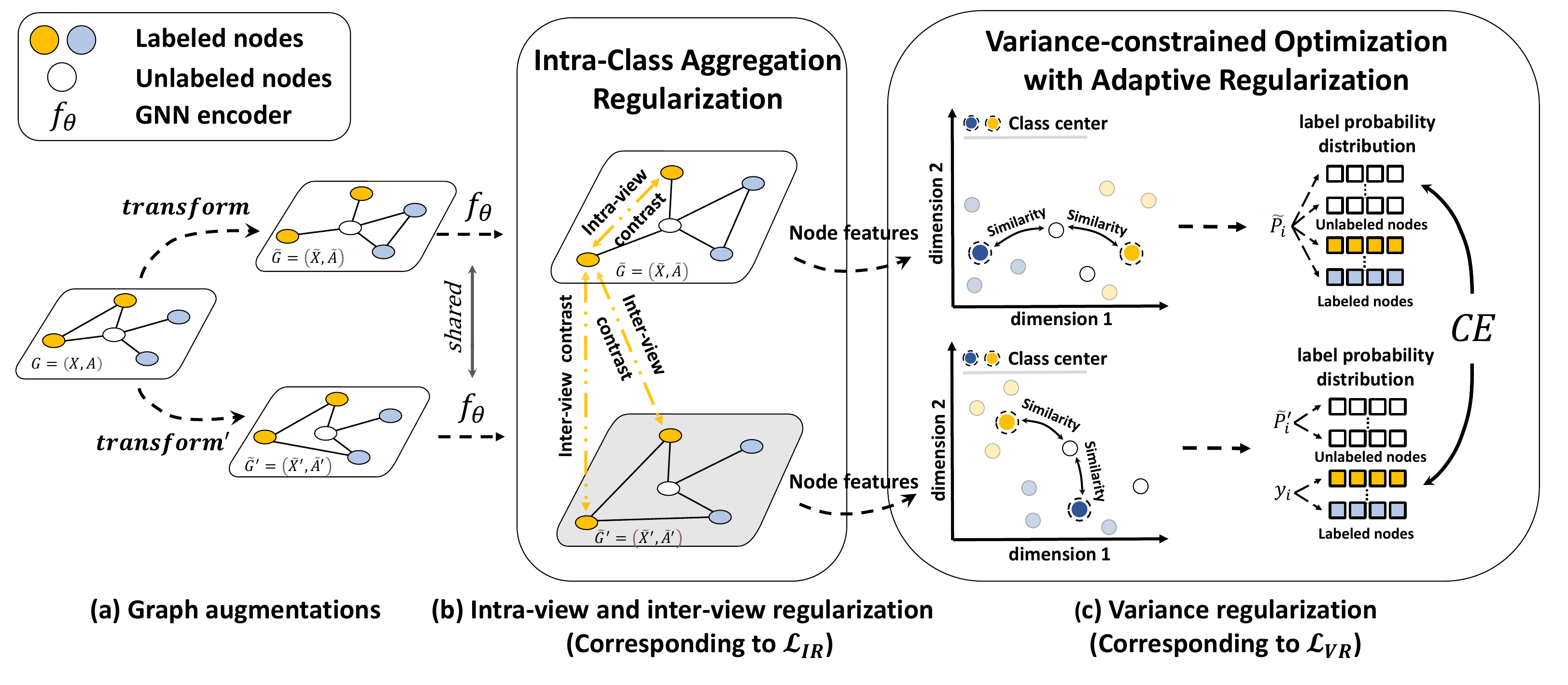}
\caption{
Overall pipeline of ReVar. (a) Two different views of the graph $\tilde{\mathbf{G}},\tilde{\mathbf{G}}^{\prime}$ are obtained by graph augmentation $transform$, and are subsequently fed into GNN encoder $f_{\theta}$. (b) Intra-class and inter-class representations are aggregated, which means, for labeled nodes, it's positive samples not only belong to the same class in both view but also in the other view. (c) Variance is estimated by Equation \ref{new-reg-loss}. Specifically, the label probability distribution is computed for each node in two views based on it's similarity with each class center. And the difference between two probability distributions is used to approximate the model's variance and also optimized as one term in the loss function.}

\label{pipeline}
\end{figure}

\subsection{Variance-constrained Optimization with Adaptive Regularization}
\label{Variance-constrained Optimization with Adaptive Regularization}

To initiate, we elucidate the methodology underpinning the computation of Equation \ref{new-reg-loss}. Initially, we calculate the class center \(C_i\) corresponding to each class. Following this, we systematically construct a probability distribution reference matrix, denoted by \(\mathcal{S}:=[C_1; C_2; \ldots ;C_k]\). It's worth noting that the assemblage of class centers is represented by \(C:=(C_1; C_2; \ldots ;C_k)\).

Given a node \(i\) with its associated embedding \(h_i\), the label probability distribution \(\mathrm{\pi}_i\) pertaining to node \(i\) is ascertainable via the subsequent equation:
\begin{equation}
\mathrm{\pi}_{i}^{j}=\frac{\exp \left(\operatorname{sim}\left(\mathrm{h}_{i}, C_{j}\right) / \tau\right)}{\sum_{l=1}^{k} \exp \left(\operatorname{sim}\left(\mathrm{h}_{i}, C_{l}\right) / \tau\right)}
\label{anchor-supports similarity}
\end{equation}
In the aforementioned equation, \(\mathrm{\pi}_{i}^{j}\) signifies the \(j\)-th dimensional component of \(\mathrm{\pi}_{i}\), \(\operatorname{sim}(\cdot, \cdot)\) denotes the function to compute the cosine similarity between a pair of vectors, and \(\tau\) is a specified temperature hyperparameter.

Subsequent to this, for every node \(i \in V\), we determine the label probability distribution. Distinctively, \(\tilde{\mathrm{\pi}}_{i}\) and \(\tilde{\mathrm{\pi}}_{i}^{\prime}\) are derived from \(\tilde{G}\) and \(\tilde{G}^{\prime}\), respectively. The former acts as the prediction distribution, whereas the latter represents the target distribution. Our approach then aims to minimize the cross-entropy between these two distributions.

It's paramount to underscore that for each labeled node within \(\tilde{G}^{\prime}\), its one-hot label vector is directly assigned as \(\tilde{\mathrm{\pi}}_{i}^{\prime} = y_{i}\) for all \(i \in V_{L}\). This diverges from the procedure of deducing the predicted class distribution as described in Equation \ref{anchor-supports similarity}. This methodology is purposed to optimally leverage the extant label information.

However, a simplistic minimization of the aforementioned loss might engender confirmation bias. This is attributed to the potentially inaccurate \(\tilde{\mathrm{\pi}}_{i}^{\prime}\) estimated for the unlabeled nodes (i.e., \(V_{U}\)), an aspect that can deleteriously impact the efficacy of pseudo-labeling-oriented semi-supervised methodologies. To address this concern, we introduce a confidence-based label-guided consistency regularization~\cite{xie2020unsupervised, sohn2020fixmatch, zhang2021flexmatch}, as detailed below:

\begin{equation}
\centering
\mathcal{L}_{\mathrm{VR}}=\frac{1}{\left|V_{\text{conf}}\right|} \sum_{i \in V_{\text{conf}}} CE\left(\tilde{\mathrm{\pi}}_{i}^{\prime}, \tilde{\mathrm{\pi}}_{i}\right)+\frac{1}{\left|V_{L}\right|} \sum_{i \in V_{L}} CE\left(y_{i}, \tilde{\mathrm{\pi}}_{i}\right)
\label{confidence-based label-guided consistency regularization}
\end{equation}
where \(V_{\text {conf }}=\{ v_{i} \mid \mathbf{1}_{\{\max (\tilde{\mathrm{\pi}}_{i}^{\prime})>v\}}=1, \forall i \in V_{U}\}\) represents the set of nodes with confident predictions, \(v\) is the threshold for determining whether a node has a confident prediction, and \(\mathbf{1}_{\{\dots \}}\) is an indicator function.

In the context of confidence determination for \(\tilde{\mathrm{\pi}}_{i}^{\prime}\), we employ a criterion wherein a value is appraised as confident if its maximum constituent surpasses a stipulated threshold. We postulate that a heightened threshold, represented by \(v\), acts as a safeguard against confirmation bias. This stratagem ensures that only high-caliber target distributions, specifically \(\tilde{\mathrm{\pi}}_{i}^{\prime}\), have a pivotal influence on Equation \ref{confidence-based label-guided consistency regularization}.

\subsection{Intra-Class Aggregation Regularization} 
\label{Intra-Class Aggregation Regularization}

In this section, we propose an extension to the concept of graph contrastive learning that emphasizes the invariance of node representations in semi-supervised scenarios. To achieve this, we partition the nodes into two groups: labeled and unlabeled. Specifically, for the unlabeled nodes, we learn node representation invariance through the use of positive examples exclusively. 
For each labeled node in both views, its positive sample no longer includes itself in the other view, but instead encompasses all labeled nodes in both views belonging to the same class. Through this approach, labeled nodes belonging to the same class can be aggregated, enabling the model to learn that representation invariance now applies not to a single node, but to the complete class representation.

\begin{equation}
\centering
\mathcal{L}_{\mathrm{IR}}= -\frac{1}{\left|V_{U}\right|} \sum_{\mathrm{h}_{i},\mathrm{h}_{i}^{\prime}\in V_{U}} \operatorname{sim}\left(\mathrm{h}_{i} \cdot \mathrm{h}_{i}^{\prime}\right) - \frac{1}{{N}_{all}} (\sum_{l=1}^{k}\sum_{\mathrm{h}_{i},\mathrm{h}_{j}^{\prime} \in \mathbf{C}_{l}} \operatorname{sim}\left(\mathrm{h}_{i} \cdot \mathrm{h}_{j}^{\prime}\right) + \sum_{l=1}^{k} \sum_{ \mathrm{h}_{i},\mathrm{h}_{j} \in \mathbf{C}_{l}\atop i \not= j} \operatorname{sim}\left(\mathrm{h}_{i} \cdot \mathrm{h}_{j}\right))
\end{equation}

where ${N}_{all}$ = $\sum_{i=1}^{k}\left|\mathbf{C}_{i}\right|(\left|\mathbf{C}_{i}\right|-1)$ , $\mathrm{h}_{i}$ and $\mathrm{h}_{i}^{\prime}$ represent feature embedding in $\tilde{G}$ and $\tilde{G}^{\prime}$ respectively for node $i$. To the best of our knowledge, we are the first to introduce label information into the contrastive learning paradigm to obtain invariant representations within the context of intra-class variation and address the challenge of semi-supervised imbalanced node classification. Notably, our approach only utilizes positive examples, which significantly reduces the model's training complexity and enhances its scalability and generalization capabilities.

\subsection{Objective Function} 
\label{Final Objective}
To derive our ultimate objective function, we incorporate both $\mathcal{L}_{\mathrm{VR}}$ and $\mathcal{L}_{\mathrm{IR}}$. These are weighted by coefficients $\lambda_{1}$ and $\lambda_{2}$, respectively. Formally, the amalgamation can be represented as:
\begin{equation}
\mathcal{L}_{\text {composite}} = \lambda_{1} \mathcal{L}_{\mathrm{VR}} + \lambda_{2} \mathcal{L}_{\mathrm{IR}} + \mathcal{L}_{\text {sup }}
\end{equation}
Additionally, we introduce the cross-entropy loss denoted by $\mathcal{L}_{\text {sup }}$, which is established over a collection of labeled nodes, denominated as $V_{L}$.

\section{Experiment}

\subsection{Experimental Setups}

\paragraph{\textbf{Datasets and Baselines.}} 

We have demonstrated the efficacy of our method on five commonly used benchmark datasets across various imbalance scenarios. For the conventional setting ($\rho$=10) of imbalanced node classification in \cite{zhao2021graphsmote, park2021graphens, song2022tam}, we conducted experiments on Cora, CiteSeer, Pubmed, and Amazon-Computers. More precisely, we select half of the classes as the minority classes and randomly convert labeled nodes into unlabeled ones until the training set reaches an imbalance ratio of $\rho$.
To be more precise, for the three citation networks, we adopt the standard splits proposed by \cite{yang2016revisiting} as our initial splits, with an initial imbalance ratio of $\rho$. In order to better reflect real-world scenarios, we conducted representative experiments on the naturally imbalanced datasets Amazon-Computers and Coauthor-CS. For this setting, we utilize random sampling to construct a training set that adheres to the true label distribution of the entire graph. The comprehensive experimental settings, including the evaluation protocol and implementation details of our algorithm, are explicated in Appendix \ref{Details of the experimental setup}. For baselines, we evaluate our method against classic techniques, including cross-entropy loss with re-weighting \citep{japkowicz2002class}, PC Softmax \citep{hong2021disentangling}, and Balanced Softmax \citep{ren2020balanced}, as well as state-of-the-art methods for imbalanced node classification, such as GraphSMOTE \citep{zhao2021graphsmote}, GraphENS \citep{park2021graphens}, ReNode \citep{chen2021topology}, and TAM \citep{song2022tam}. See Appendix \ref{Details of the experimental setup} for implementation details of the baselines.

\paragraph{\textbf{GNN Encoder Selection.}} 
As the foundational structures for our GNN encoder, we harness architectures such as GCN~\cite{kipf2016semi}, GAT~\cite{velivckovic2017graph}, and GraphSAGE~\cite{hamilton2017inductive}.

\subsection{Main Results}

\paragraph{Experimental Results Under Traditional Semi-supervised Settings.}

Table \ref{ratio10citation} presents the averaged balanced accuracy (bAcc.) and F1 score, along with their standard errors, obtained by the baselines and ReVar on three class-imbalanced node classification benchmark datasets, where ${\rho}=10$. The outcomes showcase the superiority of ReVar in comparison to the existing methods. Our proposed technique exhibits a consistent and significant outperformance of the state-of-the-art approaches across all three datasets and three base models. In all cases, ReVar achieves a decisive advantage that underscores its efficacy in addressing the challenge of class imbalance in node classification. Due to space limitations, the experimental results and analysis on Cora are presented in the Appendix \ref{more results}.

\begin{table}[h!]
    \centering
    \caption{
    Comparison of our method ReVar and other baselines on \textit{three benchmark datasets}. Experimental results are measured by averaged balanced accuracy (bAcc.,$\%$) and F1-score ($\%$) with the standard errors over 5 repetitions on three GNN architectures. Highlighted are the top \textbf{\textcolor{c1}{first}} and \textbf{\textcolor{c2}{second}}. \pmb{$\Delta$} is the margin by which our method leads state-of-the-art method. }
  \scalebox{0.75}{\begin{tabular}{lllllllll}
     \toprule[1.5 pt]
     \multirow{43}{*}{\rotatebox{90}{SAGE \qquad\qquad\qquad\qquad\qquad\qquad\quad GAT  \qquad\qquad\qquad\qquad\qquad \qquad\qquad GCN}} & \multicolumn{1}{c}{\bf{Dataset}} &  \multicolumn{2}{c}{$\textit{CiteSeer-Semi}$} &\multicolumn{2}{c}{$\textit{PubMed-Semi}$} &  \multicolumn{2}{c}{$\textit{Computers-Semi}$} &\\

     \cmidrule(lr){2-8}
     
     & \bf{Imbalance Ratio $({\rho=10})$} & \multicolumn{1}{c}{bAcc.} & \multicolumn{1}{c}{F1} & \multicolumn{1}{c}{bAcc.} & \multicolumn{1}{c}{F1} & \multicolumn{1}{c}{bAcc.} & \multicolumn{1}{c}{F1} \\

     \cmidrule(lr){2-8}
     
     & Vanilla  & 38.72 \scriptsize$\pm$ 1.88 &28.74 \scriptsize$\pm$ 3.21 & 65.64 \scriptsize$\pm$ 1.72 & 56.97 \scriptsize$\pm$  3.17 & 80.01 \scriptsize$\pm$ 0.71 & 71.56 \scriptsize$\pm$ 0.81\\

     & Re-Weight  & 44.69 \scriptsize$\pm$ 1.78 &38.61 \scriptsize$\pm$ 2.37 & 69.06 \scriptsize$\pm$ 1.84 & 64.08 \scriptsize$\pm$  2.97 & 80.93 \scriptsize$\pm$ 1.30 & 73.99 \scriptsize$\pm$ 2.20\\
     
     & PC Softmax  & 50.18 \scriptsize$\pm$ 0.55 &46.14 \scriptsize$\pm$ 0.14 & 72.46 \scriptsize$\pm$ 0.80 & 70.27 \scriptsize$\pm$  0.94& 81.54 \scriptsize$\pm$ 0.76 & 73.30 \scriptsize$\pm$ 0.51 \\

     & GraphSMOTE  & 44.87 \scriptsize$\pm$ 1.12 &39.20 \scriptsize$\pm$ 1.62 & 67.91 \scriptsize$\pm$ 0.64  & 62.68 \scriptsize$\pm$ 1.92 & 79.48 \scriptsize$\pm$ 0.47 &  72.63 \scriptsize$\pm$ 0.76\\
     \cmidrule(lr){2-8}
     & BalancedSoftmax  & 55.52 \scriptsize$\pm$ 0.97 &53.74 \scriptsize$\pm$ 1.42 & 73.73 \scriptsize$\pm$ 0.89 & 71.53 \scriptsize$\pm$  1.06  & 81.46 \scriptsize$\pm$ 0.74 &  \textcolor{c2}{\textbf{74.31} \scriptsize$\pm$ 0.51 }  \\
     &+ TAM  & 56.73 \scriptsize$\pm$ 0.71 &56.15 \scriptsize$\pm$ 0.78 & 74.62 \scriptsize$\pm$ 0.97  & 72.25 \scriptsize$\pm$ 1.30 & 82.36 \scriptsize$\pm$ 0.67 & 72.94 \scriptsize$\pm$ 1.43  \\
          \cdashline{2-8}
     & Renode  & 43.47 \scriptsize$\pm$ 2.22 &37.52 \scriptsize$\pm$ 3.10 & 71.40 \scriptsize$\pm$ 1.42  & 67.27 \scriptsize$\pm$ 2.96& 81.89 \scriptsize$\pm$ 0.77 & 73.13 \scriptsize$\pm$ 1.60 \\
     & + TAM & 46.20 \scriptsize$\pm$ 1.17 &39.96 \scriptsize$\pm$ 2.76 & 72.63 \scriptsize$\pm$ 2.03  & 68.28 \scriptsize$\pm$ 3.30 & 80.36 \scriptsize$\pm$ 1.19 & 72.51 \scriptsize$\pm$ 0.68\\
          \cdashline{2-8}
     & GraphENS  & 56.57 \scriptsize$\pm$ 0.98 &55.29 \scriptsize$\pm$ 1.33 & 72.13 \scriptsize$\pm$ 1.04  & 70.72 \scriptsize$\pm$ 1.07 & \textcolor{c2}{\textbf{82.40} \scriptsize$\pm$ 0.39} & 74.26 \scriptsize$\pm$ 1.05\\
     & + TAM & \textcolor{c2}{\textbf{58.01} \scriptsize$\pm$ 0.68} &\textcolor{c2}{\textbf{56.32} \scriptsize$\pm$ 1.03} &  \textcolor{c2}{\textbf{74.14} \scriptsize$\pm$ 1.42}  & \textcolor{c2}{\textbf{72.42} \scriptsize$\pm$ 1.39} & 81.02 \scriptsize$\pm$ 0.99 & 70.78 \scriptsize$\pm$ 1.72\\

     \cmidrule(lr){2-8}
     
     & \textbf{ReVar}   & \color{c1}{\textbf{65.28}} \scriptsize$\pm$ 0.51 & \color{c1}{\textbf{64.91}} \scriptsize$\pm$ 0.51 & \color{c1}{\textbf{79.20}} \scriptsize$\pm$ 0.72 &  \color{c1}{\textbf{78.45}} \scriptsize$\pm$ 0.46 & \textcolor{c1}{\textbf{84.67}  \scriptsize$\pm$ 0.17}& \textcolor{c1}{\textbf{80.25} \scriptsize$\pm$ 0.87 }\\
    
     \cmidrule(lr){2-8}
     
 & \textbf{\pmb{$\Delta$}} & \textcolor{c1}{\textbf{+ 7.27}\scriptsize(12.53\%)} & \multicolumn{1}{c}{\textcolor{c1}{\textbf{+ 8.59}\scriptsize(15.25\%)}} & \multicolumn{1}{c}{\textcolor{c1}{\textbf{+ 5.06}\scriptsize(6.82\%)}} & \multicolumn{1}{c}{\textcolor{c1}{\textbf{+ 6.03}\scriptsize(8.33\%)}} & \multicolumn{1}{c}{\textcolor{c1}{\textbf{+ 2.27}\scriptsize(2.75\%)}} & \multicolumn{1}{c}{\textcolor{c1}{\textbf{+ 5.94}\scriptsize(7.99\%)}}  \\

     \cmidrule(lr){2-8}\morecmidrules\cmidrule(lr){2-8}
     
     &Vanilla  & 38.84 \scriptsize$\pm$ 1.13 &31.25 \scriptsize$\pm$ 1.64 &  64.60 \scriptsize$\pm$ 1.64  & 55.24 \scriptsize$\pm$ 2.80 & 79.04 \scriptsize$\pm$ 1.60 & 70.00 \scriptsize$\pm$ 2.50 \\

     &Re-Weight  & 45.47 \scriptsize$\pm$ 2.35 &40.60 \scriptsize$\pm$ 2.98 &  68.10 \scriptsize$\pm$ 2.85   & 63.76 \scriptsize$\pm$ 3.54 & 80.38 \scriptsize$\pm$ 0.66 & 69.99 \scriptsize$\pm$ 0.76 \\

     &PC Softmax  & 50.78 \scriptsize$\pm$ 1.66 &48.56 \scriptsize$\pm$ 2.08 &  72.88 \scriptsize$\pm$ 0.83   & 71.09 \scriptsize$\pm$ 0.89 & 79.43 \scriptsize$\pm$ 0.94 & 71.33 \scriptsize$\pm$ 0.86\\

     &GraphSMOTE  & 45.68 \scriptsize$\pm$ 0.93 &38.96 \scriptsize$\pm$ 0.97 & 67.43 \scriptsize$\pm$ 1.23   & 61.97 \scriptsize$\pm$ 2.54& 79.38 \scriptsize$\pm$ 1.97 & 69.76 \scriptsize$\pm$ 2.31 \\
     \cmidrule(lr){2-8}
     &BalancedSoftmax  & 54.78 \scriptsize$\pm$ 1.25 &51.83 \scriptsize$\pm$ 2.11 & 72.30 \scriptsize$\pm$ 1.20   & 69.30 \scriptsize$\pm$ 1.79& \textcolor{c2}{\textbf{82.02} \scriptsize$\pm$ 1.19} & \textcolor{c2}{\textbf{72.94} \scriptsize$\pm$ 1.54}  \\
     &+ TAM & 56.30 \scriptsize$\pm$ 1.25 &53.87 \scriptsize$\pm$ 1.14 & 73.50 \scriptsize$\pm$ 1.24   & 71.36 \scriptsize$\pm$ 1.99& 75.54 \scriptsize$\pm$ 2.09 & 66.69 \scriptsize$\pm$ 1.44 \\
     \cdashline{2-8}
     &Renode  & 44.48 \scriptsize$\pm$ 2.06 &37.93 \scriptsize$\pm$ 2.87 & 69.93 \scriptsize$\pm$ 2.10   & 65.27 \scriptsize$\pm$ 2.90 & 76.01 \scriptsize$\pm$ 1.08 & 66.72 \scriptsize$\pm$ 1.42\\
     &+ TAM & 45.12 \scriptsize$\pm$ 1.41 &39.29 \scriptsize$\pm$ 1.79 & 70.66 \scriptsize$\pm$ 2.13   & 66.94 \scriptsize$\pm$ 3.54 & 74.30 \scriptsize$\pm$ 1.13 & 66.13 \scriptsize$\pm$ 1.75 \\
     \cdashline{2-8}
     &GraphENS & 51.45 \scriptsize$\pm$ 1.28 &47.98 \scriptsize$\pm$ 2.08 & 73.15 \scriptsize$\pm$ 1.24   & 71.90 \scriptsize$\pm$ 1.03& 81.23 \scriptsize$\pm$ 0.74 & 71.23 \scriptsize$\pm$ 0.42 \\
     
     &+ TAM & \textcolor{c2}{\textbf{56.15} \scriptsize$\pm$ 1.13} &\textcolor{c2}{\textbf{54.31} \scriptsize$\pm$ 1.68} & \textcolor{c2}{\textbf{73.45} \scriptsize$\pm$ 1.07}   & \textcolor{c2}{\textbf{72.10} \scriptsize$\pm$ 0.36}& 81.07 \scriptsize$\pm$ 1.03 & 71.27 \scriptsize$\pm$ 1.98 \\

     \cmidrule(lr){2-8}
     & \textbf{ReVar} & \textcolor{c1}{\textbf{66.04} \scriptsize$\pm$ 0.66} & \textcolor{c1}{\textbf{65.70} \scriptsize$\pm$ 0.69} & \textcolor{c1}{\textbf{77.85} \scriptsize$\pm$ 0.76 }&  \textcolor{c1}{\textbf{77.08} \scriptsize$\pm$ 0.69} & \textcolor{c1}{\textbf{86.37} \scriptsize$\pm$ 0.02} & \textcolor{c1}{\textbf{82.35} \scriptsize$\pm$ 0.02}\\
     
     \cmidrule(lr){2-8}
     
     & \textbf{\pmb{$\Delta$}}& \multicolumn{1}{c}{\textcolor{c1}{\textbf{+ 9.89}\scriptsize(17.61\%)}} & \multicolumn{1}{c}{\textcolor{c1}{\textbf{+ 11.39}\scriptsize(20.97\%)}} & \multicolumn{1}{c}{\textcolor{c1}{\textbf{+ 4.40}\scriptsize(5.99\%)}} & \multicolumn{1}{c}{\textcolor{c1}{\textbf{+ 4.98}\scriptsize(6.91\%)}} & \multicolumn{1}{c}{\textcolor{c1}{\textbf{+ 4.35}\scriptsize(5.30\%)}} & \multicolumn{1}{c}{\textcolor{c1}{\textbf{+ 9.41}\scriptsize(12.90\%)}}  \\
     
     \cmidrule(lr){2-8}\morecmidrules\cmidrule(lr){2-8}
     
     &Vanilla & 43.18 \scriptsize$\pm$ 0.52 &36.66 \scriptsize$\pm$ 1.25 &  68.68 \scriptsize$\pm$ 1.51  & 64.16 \scriptsize$\pm$ 2.38 & 72.36 \scriptsize$\pm$ 2.39 & 64.32 \scriptsize$\pm$ 2.21 \\

     &Re-Weight & 46.17 \scriptsize$\pm$ 1.32 &40.13 \scriptsize$\pm$ 1.68 &  69.89 \scriptsize$\pm$ 1.60   & 65.71 \scriptsize$\pm$ 2.31& 76.08 \scriptsize$\pm$ 1.14 & 65.76 \scriptsize$\pm$ 1.40  \\

     &PC Softmax & 50.66 \scriptsize$\pm$ 0.99 &47.48 \scriptsize$\pm$ 1.66 &  71.49 \scriptsize$\pm$ 0.94  & 70.23 \scriptsize$\pm$ 0.67& 74.63 \scriptsize$\pm$ 3.01 & 66.44 \scriptsize$\pm$ 4.04 \\

     &GraphSMOTE & 42.73 \scriptsize$\pm$ 2.87 &35.18 \scriptsize$\pm$ 1.75 & 66.63 \scriptsize$\pm$ 0.65   & 61.97 \scriptsize$\pm$ 2.54 & 71.85 \scriptsize$\pm$ 0.98 & 68.92 \scriptsize$\pm$ 0.73\\
     \cmidrule(lr){2-8}
     &BalancedSoftmax & 51.74 \scriptsize$\pm$ 2.32 &49.01 \scriptsize$\pm$ 3.16 & 71.36 \scriptsize$\pm$ 1.37   & 69.66 \scriptsize$\pm$ 1.81& 73.67 \scriptsize$\pm$ 1.11 & 65.23 \scriptsize$\pm$ 2.44 \\
     &+ TAM & 51.93 \scriptsize$\pm$ 2.19 &48.67 \scriptsize$\pm$ 3.25 & 72.28 \scriptsize$\pm$ 1.47   & 71.02 \scriptsize$\pm$ 1.31 & 77.00 \scriptsize$\pm$ 2.93 & 70.85 \scriptsize$\pm$ 2.28\\
          \cdashline{2-8}
     &Renode & 48.65 \scriptsize$\pm$ 1.37 &44.25 \scriptsize$\pm$ 2.20 & 71.37 \scriptsize$\pm$ 1.33   & 67.78 \scriptsize$\pm$ 1.38 & 77.37 \scriptsize$\pm$ 0.74 & 68.42 \scriptsize$\pm$ 1.81\\
     &+ TAM & 48.39 \scriptsize$\pm$ 1.76&43.56 \scriptsize$\pm$ 2.31 & 71.25 \scriptsize$\pm$ 1.07   & 68.69 \scriptsize$\pm$ 0.98 & 74.87 \scriptsize$\pm$ 2.25 & 66.87 \scriptsize$\pm$ 2.52 \\
          \cdashline{2-8}
     &GraphENS & 53.51 \scriptsize$\pm$ 0.78 &51.42 \scriptsize$\pm$ 1.19 & 70.97 \scriptsize$\pm$ 0.78   & 70.00 \scriptsize$\pm$ 1.22& \textcolor{c2}{\textbf{82.57} \scriptsize$\pm$ 0.50 }& 71.95 \scriptsize$\pm$ 0.51 \\ 
     & + TAM & \textcolor{c2}{\textbf{54.69} \scriptsize$\pm$ 1.12} &\textcolor{c2}{\textbf{53.56} \scriptsize$\pm$ 1.86} & \textcolor{c2}{\textbf{73.61} \scriptsize$\pm$ 1.35 }  & \textcolor{c2}{\textbf{72.50} \scriptsize$\pm$ 1.58} & 82.17 \scriptsize$\pm$ 0.93 & \textcolor{c2}{\textbf{72.46} \scriptsize$\pm$ 1.00} \\
     
     \cmidrule(lr){2-8}
    & \textbf{ReVar}  &\textcolor{c1}{\textbf{60.48} \scriptsize$\pm$ 0.88} & \textcolor{c1}{\textbf{57.99} \scriptsize$\pm$ 1.54 }& \textcolor{c1}{\textbf{77.72} \scriptsize$\pm$ 1.06} & \textcolor{c1}{\textbf{76.01} \scriptsize$\pm$ 1.20 }&\textcolor{c1}{\textbf{83.50} \scriptsize$\pm$ 0.02} & \textcolor{c1}{\textbf{76.48} \scriptsize$\pm$ 0.05 }\\
    
     \cmidrule(lr){2-8}
     
     & \pmb{$\Delta$} & \multicolumn{1}{c}{\textcolor{c1}{\textbf{ + 5.79}\scriptsize(10.59\%)}} & \multicolumn{1}{c}{\textcolor{c1}{\textbf{ + 4.53}\scriptsize(8.46\%)}} & \multicolumn{1}{c}{\textcolor{c1}{\textbf{ + 4.11}\scriptsize(5.58\%)}} & \multicolumn{1}{c}{\textcolor{c1}{\textbf{ + 3.51}\scriptsize(4.84\%)}} & \multicolumn{1}{c}{\textcolor{c1}{\textbf{+ 0.93}\scriptsize(1.13\%)}} & \multicolumn{1}{c}{\textcolor{c1}{\textbf{+ 4.02}\scriptsize(5.55\%)}}\\

     \bottomrule[1.5 pt]
     \end{tabular}}
     \label{ratio10citation}
\end{table}

\begin{table*}[ht]
\centering
    \setlength{\abovecaptionskip}{0pt}%
    \setlength{\belowcaptionskip}{10pt}%
\caption{
Comparison of our method ReVar and other baselines on \textit{CS-Random}. Highlighted are the top \textbf{\textcolor{c1}{first}} and \textbf{\textcolor{c2}{second}}. \pmb{$\Delta$} is the margin by which our method leads state-of-the-art method.
}

 \scalebox{0.76}{\begin{tabular}{lllllllll}
     \toprule[1.5pt]
     \multirow{43}{*}{} & \multicolumn{1}{c}{\bf{Dataset}(\textit{CS-Random})} & \multicolumn{2}{c}{GCN}  & \multicolumn{2}{c}{GAT}   &\multicolumn{2}{c}{SAGE}  &\\

     \cmidrule(lr){2-8}
     
     & \bf{Imbalance Ratio$\bf ({\rho=41.00})$} & \multicolumn{1}{c}{bAcc.} & \multicolumn{1}{c}{F1} & \multicolumn{1}{c}{bAcc.} & \multicolumn{1}{c}{F1}& \multicolumn{1}{c}{bAcc.} & \multicolumn{1}{c}{F1}\\

     \cmidrule(lr){2-8}
     
     & Vanilla & 84.85 \scriptsize$\pm$ 0.16 & 87.12 \scriptsize$\pm$ 0.14 &82.47 \scriptsize$\pm$0.36 & 84.21 \scriptsize$\pm$ 0.31 & 83.76 \scriptsize$\pm$ 0.27 & 86.22 \scriptsize$\pm$ 0.19\\

     & Re-Weight & 87.42 \scriptsize$\pm$ 0.17 & 88.70 \scriptsize$\pm$ 0.10 & 83.55 \scriptsize$\pm$ 0.39 & 84.73 \scriptsize$\pm$ 0.32 & 85.76 \scriptsize$\pm$ 0.24 & 87.32 \scriptsize$\pm$ 0.16   \\
     
     & PC Softmax & \textcolor{c2}{\textbf{88.36}  \scriptsize$\pm$ 0.12 }& 88.94  \scriptsize$\pm$ 0.04 & 85.22 \scriptsize$\pm$ 0.31 & 85.54 \scriptsize$\pm$ 0.33 & 87.18 \scriptsize$\pm$ 0.14 & 88.00 \scriptsize$\pm$ 0.19  \\

     & GraphSMOTE & 85.76 \scriptsize$\pm$ 1.73 & 87.31 \scriptsize$\pm$ 1.32 & 84.65 \scriptsize$\pm$ 1.32 & 85.63 \scriptsize$\pm$ 1.01 & 85.76 \scriptsize$\pm$ 1.98 & 87.34 \scriptsize$\pm$ 0.98 \\
      \cmidrule(lr){2-8}
     & BalancedSoftmax & 87.72 \scriptsize$\pm$ 0.07 & 88.67 \scriptsize$\pm$ 0.07  & 84.38  \scriptsize$\pm$ 0.20 & 84.53 \scriptsize$\pm$ 0.41 & 86.78 \scriptsize$\pm$ 0.10 & 88.05 \scriptsize$\pm$ 0.09 \\
     & + TAM & 88.22 \scriptsize$\pm$ 0.11 & \textcolor{c2}{\textbf{89.22} \scriptsize$\pm$ 0.08 }& 85.48 \scriptsize$\pm$ 0.24 & 85.77 \scriptsize$\pm$ 0.50 & \textcolor{c2}{\textbf{87.83} \scriptsize$\pm$ 0.13 }& \textcolor{c2}{\textbf{88.77} \scriptsize$\pm$ 0.07} \\
     \cdashline{2-8}
     & Renode & 87.53 \scriptsize$\pm$ 0.11 & 88.91 \scriptsize$\pm$ 0.06 & 85.98 \scriptsize$\pm$ 0.19 & 86.97 \scriptsize$\pm$ 0.09 & 86.13 \scriptsize$\pm$ 0.10 & 87.89 \scriptsize$\pm$ 0.09 \\
     &+ TAM & 87.55 \scriptsize$\pm$ 0.06 & 89.03 \scriptsize$\pm$ 0.05 & \textcolor{c2}{\textbf{86.61} \scriptsize$\pm$ 0.30} & \textcolor{c2}{\textbf{87.42} \scriptsize$\pm$ 0.24}  & 85.21 \scriptsize$\pm$ 0.33 & 87.01 \scriptsize$\pm$ 0.31 \\
     \cdashline{2-8}
     & GraphENS & 85.97 \scriptsize$\pm$ 0.29 & 86.68 \scriptsize$\pm$ 0.20 & 85.86 \scriptsize$\pm$ 0.19 & 86.51 \scriptsize$\pm$ 0.32 & 85.39 \scriptsize$\pm$ 0.26 & 86.41 \scriptsize$\pm$ 0.24 \\
     & + TAM & 86.34 \scriptsize$\pm$ 0.12 & 87.36 \scriptsize$\pm$ 0.08 & 86.29 \scriptsize$\pm$ 0.20 & 87.28 \scriptsize$\pm$ 0.13 & 85.99 \scriptsize$\pm$ 0.13 & 87.25 \scriptsize$\pm$ 0.07 \\
     
     \cmidrule(lr){2-8}
     
     & \textbf{ReVar} &\textcolor{c1}{\textbf{88.44} \scriptsize$\pm$ 0.16}  & \textcolor{c1}{\textbf{89.54} \scriptsize$\pm$ 0.11}   & \textcolor{c1}{\textbf{87.33} \scriptsize$\pm$ 0.04}  & \textcolor{c1}{\textbf{88.33} \scriptsize$\pm$ 0.06 }&\textcolor{c1}{ \textbf{90.11} \scriptsize$\pm$ 0.11}  &  \textcolor{c1}{\textbf{91.18} \scriptsize$\pm$ 0.11 } \\
     \cmidrule(lr){2-8}

     & \pmb{$\Delta$} & \multicolumn{1}{c}{\textcolor{c1}{\textbf{+ 0.08}\scriptsize(0.09\%)}} & \multicolumn{1}{c}{\textcolor{c1}{\textbf{+ 0.32}\scriptsize(0.36\%)}} & \multicolumn{1}{c}{\textcolor{c1}{\textbf{+ 0.72}\scriptsize(0.83\%)}} & \multicolumn{1}{c}{\textcolor{c1}{\textbf{+ 0.91}\scriptsize(1.04\%)}}  & \multicolumn{1}{c}{\textcolor{c1}{\textbf{+ 2.28}\scriptsize(2.60\%)}} & \multicolumn{1}{c}{\textcolor{c1}{\textbf{+ 2.41}\scriptsize(2.71\%)}} \\
      
     \bottomrule[1.5pt]
     \label{csrandom}
\end{tabular}}
\end{table*}
\paragraph{Experimental Results On Naturally Imbalanced Datasets.} Our model was evaluated on two naturally imbalanced datasets, Coauthor-CS (${\rho}\approx 41.0$) and Amazon-Computers (${\rho}\approx 17.7$), both of which exhibit imbalanced unlabeled data (as indicated in Table \ref{full_graph_label_distribution}). The construction of the training, validation, and testing sets is detailed in Appendix \ref{Details of the experimental setup}. Experimental results are presented in Table \ref{computersrandom} and Table \ref{csrandom}. Notably, ReVar outperformed other methods consistently on these two datasets, a significant finding given the challenging nature of the imbalanced data in these datasets.



\subsection{Further Analysis of ReVar}

\paragraph{Analysis of Each Component in the Loss Function.} As shown in Figure \ref{ablation analysisCiteSeer-Semi}, each component of our loss function can bring performance improvements. In particular, in all three settings in the figure, our full loss achieves the best F1 scores. In all cases, VR loss improves the performance of the model, proving our hypothesis that can approximate the variance of the model to improve the performance of the model. More analysis is presented in Appendix \ref{more analysis}.


\begin{figure*}[!ht]
\centering
\subfigure[\textit{Computers-Semi}]{
\begin{minipage}[t]{0.24\linewidth}
\centering
\includegraphics[scale=0.097]{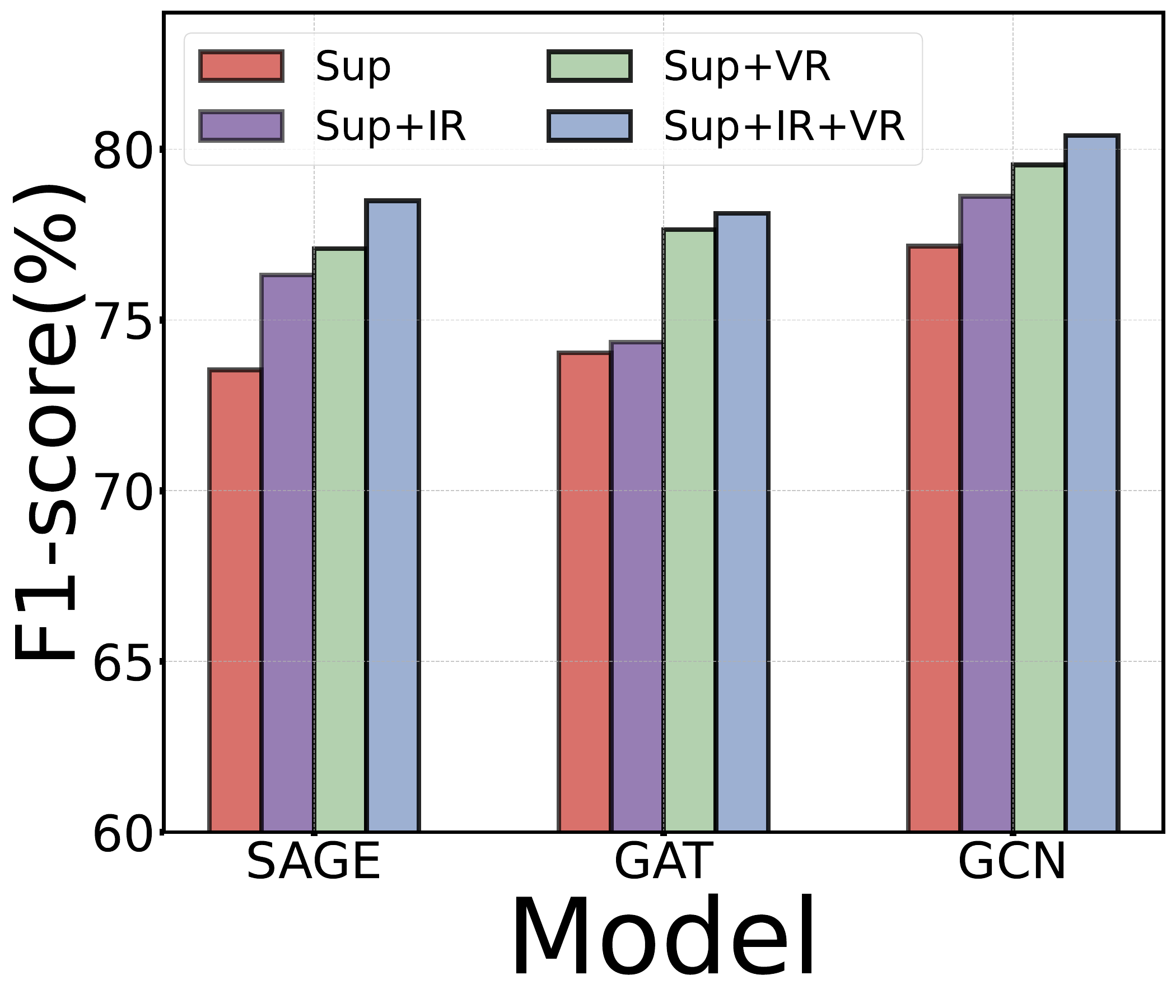}
\label{ablation analysisCiteSeer-Semi}
\end{minipage}%
}%
\subfigure[\textit{Pubmed-$\lambda_{1}$}]{
\begin{minipage}[t]{0.24\linewidth}
\centering
\includegraphics[scale=0.2]{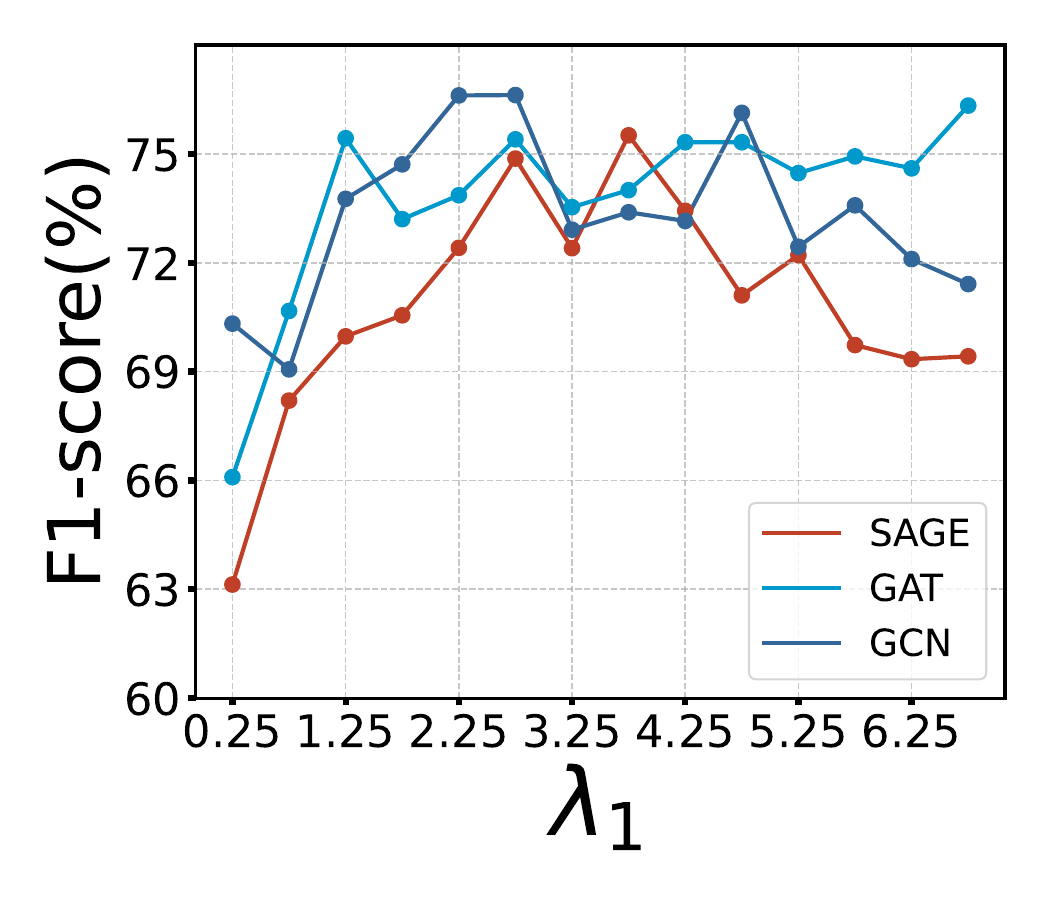}
\label{ablation analysis sensitive1}
\end{minipage}%
}%
\subfigure[\textit{PubMed-$\lambda_{2}$}]{
\begin{minipage}[t]{0.24\linewidth}
\centering
\includegraphics[scale=0.2]{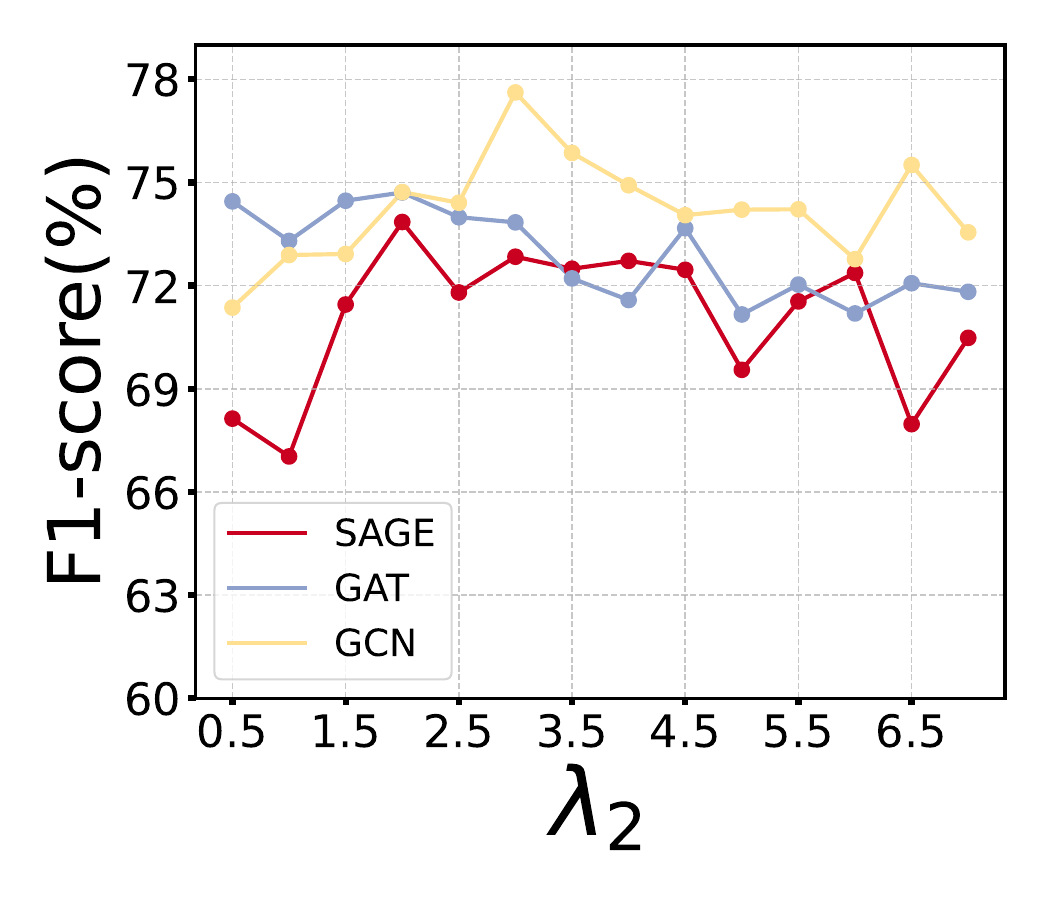}
\label{ablation analysis sensitive2}
\end{minipage}%
}%
\subfigure[\textit{CiteSeer-Loss}]{
\begin{minipage}[t]{0.24\linewidth}
\centering
\includegraphics[scale=0.204]{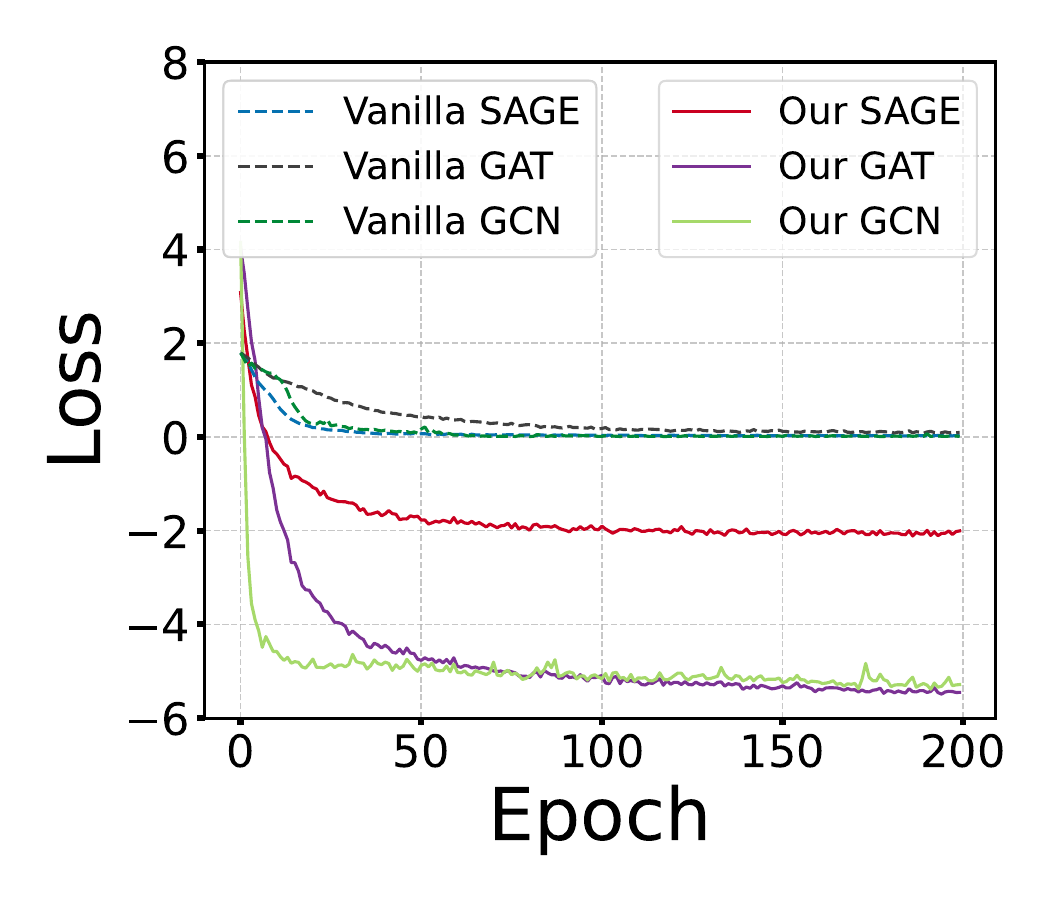}
\label{citeseer lossCiteSeer-Loss}
\end{minipage}%
}%

\centering
\caption{Analysis of ReVar.}
\label{ablation analysis loss}
\end{figure*}

\paragraph{Sensitive to Hyperparameters $\lambda_{1}$ and 
 $\lambda_{2} $.} \label{paragraph analysis ablation.}
 The two intensity terms, $\lambda_{1}$ and $\lambda_{2} $, corresponding to the $\mathcal{L}_{\mathrm{VR}}$ and $\mathcal{L}_{\mathrm{IR}}$, respectively, is depicted in Figure \ref{ablation analysis sensitive1}, \autoref{ablation analysis sensitive2}. We explore the impact of perturbations on model performance when one of the two intensity terms is held constant while the other is varied. We believe that a smaller value of $\lambda$ is inadequate to reflect the regularization of ReVar. Instead, suitable values of $\lambda$ can significantly enhance the performance of ReVar. More analysis is presented in Appendix \ref{more analysis}.

 \paragraph{Analysis of Test Loss and F1 Score Curves.} \label{paragraph analysis loss and f1}

 The present study includes a comparison of ReVar and Vanilla models(only use cross-entropy) in terms of their loss and F1 score. As shown in Figure \ref{citeseer lossCiteSeer-Loss}, various models were evaluated. The ReVar model, which incorporates additional $\mathcal{L}_{\mathrm{VR}}$ and $\mathcal{L}_{\mathrm{IR}}$ components, demonstrated a faster convergence of loss, as evidenced by the larger slope in Figure \ref{citeseer lossCiteSeer-Loss}. More analysis is presented in Appendix \ref{more analysis}.


\section{Conclusion and Future Work}
This paper presents a novel approach to address imbalanced node classification. Our method integrates imbalanced node classification and the Bias-Variance Decomposition framework, establishing a theoretical foundation that closely links data imbalance to model variance. Additionally, we employ graph augmentation techniques to estimate model variance and design a regularization term to mitigate the impact of class imbalance. We conduct exhaustive testing to demonstrate superior performance compared to state-of-the-art methods in various imbalanced scenarios. Future work includes extending ReVar and its theories to the fields of computer vision and natural language processing. Moreover, we anticipate the emergence of more effective theories and algorithms to be developed and flourish.

\subsubsection*{Acknowledgements}This work is supported by National Natural Science Foundation of China No.U2241212, No.62276066.

\subsubsection*{Reproducibility Statements} The model implementation and data is released at \href{https://github.com/yanliang3612/ReVar}{https://github.com/yanliang3612/ReVar}.

\bibliography{references}{}
\bibliographystyle{plain}

\newpage
\appendix

\section{Other related work}
\label{Other Related Work}

\paragraph{Imbalanced Learning in Traditional ML.}

Most real-world data is naturally imbalanced, presenting a significant challenge in training fair models that are not biased towards majority classes. To address this problem, various approaches have been commonly utilized. Ensemble learning \cite{freund1997decision, liu2008exploratory, zhou2020bbn, wang2020long, liu2020mesa, cai2021ace} combines the outputs of multiple weak classifiers. Data re-sampling methods \cite{chawla2002smote, han2005borderline, smith2014instance, saez2015smote, kang2019decoupling, wang2021rsg} aim to adjust the label distribution in the training set by synthesizing or duplicating samples from the minority class. Another approach tackles the imbalance issue by modifying the loss function, assigning larger weights to minority classes or adjusting the margins between different classes \cite{zhou2005training, tang2008svms, cao2019learning, tang2020long, xu2020class, ren2020balanced, wang2021adaptive}. Post-hoc correction methods compensate for the imbalanced classes during the inference step, after completing the model training \cite{kang2019decoupling, tian2020posterior, menon2020long, hong2021disentangling}. Although these techniques have been extensively applied to i.i.d. data, extending them to graph-structured data poses non-trivial challenges.

\paragraph{\textbf{Graph Contrastive Learning.}}

Contrastive methods, which have proven effective for unsupervised learning in vision, have also been adapted for graph data. One notable approach is DGI \cite{velivckovic2018deep}, which presents a framework for unsupervised node-level representation learning that maximizes global mutual information. Other approaches, such as GRACE \cite{zhu2020grace}, GCA \cite{zhu2021gca}, and GraphCL \cite{you2020graph}, utilize augmented graphs to optimize the similarity between positive node pairs and minimize negative pairs. CCA-SSG \citep{zhang2021ccassg} introduces an efficient loss function based on canonical correlation analysis, eliminating the need for negative samples. Incorporating community information, gCooL \cite{li2022gcool} enhances node representations and downstream task performance. GGD \cite{zheng2022ggd} simplifies the mutual information loss function by directly discriminating between two sets of node samples, resulting in faster computation and lower memory usage. These contrastive methods exhibit potential for improving unsupervised learning on graph data. However, it is important to note that our model operates within the context of semi-supervised learning, which significantly differs from the mechanisms employed by these models.
\newpage
\section{Proofs}

\subsection{Proofs of Theorem \ref{theorem1}}
\label{proof of theorem1}

\newtheorem{mytheorem}{Theorem}
\begin{mytheorem}
Under the condition that $\sum_{i=1}^c n_i$ is a constant, the variance $\sum_{i=1}^c \mathbb{E}_x \left[\frac{1}{n_i} h^T(x) \Lambda^i h(x) \right]$ reach its minimum when all $n_i$ equal.
\end{mytheorem}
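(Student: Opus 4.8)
The plan is to reduce the statement to a one–line convexity (equivalently AM–HM) argument. Write $a_i := \mathbb{E}_x[h^T(x)\Lambda^i h(x)]$; by the standing assumption $a_i \equiv a$ is independent of $i$, so the objective is $a \sum_{i=1}^c 1/n_i$, and since $a \ge 0$ (it is the expectation of a quadratic form in the positive semidefinite $\Lambda^i$) minimizing it over the feasible set $\{n_i > 0,\ \sum_{i=1}^c n_i = N\}$ is equivalent to minimizing $\phi(n_1,\dots,n_c) = \sum_{i=1}^c 1/n_i$ there. I would state this equal-$a_i$ hypothesis explicitly at the outset, since it is exactly what collapses an otherwise weighted problem $\sum_i a_i/n_i$ (whose optimum would be $n_i \propto \sqrt{a_i}$) into the fully symmetric one.

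Next I would relax the $n_i$ to positive reals and invoke convexity: $t \mapsto 1/t$ is strictly convex on $(0,\infty)$, so Jensen's inequality applied to the uniform average of $n_1,\dots,n_c$ gives
\[
\frac{1}{c}\sum_{i=1}^c \frac{1}{n_i} \;\ge\; \frac{1}{\tfrac{1}{c}\sum_{i=1}^c n_i} \;=\; \frac{c}{N},
\]
that is $\phi(n) \ge c^2/N$, with equality if and only if all $n_i$ are equal, i.e. $n_i = N/c$ for every $i$. Equivalently one may cite the AM–HM inequality directly, or run the Lagrange-multiplier stationarity condition $-1/n_i^2 = \lambda$ (which forces all $n_i$ equal) and then confirm global optimality using convexity of $\phi$ on the convex feasible set. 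Multiplying back through by $a$ yields $\sum_{i=1}^c \mathbb{E}_x\!\left[\tfrac{1}{n_i} h^T(x)\Lambda^i h(x)\right] \ge a\,c^2/N$, attained exactly at the balanced configuration.

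Finally I would note that the discrete version follows with no extra work: when $c \mid N$ the balanced point $n_i = N/c$ is feasible and already meets the relaxed lower bound, so it is the minimizer; when $c \nmid N$ the same convexity argument shows the minimum is attained at the most nearly balanced integer allocation. I do not expect any serious obstacle here — the only point worth handling with care is making the $a_i$-equal assumption visible, as the whole reduction hinges on it; everything after that is elementary once the objective is in the form $a\sum_i 1/n_i$.
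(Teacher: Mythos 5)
Your proof is correct and matches the paper's argument in substance: both reduce to minimizing $\sum_i 1/n_i$ under a fixed sum and then apply the AM--HM inequality (your Jensen/convexity phrasing is the same inequality), with the Lagrange-multiplier stationarity check as a secondary confirmation, which the paper also includes. The only addition you make beyond the paper is to flag explicitly that the equal-$a_i$ assumption is what collapses the weighted problem, and to note the integer-allocation caveat when $c \nmid N$.
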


\begin{proof}
The expression $\sum_{i=1}^c \mathbb{E}_x \left[\frac{1}{n_i} h^T(x) \Lambda^i h(x) \right]$ can be equivalently expressed as $\sum_{i=1}^c \frac{1}{n_i} \mathbb{E}_x \left[ h^T(x) \Lambda^i h(x) \right]$. As previously assumed, the $\mathbb{E}_x[h^T(x)\Lambda^ih(x)]$ is the same for different $i$, which implies that our goal is to demonstrate that $\sum_{i=1}^c \frac{1}{n_i}$ is minimized when all $n_i$ are equal.

Let $m$ be $\sum_{i=1}^c n_i$. We wish to find the extremum of the sum of their reciprocals, which is given by
\begin{equation}
\begin{aligned}
S = \frac{1}{n_1} + \frac{1}{n_2} + \dots + \frac{1}{n_c}.
\end{aligned}
\end{equation}
Using the inequality of arithmetic and harmonic means, we have
\begin{equation}
\begin{aligned}
\frac{c}{\frac{1}{n_1}+\frac{1}{n_2}+\cdots+\frac{1}{n_c}} \leq \frac{n_1+n_2+\cdots+n_c}{c},
\end{aligned}
\end{equation}
with equality if and only if $n_1=n_2=\cdots=n_c$. Rearranging, we get
\begin{equation}
\begin{aligned}
\frac{c^2}{n_1+n_2+\cdots+n_c} \leq S,
\end{aligned}
\end{equation}
with equality if and only if $n_1=n_2=\cdots=n_c$. Since $m=n_1+n_2+\cdots+n_c$, we have
\begin{equation}
\begin{aligned}
\frac{c^2}{m} \leq S,
\end{aligned}
\end{equation}
with equality if and only if $n_1=n_2=\cdots=n_c=\frac{m}{c}$. Therefore, when all the $c$ numbers are equal, the sum of their reciprocals is minimized and given by $S=\frac{c^2}{m}$.

We can also use the method of Lagrange multipliers. Let $f(n_1, n_2, \ldots, a_c) = \frac{1}{n_1} + \frac{1}{n_2} + \ldots + \frac{1}{n_c}$ be the function that we want to extremize. Then, the Lagrangian is:

\begin{equation}
\begin{aligned}
\mathcal{L}(n_1, n_2, \ldots, n_c, \lambda) = f(n_1, n_2, \ldots, n_c) + \lambda(n_1 + n_2 + \ldots + n_c - m).
\end{aligned}
\end{equation}

Taking the partial derivatives of $\mathcal{L}$ with respect to $a_i$ and $\lambda$, we get:

\begin{equation}
\begin{aligned}
\frac{\partial \mathcal{L}}{\partial n_i} &= -\frac{1}{n_i^2} + \lambda \\
\frac{\partial \mathcal{L}}{\partial \lambda} &= n_1 + n_2 + \ldots + n_c - m.
\end{aligned}
\end{equation}

Setting these partial derivatives to zero, we get:

\begin{equation}
\begin{aligned}
n_1 = n_2 = \ldots = n_c &= \frac{m}{c}, 
\lambda = \frac{c^2}{m^2}.
\end{aligned}
\end{equation}

Thus, when the $c$ numbers are equal, their reciprocal sum is minimized and is equal to $\frac{c^2}{m}$. Moreover, since $S$ is a continuously differentiable function of $n_1, n_2, \ldots, n_c$, this extremum is a minimum.

Therefore, we have shown that the reciprocal sum of $c$ numbers with a sum of $m$ is minimized and equal to $\frac{c^2}{m}$ when the $c$ numbers are equal.

\end{proof}

\subsection{Proofs of Lemma \ref{lemma1} and More Details for Estimating the Expectation with Unlabeled Nodes (Section \ref{variance regularization})}
\subsubsection{Proofs of Lemma \ref{lemma1}}
\label{Proofs of Theorem 2}
\newtheorem{mytheorem2}{Lemma}

\begin{mytheorem2}
    Under the above assumption for $h^{i} \sim N(\mu^{i}, \Lambda^{i})$,  $C^i\sim N\left(\mu^i, \frac{1}{n_{i}} \Lambda^{i}\right)$, minimizing the $\sum_{i=1}^c \mathbb{E}_x \left[\mathrm{Var}(x) \right]$ is equivalent to minimizing Equation \ref{eq-estimate-variance2}: 
    \begin{equation}
        \frac{1}{N}\sum _{x\in G}\sum _{i=1}^{c}\left( h(x)^{T}\frac{1}{\sqrt{2n_{i}}}  \left( h_{1}^{i} -h_{2}^{i}\right)\right)^{2}=\frac{1}{N}\sum_{k=1}^{n_j}\sum _{j=1}^c\sum _{i=1}^{c}\left( (\mu_{k}^j + \epsilon_{k}^j)^{T}\frac{1}{\sqrt{2n_{i}}}  \left( \epsilon_1^{i} -\epsilon_2^{i}\right)\right)^{2}.
        \label{eq-estimate-variance2}
    \end{equation}
\end{mytheorem2}

\begin{proof}
Let $\epsilon^{i}$ denote $h^i-\mu^i$, which follows the distribution $\epsilon^{i} \sim N(0, \Lambda^{i})$. Similarly, we denote $e^i=C^i-\mu^i$ and it follows that $e^i\sim N\left(0, \frac{1}{n_{i}} \Lambda^{i}\right)$.

We know
\begin{equation}
\begin{aligned}
     \text{Var}(x) &=\sum_{i=1}^c\mathbb{E}_{D}\left[\left(h^T(x)C^i -\mathbb{E}_{D}\left[h^T(x)C^i\right]\right)^{2}\right] \\
     &= \sum_{i=1}^c\mathbb{E}_{e^i}\left[\left(h^T(x)(\mu^i+e^i) -h^T(x)\mu^i\right)^{2}\right]\\
     &= \sum_{i=1}^c\mathbb{E}_{e^i}\left[\left(h^T(x)e^i\right)^{2}\right] \
    =\sum_{i=1}^c \frac{1}{n_i} h^T(x) \Lambda^i h(x),
\end{aligned}
\end{equation}
so we get
\begin{equation}
\begin{aligned}
\sum_{i=1}^c \mathbb{E}_x \left[\mathrm{Var}(x) \right] = \sum_{i=1}^c \mathbb{E}_x \left[\frac{1}{n_i} h^T(x) \Lambda^i h(x) \right].
\label{final}
\end{aligned}
\end{equation}

Motivated by stochastic gradient descent, we opt to sample an $\displaystyle e^i$ on each occasion and compute, as opposed to directly calculating the expectation $\displaystyle \mathbb{E}_{D \subset G}$.

At present, we remain uncertain about how to sample the noise term $\displaystyle e^i$ associated with the class center $\displaystyle C_i$. We put forth a proposition to estimate the class center noise $\displaystyle e$ utilizing the feature noise $\displaystyle \epsilon$. Under the supposition that when $\displaystyle v \in C_k$, the $\displaystyle j_{th}$ element of the feature $\displaystyle f(v)$ adheres to a Gaussian distribution:

\begin{equation}
\epsilon ^{i} \sim N\left( 0,\Lambda ^{i}\right) ,\ e^{i} \ \sim \ N\left( 0,\frac{1}{n_{i}} \Lambda ^{i}\right).
\label{assumption}
\end{equation}
Consequently, multiplying the noise term $\displaystyle \epsilon^i$ in Equation \ref{assumption} by $\displaystyle \frac{1}{\sqrt{n_i}}$ yields a random variable that exhibits an identical distribution to $\displaystyle e^i$:
\begin{equation}
\ \frac{1}{\sqrt{n_{i}}} \epsilon^{i} \sim N\left( 0,\frac{1}{n_{i}} \Lambda ^{i}\right)
\end{equation}

Moreover, how might we compute $\displaystyle \varepsilon_{i}$? In practical scenarios, we merely possess the feature $\displaystyle h$. However, we are able to calculate the disparity between two features, $\displaystyle h_{1}^{i}$ and $\displaystyle h_{2}^{i}$, originating from the same class $\displaystyle i$:

\begin{equation}
\frac{1}{\sqrt{2n_{i}}} \ \left( h_{1}^{i} -h_{2}^{i}\right) =\frac{1}{\sqrt{2n_{i}}}( \epsilon _{1} -\epsilon _{2}) \sim N\left( 0,\frac{1}{n_{i}} \Lambda ^{i}\right).
\label{difference}
\end{equation}
Incorporating this into Equation \ref{final}, the loss is expressed as:

\begin{equation}
\frac{1}{N} \sum _{x\in G}\sum _{i=1}^{c}\left( h( x)^{T}\frac{1}{\sqrt{2n_{i}}} \ \left( h_{1}^{i} -h_{2}^{i}\right)\right)^{2}.
\end{equation}

\end{proof}

\subsubsection{More Details for Estimating the Expectation with Unlabeled Nodes (Section \ref{variance regularization})}

Lemma \ref{estimate-expectation} suggests that $\sum_{i=1}^c \mathbb{E}_x \left[\mathrm{Var}(x) \right]$ can be estimated by sampling labeled node pairs $(h_1^i, h_2^i)$ from the same class $i$. However, due to the scarcity of data in the minority class, this can often be challenging. In Section \ref{variance regularization}, we address this issue by utilizing random data augmentation and defining nodes from different views as members of the same class, enabling us to sample node pairs for Equation \ref{eq-estimate-variance} and perform embedding subtraction.

However, as we mentioned, the pseudo node pairs lack information regarding their class membership, making it impossible to assign appropriate $\frac{1}{\sqrt{2n_i}}$ values for different $i$ in Equation \ref{eq-estimate-variance}. These coefficients play a crucial role in compensating for the minority class, making it imperative to reintroduce information about the class number when constructing variance regularization. Therefore, in the second step, we replace $h$ in Equation \ref{eq-estimate-variance} with the class center $C^i$, as demonstrated in Equation \ref{new-reg-loss}. As $C^i = \mu^i + e^i$ and the variance of $e^i$ is proportional to $\frac{1}{n_i}$, using Equation \ref{new-reg-loss} to estimate Lemma \ref{estimate-expectation} can provide similar compensatory effects for the minority class. We present the proof for the aforementioned propositions below.

\begin{proof}
Firstly, the $\sum_{i=1}^c \mathbb{E}_x \left[\mathrm{Var}(x) \right]$ can be decomposed as follows:
\begin{equation}
\begin{aligned}
&\sum_{i=1}^c \mathbb{E}_x \left[\mathrm{Var}(x) \right] =\frac{1}{N} \sum _{x\in G}\sum _{i=1}^{c}\left( h( x)^{T}\frac{1}{\sqrt{2n_{i}}} \ \left( h_{1}^{i} -h_{2}^{i}\right)\right)^{2}\\
 & =\frac{1}{N} \sum _{k=1}^{n_{j}}\sum _{j=1}^{c}\sum _{i=1}^{c}\left(\left( \mu _{k}^{j} +\epsilon _{k}^{j}\right)^{T}\frac{1}{\sqrt{2n_{i}}} \ \left( h_{1}^{i} -h_{2}^{i}\right)\right)^{2}\\
 & =\frac{1}{N} \sum _{k=1}^{n_{j}}\sum _{j=1}^{c}\sum _{i=1}^{c}\left( (\mu {_{k}^{j}})^{T}\frac{1}{\sqrt{2n_{i}}} \ \left( \epsilon _{1}^{i} -\epsilon _{2}^{i}\right) +(\epsilon {_{k}^{j}})^{T}\frac{1}{\sqrt{2n_{i}}} \ \left( \epsilon _{1}^{i} -\epsilon _{2}^{i}\right)\right)^{2}\\
 & =\frac{1}{2N} \sum _{k=1}^{n_{j}}\sum _{j=1}^{c}\sum _{i=1}^{c}\left( (\mu {_{k}^{j}})^{T}\frac{1}{\sqrt{n_{i}}} \ \left( \epsilon _{1}^{i} -\epsilon _{2}^{i}\right) +(\epsilon {_{k}^{j}})^{T}\frac{1}{\sqrt{n_{i}}} \ \left( \epsilon _{1}^{i} -\epsilon _{2}^{i}\right)\right)^{2}\\
 & =\underbrace{\frac{1}{2N} \sum _{k=1}^{n_{j}}\sum _{j=1}^{c}\sum _{i=1}^{c}\left( \frac{1}{n_{i}}\left[(\mu {_{k}^{j}})^{T} \ \left( \epsilon _{1}^{i} -\epsilon _{2}^{i}\right)\right]^{2}\right)}_{\mathcal{T}_1} +\underbrace{\frac{1}{2N} \sum _{k=1}^{n_{j}}\sum _{j=1}^{c}\sum _{i=1}^{c}\left(\frac{1}{n_{i}}\left[(\epsilon {_{k}^{j}})^{T} \ \left( \epsilon _{1}^{i} -\epsilon _{2}^{i}\right)\right]^{2}\right)}_{\mathcal{T}_2}\\
 & +\underbrace{\frac{1}{N} \sum _{k=1}^{n_{j}}\sum _{j=1}^{c}\sum _{i=1}^{c}\left(\frac{1}{n_{i}}\left[(\mu {_{k}^{j}})^{T} \ \left( \epsilon _{1}^{i} -\epsilon _{2}^{i}\right)(\epsilon {_{k}^{j}})^{T} \ \left( \epsilon _{1}^{i} -\epsilon _{2}^{i}\right)\right]\right)}_{\mathcal{T}_3}
 \label{varanice111}
\end{aligned}
\end{equation}
The above Equation can be decomposed into three parts, namely $\mathcal{T}_1$, $\mathcal{T}_2$, and $\mathcal{T}_3$. Notably, each of these parts is associated with weight $\frac{1}{n_{i}}$. This observation supports Theorem \ref{variance and imbalance theorem} and Lemma \ref{lemma1}, which suggest that the variance of a model is highly dependent on the distribution of the dataset's sample size, and that the extent of sample imbalance can significantly increase the model's variance.

Note that $\epsilon^{i} \sim N(0, \Lambda^{i})$, $e^i\sim N\left(0, \frac{1}{n_{i}} \Lambda^{i}\right)$, and so $\ \frac{1}{\sqrt{n_{i}}} \epsilon^{i} \sim N\left( 0,\frac{1}{n_{i}} \Lambda ^{i}\right)$. So, if we use the class center $C^i$ to replace $h$ in Equation \ref{eq-estimate-variance}, then we can get the following expression,
\newpage
\begin{equation}
\begin{aligned}
    \label{new-reg-loss2}
    &\frac{1}{N}\sum _{x\in G}\sum _{i=1}^{c}\left( (C^{i})^{T} h(x) - (C^{i'})^{T}h'(x)\right)^{2}\\ 
    =&\frac{1}{N}\sum_{k=1}^{n_j}\sum_{j=1}^c\sum_{i=1}^{c}\left((\mu^{i})^{T}(\epsilon_{k}^j-\epsilon_{k}^{j'})+(e^{i})^{T}\epsilon_{k}^{j}-(e^{i'})^{T}\epsilon_{k}^{j'}\right)^{2}\\
    =&\frac{1}{N}\sum_{k=1}^{n_j}\sum_{j=1}^c\sum_{i=1}^{c}\left((\mu^{i})^{T}(\epsilon_{k}^j-\epsilon_{k}^{j'})+(\ \frac{1}{\sqrt{n_{i}}} \epsilon^{i})^{T}\epsilon_{k}^{j}-(\ \frac{1}{\sqrt{n_{i}}} \epsilon^{i})^{T}\epsilon_{k}^{j'}\right)^{2}\\
    =&\frac{1}{N}\sum_{k=1}^{n_i}\sum_{i=1}^c\sum_{j=1}^{c}\left((\mu^{j})^{T}(\epsilon_{k}^i-\epsilon_{k}^{i'})+(\ \frac{1}{\sqrt{n_{j}}} \epsilon^{j})^{T}\epsilon_{k}^{i}-(\ \frac{1}{\sqrt{n_{j}}} \epsilon^{j})^{T}\epsilon_{k}^{i'}\right)^{2}\\
    =&\frac{1}{N}\sum_{k=1}^{n_i}\sum_{i=1}^c\sum_{j=1}^{c}\left((\mu^{j})^{T}(\epsilon_{k}^i-\epsilon_{k}^{i'})+(\ \frac{1}{\sqrt{n_{j}}} \epsilon^{j})^{T}(\epsilon_{k}^{i}-\epsilon_{k}^{i'})\right)^{2}\\
    =&\underbrace{\frac{1}{N}\sum_{k=1}^{n_i}\sum_{i=1}^c\sum_{j=1}^{c}\left(\left[(\mu^{j})^{T}(\epsilon_{k}^i-\epsilon_{k}^{i'})\right]^{2}\right)}_{\mathcal{L}_1}+\underbrace{\frac{1}{N}\sum_{k=1}^{n_i}\sum_{i=1}^c\sum_{j=1}^{c}\left(\frac{1}{n_{j}} \left[(\  \epsilon^{j})^{T}(\epsilon_{k}^{i}-\epsilon_{k}^{i'})\right]^{2}\right)}_{\mathcal{L}_2}\\
    &+\underbrace{\frac{2}{N}\sum_{k=1}^{n_i}\sum_{i=1}^c\sum_{j=1}^{c}\left(\frac{1}{\sqrt{n_{j}}}\left[(\mu^{j})^{T}(\epsilon_{k}^i-\epsilon_{k}^{i'})(\ \epsilon^{j})^{T}(\epsilon_{k}^{i}-\epsilon_{k}^{i'})\right]\right)}_{\mathcal{L}_3} \\
\end{aligned}
\end{equation}
Like the previous Equation \ref{varanice111}, Equation \ref{new-reg-loss2} can be decomposed into three parts: $\mathcal{L}_1$, $\mathcal{L}_2$, and $\mathcal{L}_3$. If we substitute the class center $C^i$ for $h$, we can observe that even though $\mathcal{L}_1$ is insensitive to class imbalance, variables $\mathcal{L}_2$ and $\mathcal{L}_3$, which respectively incorporate weights $\frac{1}{n_{j}}$ and $\frac{1}{\sqrt{n_{j}}}$, can still provide the following support: when optimizing the variance of the model, more attention can be given to the variance introduced by the minority classes, which also provides an innovative perspective for understanding class imbalance on graphs.
\end{proof}

\newpage
\section{More Results}
\label{more results}
\subsection{More results Under Traditional Semi-supervised Settings}
In Table \ref{cora10}, we present the results of ReVar and other algorithms on Cora-Semi ($\rho$=10). The experimental results demonstrate that TAM has a significant gain effect on all models (BalancedSoftmax, Renode, GraphENS), while GraphENS achieved the best performance among all current baselines on various models. By comparing with GraphSMOTE, we can conclude that addressing the node classification imbalance issue should focus on the topological characteristics of the graph. Finally, and most importantly, ReVar achieved state-of-the-art results under all basic models, which also verifies the superiority of our model in traditional imbalanced segmentation.
\begin{table}[H]
\centering
    \setlength{\abovecaptionskip}{0pt}%
    \setlength{\belowcaptionskip}{10pt}%
\caption{Experimental results of our method ReVar and other baselines on \textbf{\textit{Cora-Semi}}. We report averaged balanced accuracy (bAcc.,$\%$) and F1-score ($\%$) with the standard errors over 5 repetitions on three representative GNN architectures. Highlighted are the top \textbf{\textcolor{c1}{first}} and \textbf{\textcolor{c2}{second}}. \pmb{$\Delta$} is the margin by which our method leads state-of-the-art method.}

 \scalebox{0.76}{\begin{tabular}{lllllllll}
     \toprule[1.5pt]
     \multirow{43}{*}{} & \multicolumn{1}{c}{\bf{Dataset}(\textit{Cora-Semi})} & \multicolumn{2}{c}{GCN}  & \multicolumn{2}{c}{GAT}   &\multicolumn{2}{c}{SAGE}  &\\

     \cmidrule(lr){2-8}
     
     & \bf{Imbalance Ratio $\bf ({\rho=10})$} & \multicolumn{1}{c}{bAcc.} & \multicolumn{1}{c}{F1} & \multicolumn{1}{c}{bAcc.} & \multicolumn{1}{c}{F1}& \multicolumn{1}{c}{bAcc.} & \multicolumn{1}{c}{F1}\\

     \cmidrule(lr){2-8}
     
     & Vanilla & 62.82 \scriptsize$\pm$ 1.43 & 61.67 \scriptsize$\pm$ 1.59 & 62.33 \scriptsize$\pm$ 1.56 & 61.82 \scriptsize$\pm$ 1.84 & 61.82 \scriptsize$\pm$ 0.97 & 60.97 \scriptsize$\pm$ 1.07\\

     & Re-Weight & 65.36 \scriptsize$\pm$ 1.15 & 64.97 \scriptsize$\pm$ 1.39 & 66.87 \scriptsize$\pm$ 0.97 & 66.62 \scriptsize$\pm$ 1.13 & 63.94 \scriptsize$\pm$ 1.07 & 63.82 \scriptsize$\pm$ 1.30   \\
     
     & PC Softmax & 68.04 \scriptsize$\pm$ 0.82 & 67.84 \scriptsize$\pm$ 0.81 & 66.69 \scriptsize$\pm$ 0.79 & 66.04 \scriptsize$\pm$ 1.10 & 65.79 \scriptsize$\pm$ 0.70 & 66.04 \scriptsize$\pm$ 0.92  \\

     & GraphSMOTE & 66.39 \scriptsize$\pm$ 0.56 & 65.49 \scriptsize$\pm$ 0.93 & 66.71 \scriptsize$\pm$ 0.32 & 65.01 \scriptsize$\pm$ 1.21  & 61.65 \scriptsize$\pm$ 0.34 & 60.97 \scriptsize$\pm$ 0.98 \\
\cmidrule(lr){2-8}
     & BalancedSoftmax & 69.98 \scriptsize$\pm$ 0.58 & 68.68 \scriptsize$\pm$ 0.55  & 67.89 \scriptsize$\pm$ 0.36 & 67.96 \scriptsize$\pm$ 0.41 & 67.43 \scriptsize$\pm$ 0.61 & 67.66 \scriptsize$\pm$ 0.69 \\
     & + TAM & 69.94 \scriptsize$\pm$ 0.45 & 69.54 \scriptsize$\pm$ 0.47  & 69.16 \scriptsize$\pm$ 0.27 & 69.39 \scriptsize$\pm$ 0.37 & 69.03 \scriptsize$\pm$ 0.92 & \textcolor{c2}{\textbf{69.03} \scriptsize$\pm$ 0.97}\\
     \cdashline{2-8}
     & Renode & 67.03 \scriptsize$\pm$ 1.41 & 67.16 \scriptsize$\pm$ 1.67 & 67.33 \scriptsize$\pm$ 0.79 & 68.08 \scriptsize$\pm$ 1.16 & 66.84 \scriptsize$\pm$ 1.78 & 67.08 \scriptsize$\pm$ 1.75 \\
     & + TAM & 68.26 \scriptsize$\pm$ 1.84 & 68.11 \scriptsize$\pm$ 1.97  & 67.50 \scriptsize$\pm$ 0.67 & 68.06 \scriptsize$\pm$ 0.96 & 67.28 \scriptsize$\pm$ 1.11 & 67.15 \scriptsize$\pm$ 1.11 \\
     \cdashline{2-8}
     & GraphENS & 70.89 \scriptsize$\pm$ 0.71 & 70.90 \scriptsize$\pm$ 0.81  & \textcolor{c2}{\textbf{70.45} \scriptsize$\pm$ 1.25} & 69.87 \scriptsize$\pm$ 1.32 & 68.74 \scriptsize$\pm$ 0.46 & 68.34 \scriptsize$\pm$ 0.33 \\
     & + TAM & \textcolor{c2}{\textbf{71.69} \scriptsize$\pm$ 0.36}  &  \textcolor{c2}{\textbf{72.14} \scriptsize$\pm$ 0.51} & 70.15 \scriptsize$\pm$ 0.18 & \textcolor{c2}{\textbf{70.00} \scriptsize$\pm$ 0.40} & \textcolor{c2}{\textbf{70.45} \scriptsize$\pm$ 0.74} & \textcolor{c1}{\textbf{70.40} \scriptsize$\pm$ 0.75}\\
     
     \cmidrule(lr){2-8}
     
     & \textbf{ReVar} &\textcolor{c1}{\textbf{72.92} \scriptsize$\pm$ 2.27} & \color{c1}{\textbf{72.60}} \scriptsize$\pm$ 2.26 &\textcolor{c1}{\textbf{74.56} \scriptsize$\pm$ 0.96}  & \textcolor{c1}{\textbf{74.61} \scriptsize$\pm$ 0.96} &\textcolor{c1}{\textbf{73.32} \scriptsize$\pm$ 3.02 }  & 68.91 \scriptsize$\pm$ 3.13\\
     \cmidrule(lr){2-8}

     & \pmb{$\Delta$} & \multicolumn{1}{c}{\textcolor{c1}{\textbf{ + 1.23}\scriptsize(1.72\%)}} & \multicolumn{1}{c}{\textcolor{c1}{\textbf{ + 0.46}\scriptsize(0.64\%)}} &  \multicolumn{1}{c}{\textcolor{c1}{\textbf{+ 4.11}\scriptsize(5.83\%)}} & \multicolumn{1}{c}{\textcolor{c1}{\textbf{  + 4.61}\scriptsize(6.59\%)}}  & \multicolumn{1}{c}{\textcolor{c1}{\textbf{ + 2.87}\scriptsize(4.07\%)}} & \multicolumn{1}{c}{\textcolor{c1}{\textbf{ - 1.49}\scriptsize(2.12\%)}} \\
      
     \bottomrule[1.5pt]
     \label{cora10}
\end{tabular}}
\end{table}
\begin{table}[H]
\centering
    \setlength{\abovecaptionskip}{0pt}%
    \setlength{\belowcaptionskip}{10pt}%
\caption{Experimental results of our method ReVar and other baselines on \textbf{\textit{Computers-Random}}. We report averaged balanced accuracy (bAcc.,$\%$) and F1-score ($\%$) with the standard errors over 5 repetitions on three representative GNN architectures. Highlighted are the top \textbf{\textcolor{c1}{first}} and \textbf{\textcolor{c2}{second}}. \pmb{$\Delta$} is the margin by which our method leads state-of-the-art method.}

 \scalebox{0.76}{\begin{tabular}{lllllllll}
     \toprule[1.5pt]
     \multirow{43}{*}{} & \multicolumn{1}{c}{\bf{Dataset}(\textit{Computers-Random})} & \multicolumn{2}{c}{GCN}  & \multicolumn{2}{c}{GAT}   &\multicolumn{2}{c}{SAGE}  &\\

     \cmidrule(lr){2-8}
     
     & \bf{Imbalance Ratio $\bf ({\rho=25.50})$} & \multicolumn{1}{c}{bAcc.} & \multicolumn{1}{c}{F1} & \multicolumn{1}{c}{bAcc.} & \multicolumn{1}{c}{F1}& \multicolumn{1}{c}{bAcc.} & \multicolumn{1}{c}{F1}\\

     \cmidrule(lr){2-8}
     
     & Vanilla & 78.43 \scriptsize$\pm$ 0.41 & 77.14 \scriptsize$\pm$ 0.39 &71.35 \scriptsize$\pm$1.18 & 69.60 \scriptsize$\pm$ 1.11 & 65.30 \scriptsize$\pm$ 1.07 & 64.77 \scriptsize$\pm$ 1.19\\

     & Re-Weight & 80.49 \scriptsize$\pm$ 0.44 & 75.07 \scriptsize$\pm$ 0.58 & 71.95 \scriptsize$\pm$ 0.80 & 70.67 \scriptsize$\pm$ 0.51 & 66.50 \scriptsize$\pm$ 1.47 & 66.10 \scriptsize$\pm$ 1.46   \\
     
     & PC Softmax & 81.34  \scriptsize$\pm$ 0.55 & 75.17  \scriptsize$\pm$ 0.57 &70.56 \scriptsize$\pm$ 1.46 & 67.26 \scriptsize$\pm$ 1.48 & 69.73 \scriptsize$\pm$ 0.53 & 67.03 \scriptsize$\pm$ 0.6  \\

     & GraphSMOTE & 80.50 \scriptsize$\pm$ 1.11 & 73.79 \scriptsize$\pm$ 0.14 & 71.98 \scriptsize$\pm$ 0.21 & 67.98 \scriptsize$\pm$ 0.31 & 72.69 \scriptsize$\pm$ 0.82 & 68.73 \scriptsize$\pm$ 1.01 \\
\cmidrule(lr){2-8}
     & BalancedSoftmax & 81.39 \scriptsize$\pm$ 0.25 & 74.54 \scriptsize$\pm$ 0.64  & 72.09  \scriptsize$\pm$ 0.31 & 68.38 \scriptsize$\pm$ 0.69 & 73.80 \scriptsize$\pm$ 1.06 & 69.74 \scriptsize$\pm$ 0.60 \\
     & + TAM & 81.64 \scriptsize$\pm$ 0.48 & 75.59 \scriptsize$\pm$ 0.83 & 74.00 \scriptsize$\pm$ 0.77 & 70.72 \scriptsize$\pm$ 0.50 & 73.77 \scriptsize$\pm$ 1.26 & 71.03 \scriptsize$\pm$ 0.69 \\
     \cdashline{2-8}
     & Renode & 81.64 \scriptsize$\pm$ 0.34 & \textcolor{c2}{\textbf{76.87} \scriptsize$\pm$ 0.32} & 72.80 \scriptsize$\pm$ 0.94 & 71.40 \scriptsize$\pm$ 0.97 & 70.94 \scriptsize$\pm$ 1.50 & 70.04 \scriptsize$\pm$ 1.16 \\
     & + TAM & 80.50 \scriptsize$\pm$ 1.11 & 75.79 \scriptsize$\pm$ 0.14 & 71.98 \scriptsize$\pm$ 0.21 & 70.98 \scriptsize$\pm$ 0.31 & 72.69 \scriptsize$\pm$ 0.82 & 70.73 \scriptsize$\pm$ 1.01 \\
     \cdashline{2-8}
     & GraphENS & 82.66 \scriptsize$\pm$ 0.61 & 76.55 \scriptsize$\pm$ 0.17 & 75.25 \scriptsize$\pm$ 0.85 & 71.49 \scriptsize$\pm$ 0.54 & 77.64 \scriptsize$\pm$ 0.52 & 72.65 \scriptsize$\pm$ 0.53 \\
     & + TAM & \textcolor{c2}{\textbf{82.83} \scriptsize$\pm$ 0.68} & 76.76 \scriptsize$\pm$ 0.39 & \textcolor{c2}{\textbf{75.81} \scriptsize$\pm$ 0.72} & \textcolor{c2}{\textbf{72.62} \scriptsize$\pm$ 0.57} & \textcolor{c2}{\textbf{78.98} \scriptsize$\pm$ 0.60} & \textcolor{c2}{\textbf{73.59} \scriptsize$\pm$ 0.55 }\\
     
     \cmidrule(lr){2-8}
     
     & \textbf{ReVar} & \textcolor{c1}{\textbf{85.00} \scriptsize$\pm$ 0.07 } & \textcolor{c1}{\textbf{82.35} \scriptsize$\pm$ 0.08 }& \textcolor{c1}{\textbf{81.94} \scriptsize$\pm$ 0.54 } & \textcolor{c1}{\textbf{80.94} \scriptsize$\pm$ 0.25 }& \textcolor{c1}{\textbf{80.61} \scriptsize$\pm$ 0.11 }& \textcolor{c1}{\textbf{77.49} \scriptsize$\pm$ 0.09 }\\
     \cmidrule(lr){2-8}

     & \pmb{$\Delta$} & \multicolumn{1}{c}{\textcolor{c1}{\textbf{ + 2.17}\scriptsize(2.62\%)}} & \multicolumn{1}{c}{\textcolor{c1}{\textbf{+ 5.48}\scriptsize(7.13\%)}} & \multicolumn{1}{c}{\textcolor{c1}{\textbf{+ 6.13}\scriptsize(8.09\%)}} & \multicolumn{1}{c}{\textcolor{c1}{\textbf{+ 8.32}\scriptsize(11.46\%)}}  & \multicolumn{1}{c}{\textcolor{c1}{\textbf{+ 1.63}\scriptsize(2.06\%)}} & \multicolumn{1}{c}{\textcolor{c1}{\textbf{+ 3.90}\scriptsize(5.30\%)}} \\
      
     \bottomrule[1.5pt]
     \label{computersrandom}
\end{tabular}}
\end{table}
\subsection{More results On Naturally Imbalanced Datasets}
One of our major highlights is testing our algorithms on naturally imbalanced datasets, which is more representative of real-world scenarios. This means that in a semi-supervised setting, our training set and unlabeled data are both imbalanced, which poses a significant challenge for oversampling methods. In Table \ref{computersrandom}, we present the results of ReVar and all baselines on \textbf{\textit{Computers-Random}}, demonstrating that our model still achieves state-of-the-art performance. In contrast, oversampling methods such as GraphSMOTE perform poorly on this dataset due to overfitting limitations. We believe that our model's ability to regularize variance from a bottom-up perspective contributes to its superior performance on naturally imbalanced graphs, while also demonstrating its powerful generalization capabilities.
\newpage

\section{More Analysis}\label{more analysis}

\subsection{More Ablation Analysis for the Loss Function}
\begin{figure*}[!ht]
\centering
\subfigure[\textit{CiteSeer-Semi}]{
\begin{minipage}[t]{0.35\linewidth}
\centering
\includegraphics[scale=0.12]{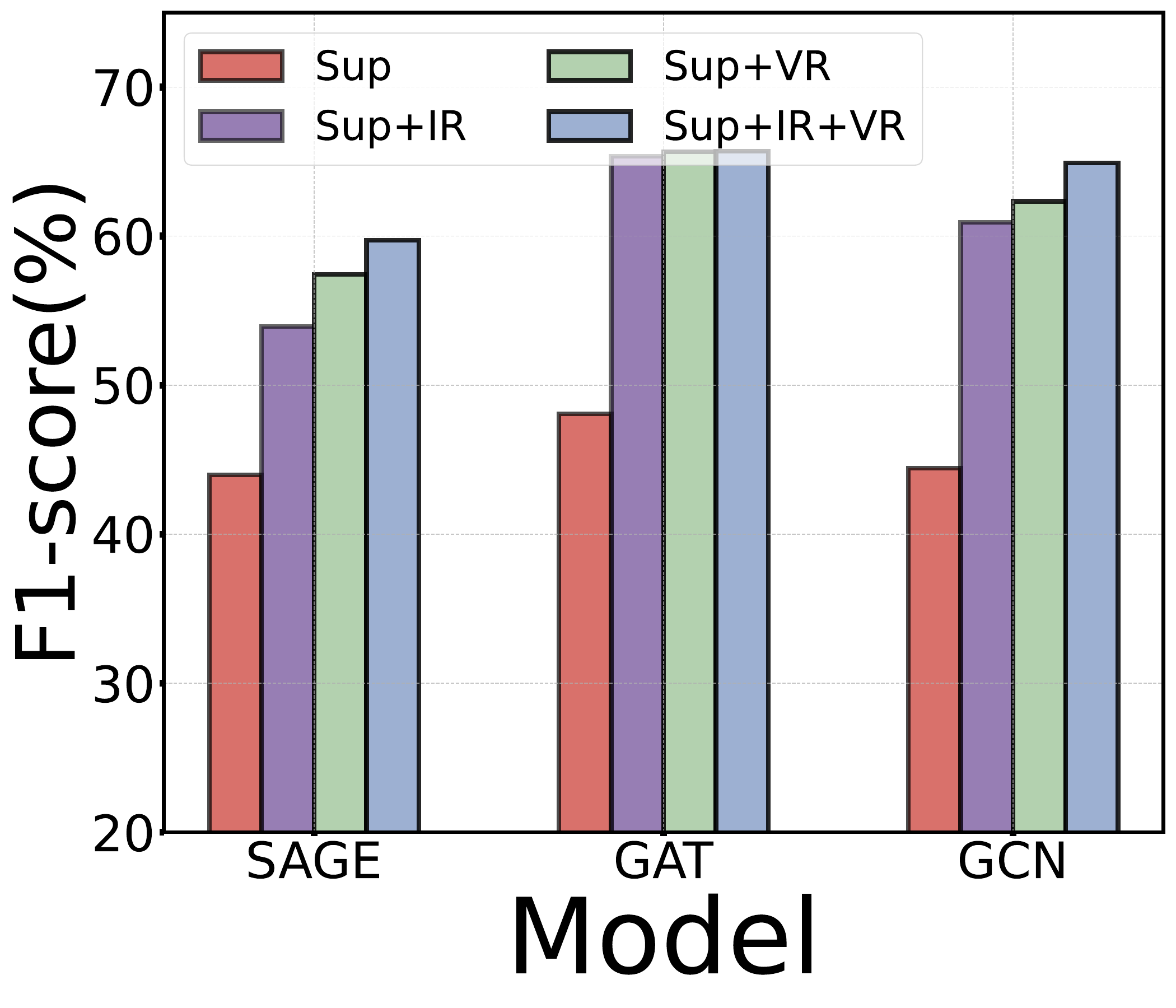}
\end{minipage}
}
\subfigure[\textit{PubMed-Semi}]{
\begin{minipage}[t]{0.35\linewidth}
\centering
\includegraphics[scale=0.12]{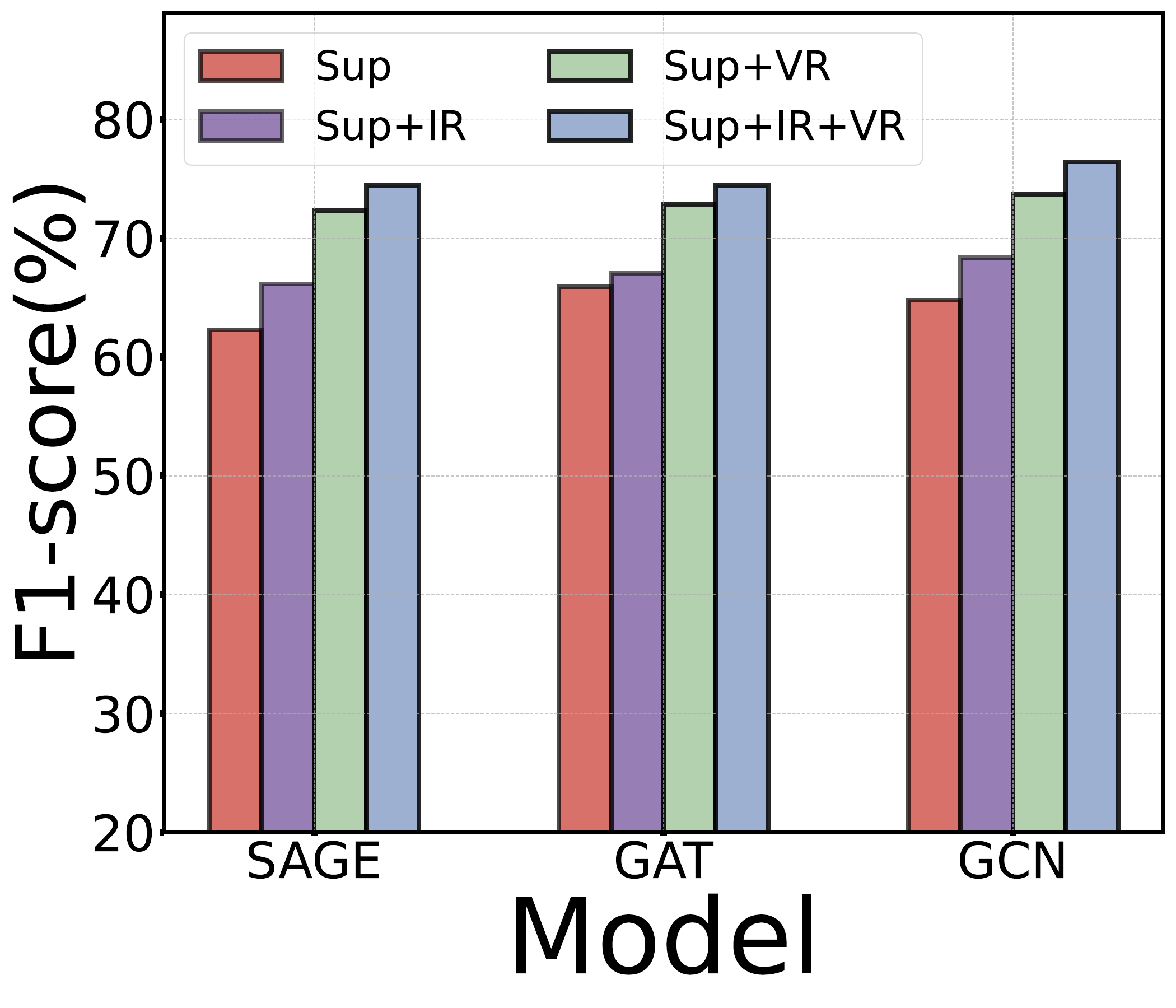}
\end{minipage}
}
\centering
\caption{More Ablation Analysis for the Loss Function.}
\label{more loss}
\end{figure*}
\paragraph{Details of Experimental Setup.}
We conduct more experiments on imbalance ratio 10 ($\rho=10$) for the datasets CiteSeer and PubMed. For each GNN network, we add one loss  regularization at a time, i.e. for GCN, we gradually add $\mathcal{L}_{\mathrm{IR}}$, $\mathcal{L}_{\mathrm{VR}}$ from the initial $\mathcal{L}_{\mathrm{sup}}$. The architecture we employed consisted of a 2-layers graph neural network (GNN) with 128 hidden dimensions, using
GCN \citep{kipf2016semi}, GAT \citep{velivckovic2017graph}, and GraphSAGE \citep{hamilton2017inductive}. The models were trained for 2000 epochs.
\paragraph{Analysis.}
Figure \ref{more loss} reports more experiments for each component in the loss function. As we have seen, each component of the loss function can bring an improvement in the training effect. It is worth noting that in most cases $\mathcal{L}_{\mathrm{VR}}$ has a larger effect boost than $\mathcal{L}_{\mathrm{IR}}$. We believe that $\mathcal{L}_{\mathrm{VR}}$ plays a more important role in the loss function.

\subsection{More Experiments for Variance and Imbalance Ratio Correlation in Theorem \ref{variance and imbalance theorem}} 
\label{More experiments for variance imbalance}
\begin{figure*}[!ht]
\centering
\subfigure[\textit{PubMed-GCN}]{
\begin{minipage}[t]{0.27\linewidth}
\centering
\includegraphics[scale=0.22]{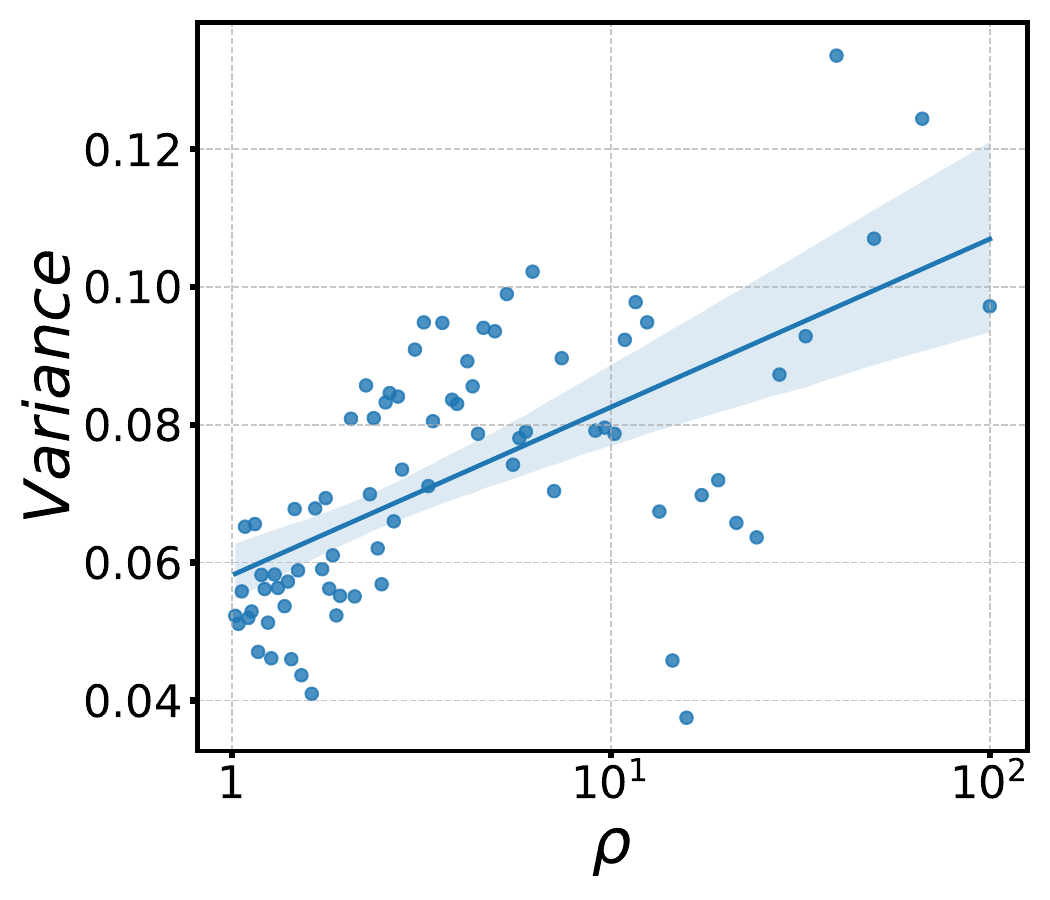}
\end{minipage}
}
\subfigure[\textit{PubMed-GAT}]{
\begin{minipage}[t]{0.27\linewidth}
\centering
\includegraphics[scale=0.22]{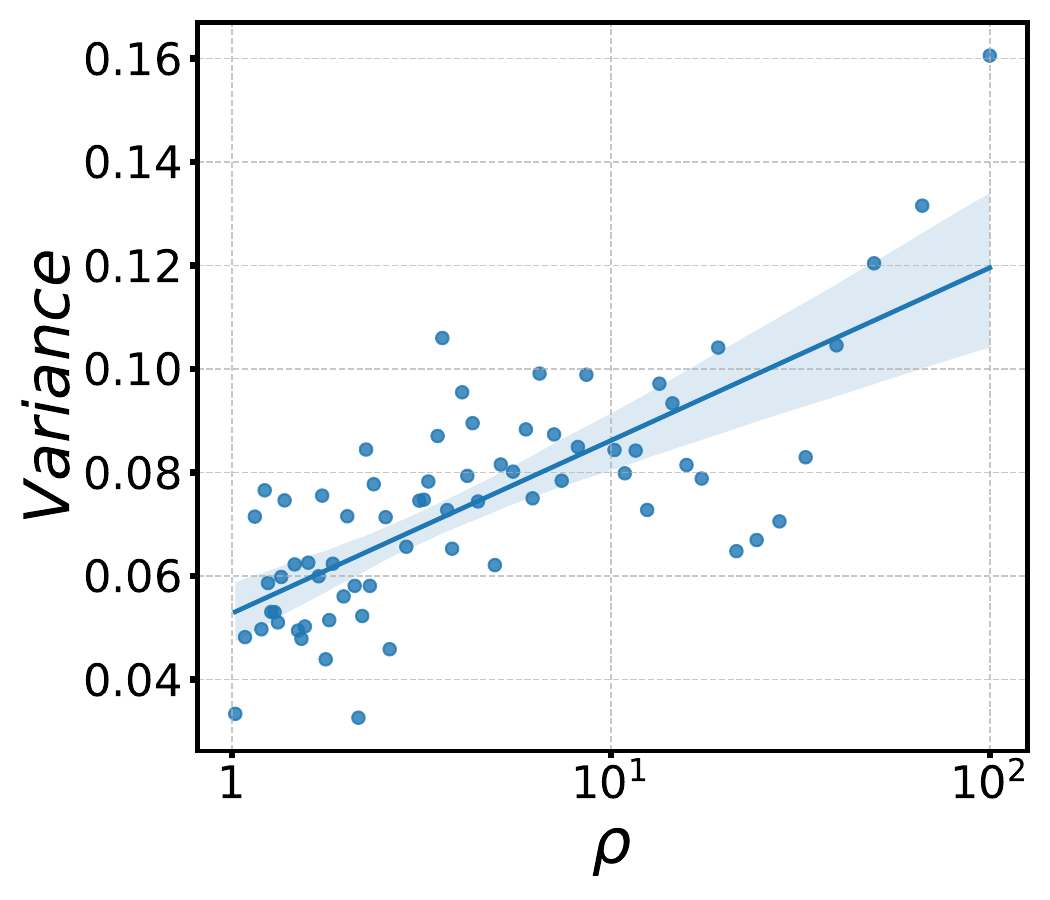}
\end{minipage}
}
\subfigure[\textit{PubMed-SAGE}]{
\begin{minipage}[t]{0.27\linewidth}
\centering
\includegraphics[scale=0.22]{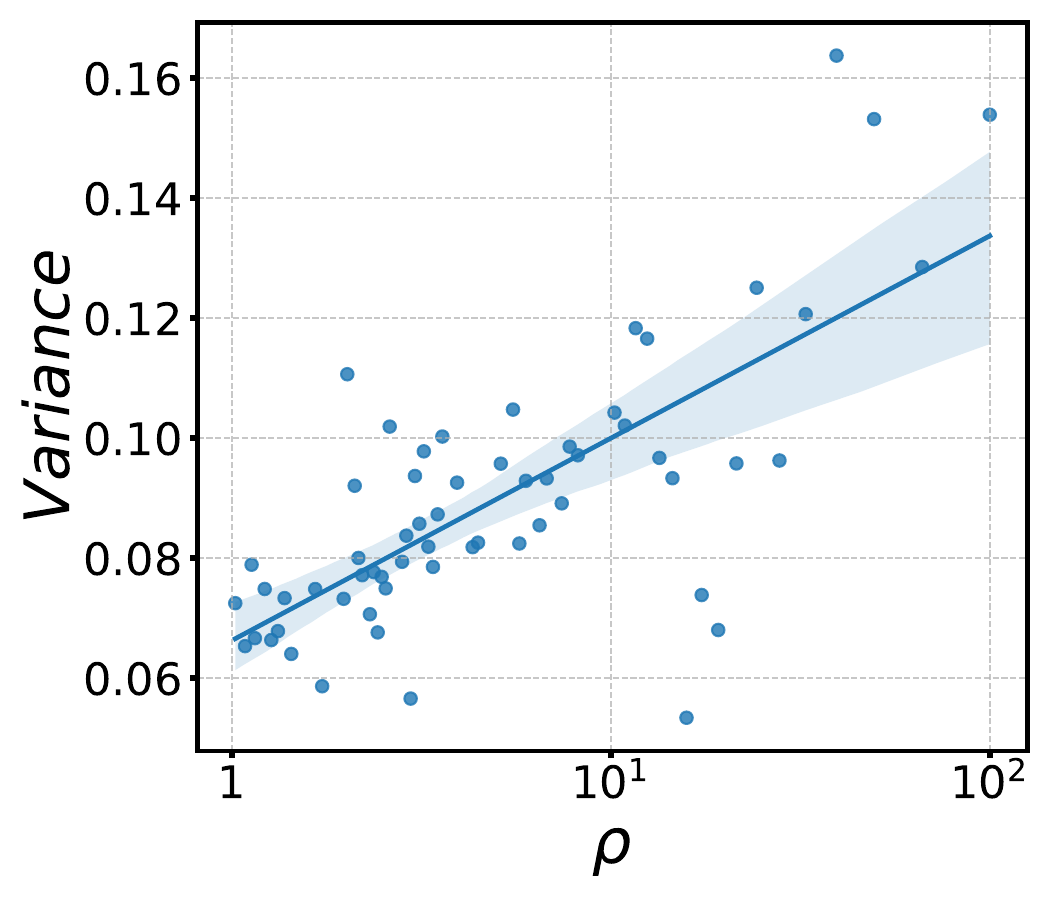}
\end{minipage}
}
\centering
\caption{More experiments for variance and imbalance ratio correlation in Theorem \ref{variance and imbalance theorem}.}
\label{more variance imb}
\end{figure*}
\paragraph{Details of Experimental Setup.}
In this experiment, the classifier is not MLP, which means that the probability of a node $v$ being classified into class $i$ depends on the distance of this node from the center of class $i$. We classify half of the classes as majority classes and the other half as minority classes. Initially, each class in majority classes and minority classes has 200 labeled nodes. To generate different imbalance scenarios ($\rho$), we reduce the number of labeled nodes in the minority class by 1 and add 1 to the number of labeled nodes in the majority class each time, so as to keep the number of samples in the training set consistent and eliminate the effect of the sample size variance in the training set. To calculate the variance of the model under a certain imbalance ratio ($\rho$), we repeat 20 times to randomly select different but the same number of training sets, and train 2000 epochs for each fixed training set. The architecture we employed consisted of a 2-layers graph neural network (GNN) with 128 hidden dimensions, using
GCN \citep{kipf2016semi}, GAT \citep{velivckovic2017graph}, and GraphSAGE \citep{hamilton2017inductive}.

\paragraph{Hypothesis testing.}
To test the correlation between the variance and imbalance ratio, 
we computed the Pearson correlation coefficient ($\rho$) between variance and imbalance ratio (log), as presented in the table below. The Pearson correlation coefficient, denoted as $r$, is a prevalent metric for gauging linear correlations. This coefficient lies between (-1) and (1) ,and reflects both the magnitude and direction of the correlation between two variables. An ($r$) value greater than (0.5) indicates a strong positive correlation. Furthermore, the p-value results from a hypothesis test with the null hypothesis ($H_{0}:\rho = 0$)
 and the alternative hypothesis ($H_{1}:\rho \neq 0$)
, where represents the population correlation coefficient.
\begin{table}[!h]
	\centering
	\scalebox{0.8}{\begin{tabular}{lcccccc}\toprule \centering
		\textbf{}&\textbf{Citeseer-GCN}&\textbf{Citeseer-GAT}&\textbf{Citeseer-SAGE}&\textbf{PubMed-GCN}&\textbf{PubMed-GAT}&\textbf{PubMed-SAGE}\\
		\midrule
		\textbf{r}& 0.751 & 0.786 &  0.642 &  0.694 & 0.760 &  0.747\\
		\textbf{p-value}& 3.203$\times$10$^{-14}$ & 1.516$\times$10$^{-14}$ & 5.107$\times$10$^{-07}$ & 2.233$\times$10$^{-09}$ & 2.344$\times$10$^{-14}$ & 5.634$\times$10$^{-13}$ \\

		\bottomrule
	\end{tabular}}
	\centering
\end{table}
Given that the Pearson correlation coefficient between variance and imbalance ratio exceeds (0.5), and the p-value is below (0.01), we deduce that there is a robust correlation between variance and imbalance ratio. This relationship is statistically significant at the (0.01) significance level.

\paragraph{Analysis.}

Figure \ref{more variance imb} reveals an intriguing pattern, as the majority of data points closely align along a regression curve exhibiting a positive slope. This observation provides substantial evidence that establishes a strong and direct linear relationship between the imbalance ratio ($\rho$) and the associated variance. Consequently, our findings provide compelling support for the hypothesis postulated in Theorem \ref{variance and imbalance theorem}.

\subsection{More Experiments for Hyperparameter Sensitivity Analysis of ReVar}
\label{more analysis sensitive paragraph}
\begin{figure*}[!ht]
\centering
\subfigure[\textit{CiteSeer-$\lambda_{1}$}]{
\begin{minipage}[t]{0.232\linewidth}
\centering
\includegraphics[scale=0.2]{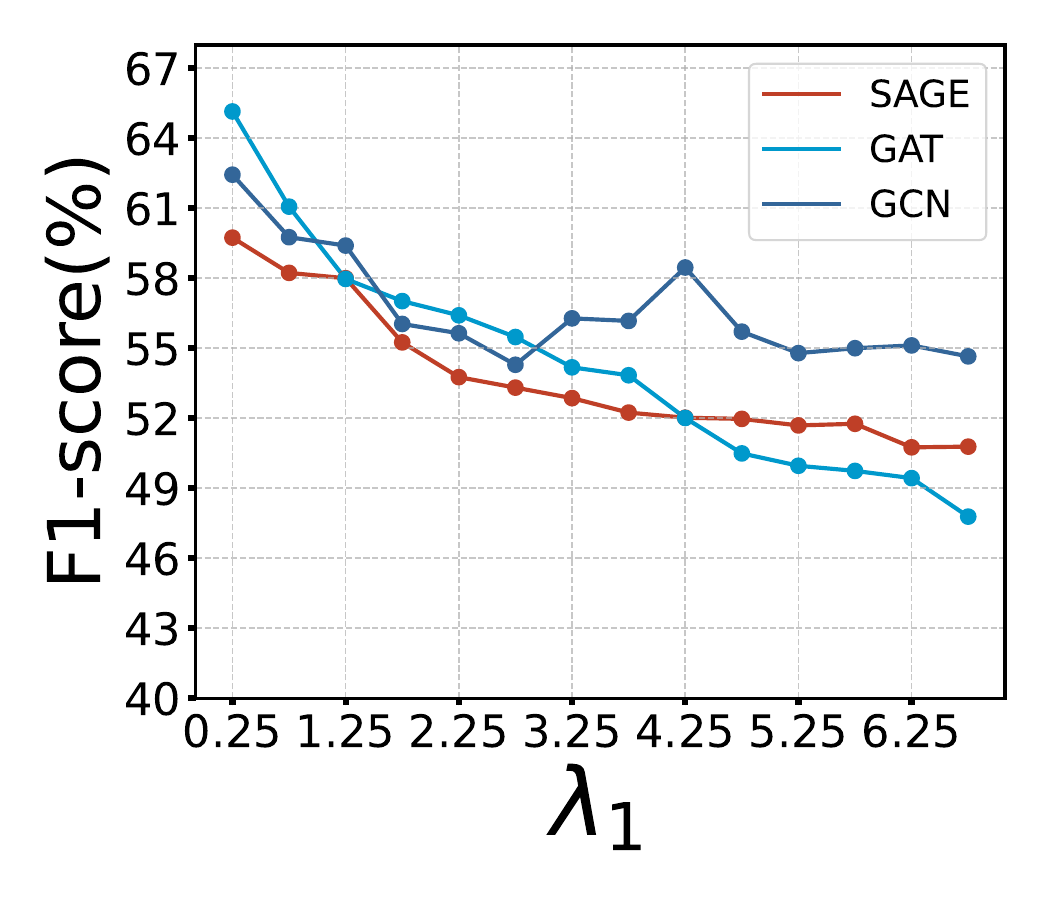}
\end{minipage}
}
\subfigure[\textit{CiteSeer-$\lambda_{2}$}]{
\begin{minipage}[t]{0.232\linewidth}
\centering
\includegraphics[scale=0.2]{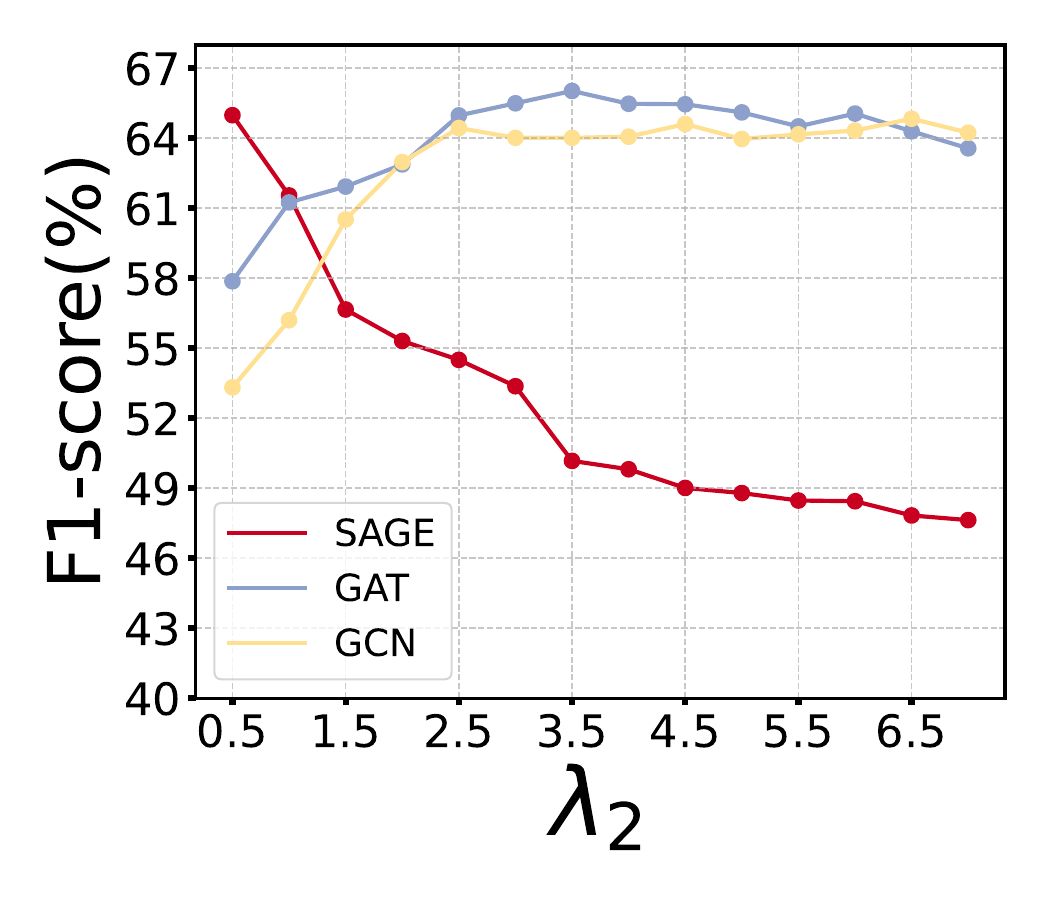}
\end{minipage}}
\label{more analysis sensitive}
\centering
\subfigure[\textit{Computers-$\lambda_{1}$}]{
\begin{minipage}[t]{0.232\linewidth}
\centering
\includegraphics[scale=0.2]{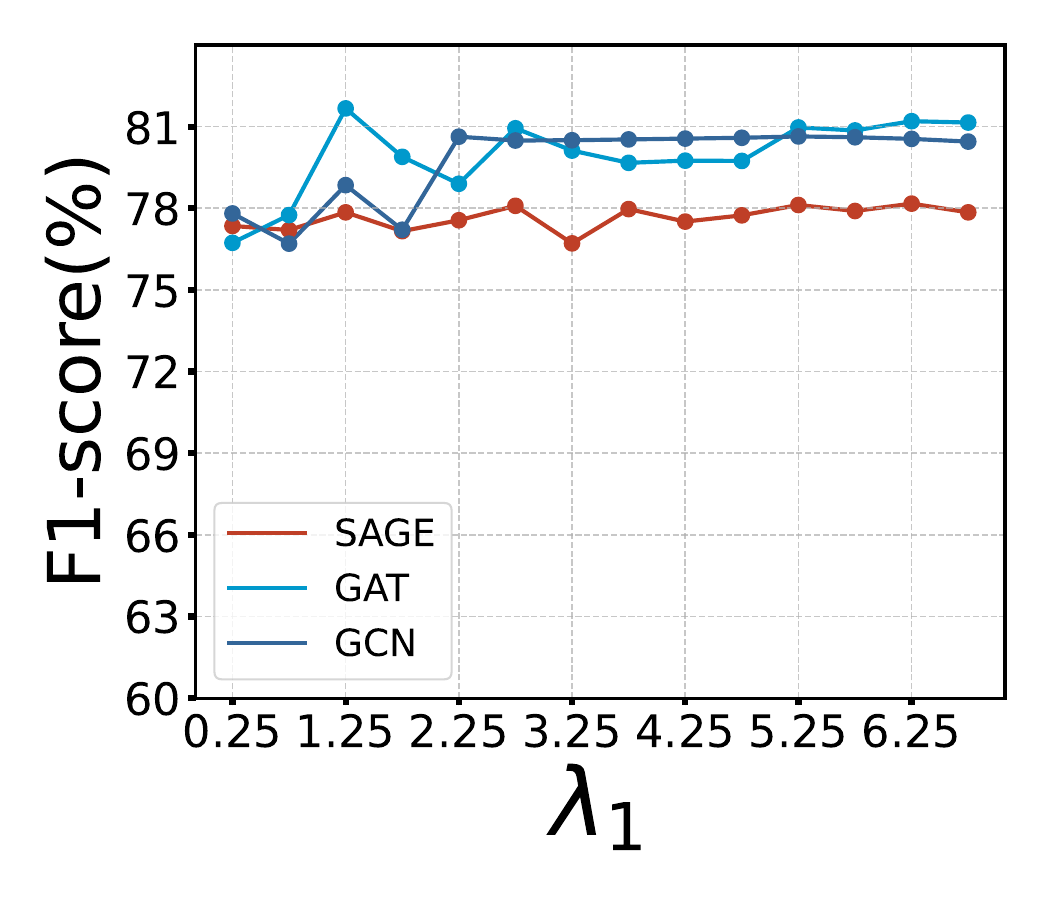}
\end{minipage}}
\subfigure[\textit{Computers-$\lambda_{2}$}]{
\begin{minipage}[t]{0.232\linewidth}
\centering
\includegraphics[scale=0.2]{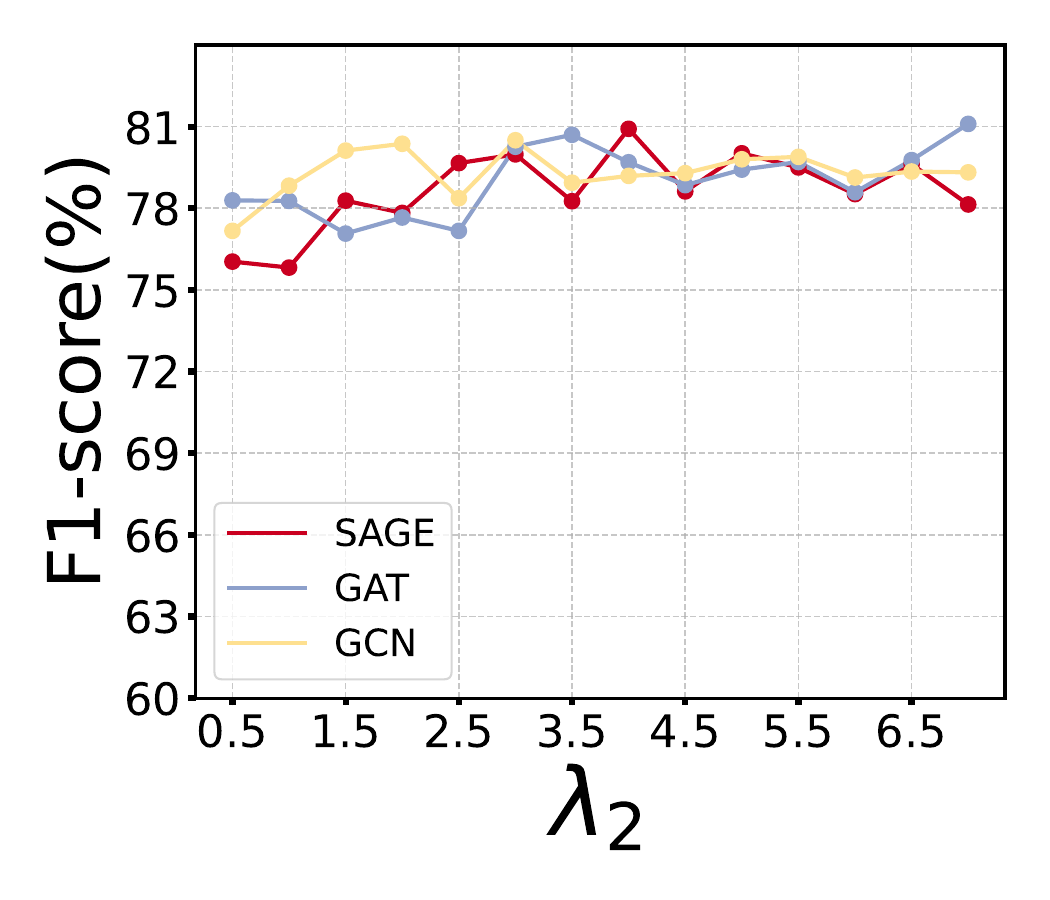}
\end{minipage}}
\centering
\caption{More Experiments of Hyperparameter Sensitivity Analysis for ReVar.}
\label{more analysis sensitive figure}
\end{figure*}

\paragraph{Details of Experimental Setup.}
We implemented experiments on the two datasets, CiteSeer and Amazon-Computers. In this experiment, we evaluate the F1 score when we fix one $\lambda$ while the other changes. The architecture we employed consisted of a 2-layer graph neural network (GNN) with 128 hidden dimensions, using
GCN \citep{kipf2016semi}, GAT \citep{velivckovic2017graph}, and GraphSAGE \citep{hamilton2017inductive}. The models were trained for 2000 epochs.
\paragraph{Analysis.} In Figure \ref{more analysis sensitive figure}, we present the sensitivity analysis of ReVar to two weight hyperparameters ($\lambda_{1}$, $\lambda_{2}$) in the loss function on the CiteSeer and Amazon-Computers. It is evident that ReVar exhibits varying degrees of sensitivity to hyperparameters on different datasets. Specifically, the model demonstrates lower sensitivity to a and b on CiteSeer, while the opposite is observed on Amazon-Computers. We postulate that the number of nodes (i.e., dataset size) may be a crucial factor in this discrepancy. Nevertheless, the optimal range of hyperparameter selection appears to be relatively narrow, indicating that significant efforts are not be required for model fine-tuning.


\subsection{More Results of loss curve and F1 score}
\begin{figure*}[!ht]
\centering
\subfigure[\textit{PubMed-Loss}]{
\begin{minipage}[t]{0.23\linewidth}
\centering
\includegraphics[scale=0.2]{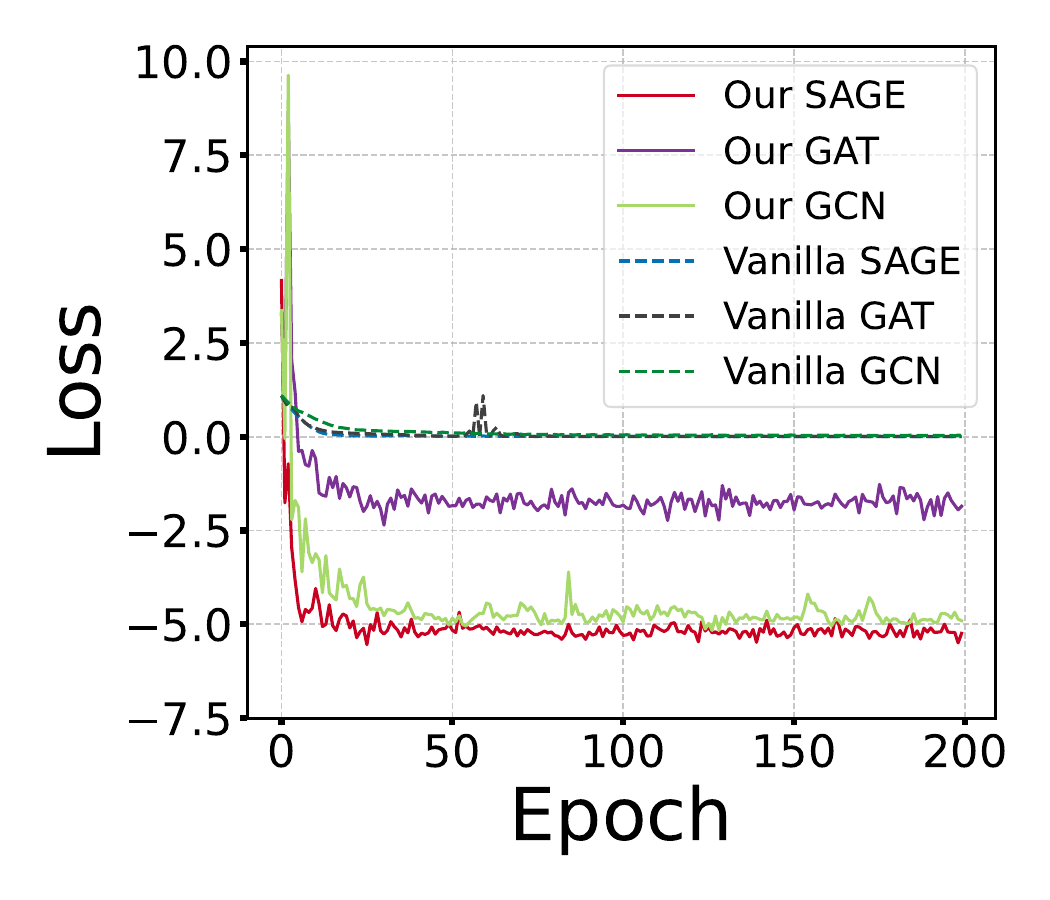}
\end{minipage}
}
\subfigure[\textit{Computers-Loss}]{
\begin{minipage}[t]{0.23\linewidth}
\centering
\includegraphics[scale=0.2]{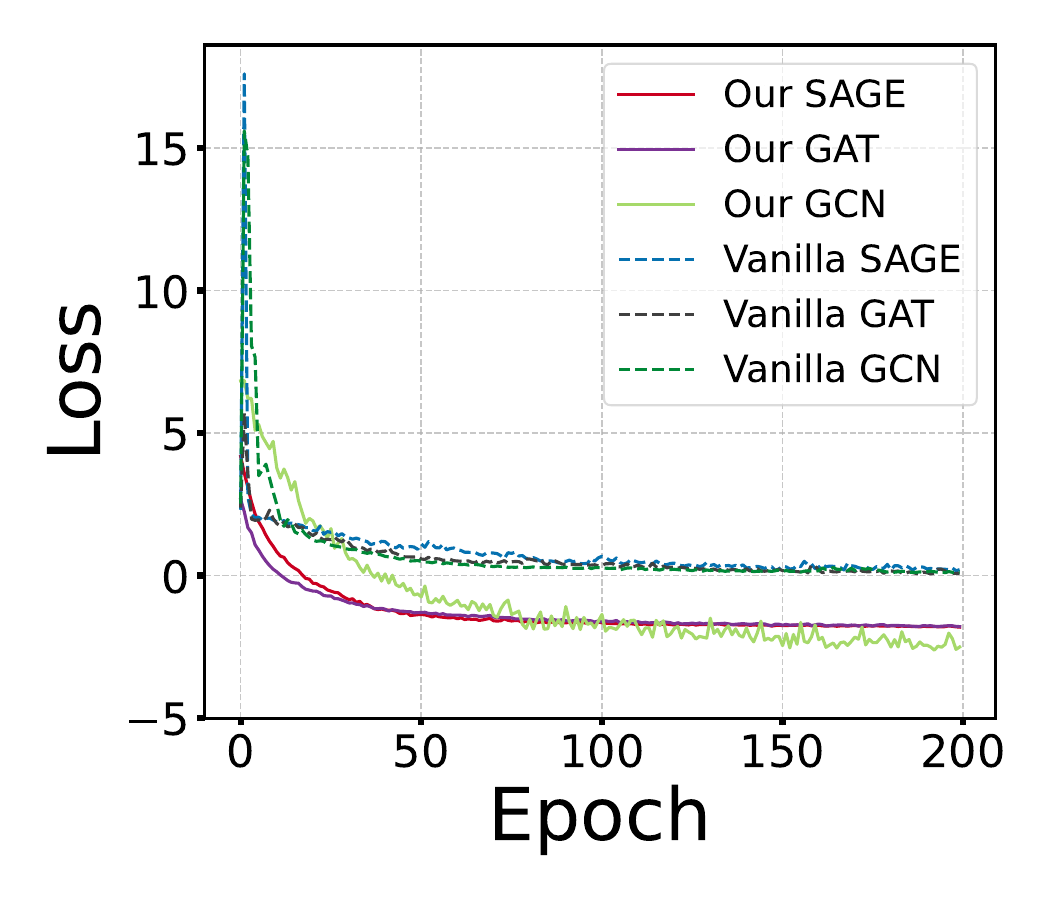}
\end{minipage}}
\centering
\subfigure[\textit{PubMed-F1 Score}]{
\begin{minipage}[t]{0.23\linewidth}
\centering
\includegraphics[scale=0.2]{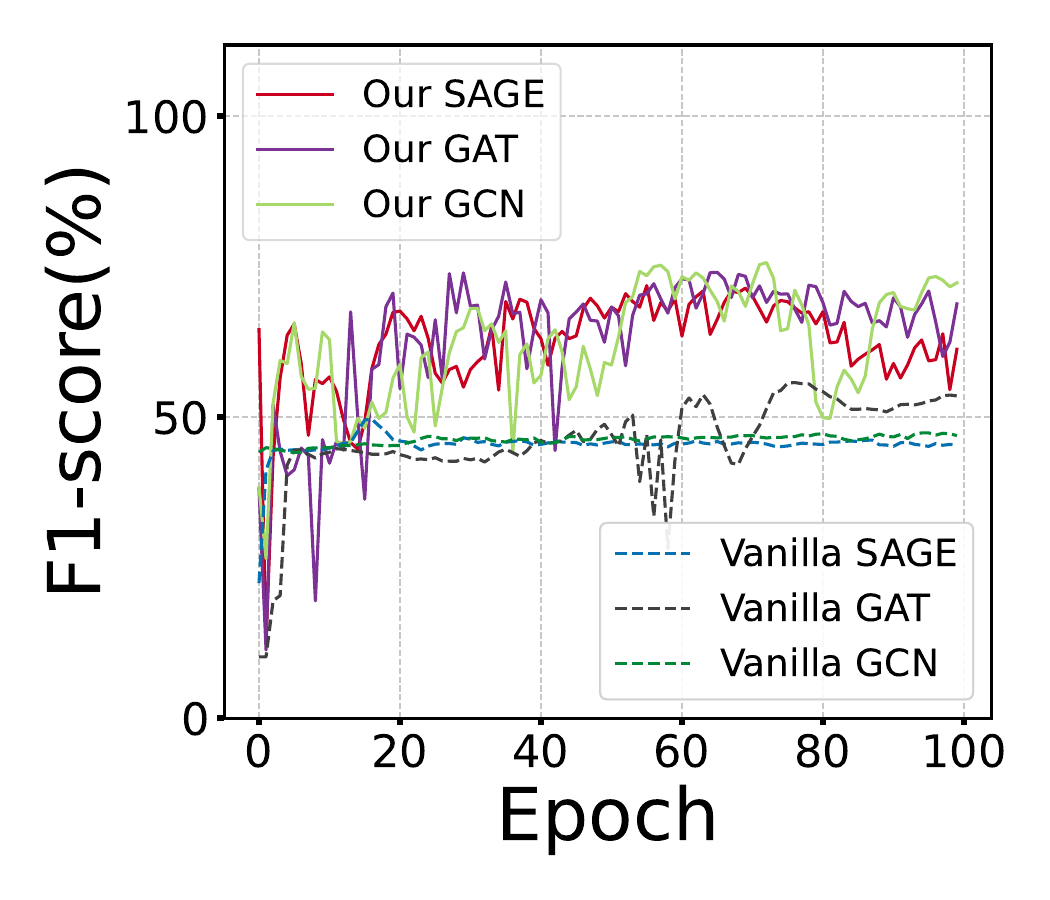}
\end{minipage}}
\subfigure[\textit{Computers-F1 score}]{
\begin{minipage}[t]{0.23\linewidth}
\centering
\includegraphics[scale=0.2]{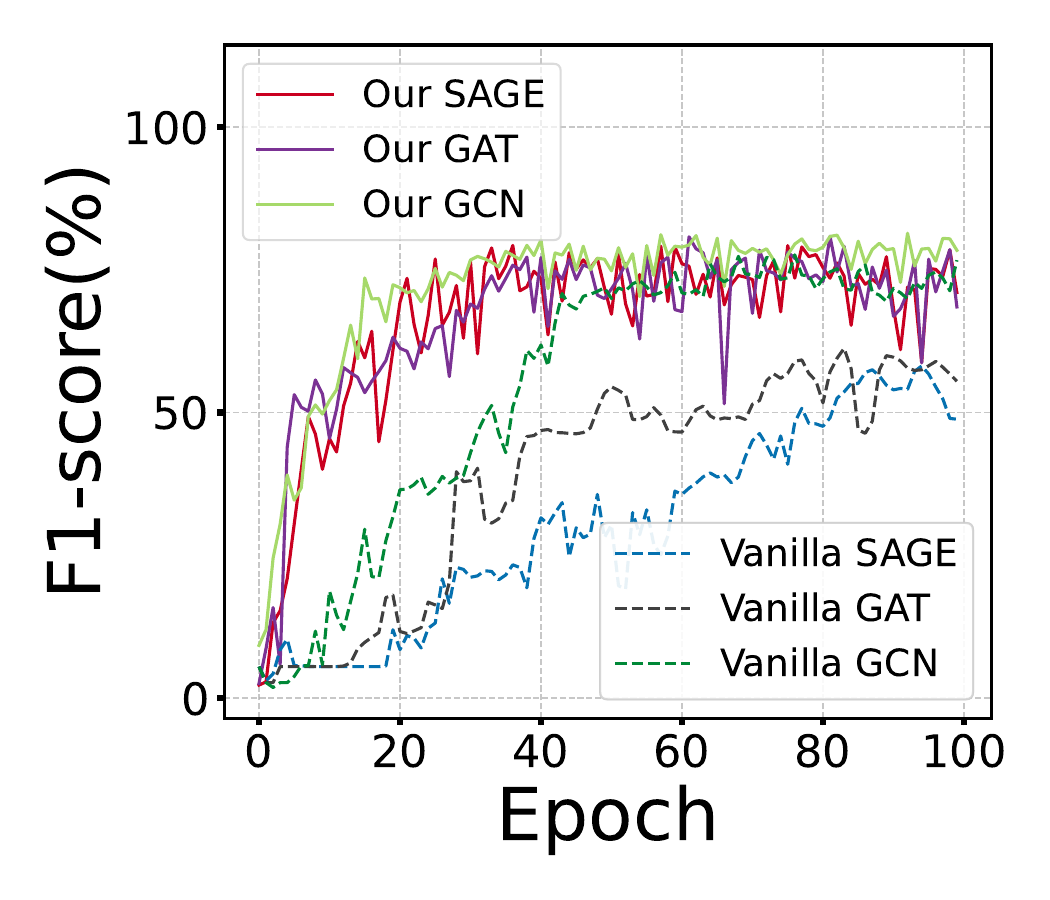}
\end{minipage}}
\centering
\caption{More analysis of loss curve and F1 score.}
\label{More analysis loss and f1}
\end{figure*}

\paragraph{Details of Experimental Setup.}
We implemented experiments on three datasets PubMed, CiteSeer and Amazon-Computers. We compare our method ReVar and other vanilla models (only using cross-entropy and not using graph augmentation). The architecture we employed for ReVar and vanilla model consisted of a 2-layer graph neural network (GNN) with 128 hidden dimensions, using
GCN \citep{kipf2016semi}, GAT \citep{velivckovic2017graph}, and GraphSAGE \citep{hamilton2017inductive}. The models were trained for 2000 epochs.

\paragraph{Analysis.}

We plot the loss curve and F1 score with epoch on the PubMed and Computers datasets. By analyzing the plotted loss curves, we observe that our model demonstrates notable advantages, including faster convergence of the loss curves and improved performance across multiple datasets. For the F1 score, our model consistently outperforms alternative approaches, showcasing its superior capability in effectively capturing the complex relationships within the data and making accurate predictions.

\newpage
\section{Elaboration of the experimental setup}
\label{Details of the experimental setup}
In this section, we present our approach for constructing imbalanced datasets, describe our evaluation protocol, and provide comprehensive details on our algorithm as well as the baseline methods utilized in our study. To achieve this, we leverage sophisticated techniques and utilize advanced metrics to ensure the reliability and relevance of our results.

\subsection{Construction for Imbalanced datasets }
\label{Label distribution}
The detailed descriptions of the datasets are shown in Table \ref{tab:datasets}.  The details of label distribution in the training set of the five imbalanced benchmark datasets are in Table \ref{training_set_label_distribution}, and the label distribution of the full graph is provided in Table \ref{full_graph_label_distribution}.
\begin{table}[!h]
	\scriptsize
	\caption{\centering Summary of the datasets used in this work.}	
	\begin{tabular}{lcccc}\toprule \centering
		\textbf{Dataset}&\textbf{Nodes}&\textbf{Edges}&\textbf{Features}&\textbf{Classes}\\
		\midrule
		\textbf{\textit{Cora}}&2,708&5,429 &1,433&7\\
		\textbf{\textit{Citeseer}}&3,327&4,732&3,703&6 \\
		\textbf{\textit{Pubmed}}&19,717&44,338&500&3 \\
		\textbf{\textit{Amazon-Computers}}&13,752&491,722&767&10\\
            \textbf{\textit{Coauthor-CS}}&18,333&163,788&6,805&15\\
		\bottomrule
	\end{tabular}
	\label{tab:datasets}
	\centering
\end{table}
\paragraph{Imbalanced datasets construction for Traditional Semi-supervised Settings.}
For each citation dataset, we adopt the "public" split and apply a random undersampling technique to make the class distribution imbalanced until the target imbalance ratio $\rho$ is achieved. Specifically, we convert the minority class nodes to unlabeled nodes in a random manner. Regarding the co-purchased networks Amazon-Computers, we conduct replicated experiments by randomly selecting nodes as the training set in each trial. We create a random validation set that contains 30 nodes in each class, and the remaining nodes are used as the testing set.
\paragraph{Imbalanced datasets construction for Naturally Imbalanced Datasets.}
For \textbf{\textit{Computers-Random}} and  \textbf{\textit{CS-Random}}, we constructed a training set with equal proportions based on the label distribution of the complete graph (Amazon-Computers). The label distributions for the training sets of  \textbf{\textit{Computers-Random}} and \textbf{\textit{CS-Random}} are presented in Table \ref{training_set_label_distribution}. To achieve this, we employed a stratified sampling approach that ensured an unbiased representation of all labels in the training set.

\begin{table}[h]
\centering
\setlength{\abovecaptionskip}{0pt}    
\setlength{\belowcaptionskip}{10pt}
\caption{Label distributions in the training sets}
\label{training_set_label_distribution}
\scalebox{0.38}{
\begin{tabular}{l|ccccccccccccccc}
\toprule[2.0pt]
\textbf{Dataset} & $\mathcal{C}_{0}$ & $\mathcal{C}_{1}$ & $\mathcal{C}_{2}$ & $\mathcal{C}_{3}$ &$\mathcal{C}_{4}$ &$\mathcal{C}_{5}$ & $\mathcal{C}_{6}$ & $\mathcal{C}_{7}$ & $\mathcal{C}_{8}$ & $\mathcal{C}_{9}$ & $\mathcal{C}_{10}$ & $\mathcal{C}_{11}$ & $\mathcal{C}_{12}$ & $\mathcal{C}_{13}$ & $\mathcal{C}_{14}$ \\
\midrule
\textbf{\textit{Cora-Semi}} ($\rho$ = 10) & \textbf{23.26\% (20)} & \textbf{23.26\% (20)} & \textbf{23.26\% (20)} & \textbf{23.26\% (20)} & \textbf{2.32\% (2)}  & \textbf{2.32\% (2)} & \textbf{2.32\% (2)}  & $\textbf{\text{-}}$ & $\textbf{\text{-}}$  &$\textbf{\text{-}}$ 
&$\textbf{\text{-}}$&$\textbf{\text{-}}$&$\textbf{\text{-}}$&$\textbf{\text{-}}$&$\textbf{\text{-}}$\\


\textbf{\textit{CiteSeer-Semi}} ($\rho$ = 10) & \textbf{30.30\% (20)} & \textbf{30.30\% (20)} & \textbf{30.30\% (20)}  & \textbf{3.03\% (2)}  & \textbf{3.03\% (2)} & \textbf{3.03\% (2)}  & $\textbf{\text{-}}$ & $\textbf{\text{-}}$  &$\textbf{\text{-}}$ &$\textbf{\text{-}}$&$\textbf{\text{-}}$&$\textbf{\text{-}}$&$\textbf{\text{-}}$&$\textbf{\text{-}}$&$\textbf{\text{-}}$\\

\textbf{\textit{PubMed-Semi}} ($\rho$ = 10)& \textbf{47.62\% (20)} & \textbf{47.62\% (20)} & \textbf{4.76\% (2)}  & $\textbf{\text{-}}$  & $\textbf{\text{-}}$ & $\textbf{\text{-}}$  & $\textbf{\text{-}}$ & $\textbf{\text{-}}$  &$\textbf{\text{-}}$ &$\textbf{\text{-}}$&$\textbf{\text{-}}$&$\textbf{\text{-}}$&$\textbf{\text{-}}$&$\textbf{\text{-}}$&$\textbf{\text{-}}$\\

\textbf{\textit{Computers-Semi}} ($\rho$ = 10) & \textbf{18.18\% (20)} & \textbf{18.18\% (20)} & \textbf{18.18\% (20)} & \textbf{18.18\% (20)} & \textbf{18.18\% (20)} & \textbf{1.82\% (2)} & \textbf{1.82\% (2)} & \textbf{1.82\% (2)} & \textbf{1.82\% (2)} & \textbf{1.82\% (2)} & $\textbf{\text{-}}$ & $\textbf{\text{-}}$ & $\textbf{\text{-}}$ & $\textbf{\text{-}}$ & $\textbf{\text{-}}$\\

\textbf{\textit{Computers-Random}} ($\rho$ = 25.50) & \textbf{3.01\% (4)} & \textbf{15.79\% (21)} & \textbf{10.53\% (14)} & \textbf{3.76\% (5)} & \textbf{38.35\% (51)} & \textbf{2.26\% (3)} & \textbf{3.01\% (4)} & \textbf{6.02\% (8)} & \textbf{15.79\% (21)} & \textbf{1.50\% (2)} & $\textbf{\text{-}}$ & $\textbf{\text{-}}$ & $\textbf{\text{-}}$ & $\textbf{\text{-}}$ & $\textbf{\text{-}}$\\

\textbf{\textit{CS-Random}} ($\rho$ = 41.00) & \textbf{3.98\% (7)} & \textbf{2.27\% (4)} & 
\textbf{11.36\% (20)}  & \textbf{2.27\% (4)} & \textbf{7.39\% (13)} & \textbf{11.93\% (21)}  & \textbf{1.70\% (3)} & 
\textbf{5.11\% (9)}  &\textbf{3.98\% (7)} &\textbf{0.57\% (1)}&$\textbf{7.95\% (14)}$&$\textbf{11.36\% (20)}$&$
\textbf{2.27\% (4)}$&$\textbf{23.30\% (41)}$&$\textbf{4.55\% (8)}$\\

\bottomrule[2.0pt]
\end{tabular}}
\end{table}
\begin{table}[h]
\centering
\setlength{\abovecaptionskip}{0pt}    
\setlength{\belowcaptionskip}{10pt}
\caption{\centering Label distributions on the whole graphs}
\label{training_set_label_distribution}
\scalebox{0.67}{
\begin{tabular}{l|ccccccccccccccc}
\toprule[2.0pt]
\textbf{Dataset} & $\mathcal{C}_{0}$ & $\mathcal{C}_{1}$ & $\mathcal{C}_{2}$ & $\mathcal{C}_{3}$ &$\mathcal{C}_{4}$ &$\mathcal{C}_{5}$ & $\mathcal{C}_{6}$ & $\mathcal{C}_{7}$ & $\mathcal{C}_{8}$ & $\mathcal{C}_{9}$ & $\mathcal{C}_{10}$& $\mathcal{C}_{11}$& $\mathcal{C}_{12}$& $\mathcal{C}_{13}$& $\mathcal{C}_{14}$ \\
\midrule
\textbf{\textit{Cora}} ($\rho\approx4.54$) &  \textbf{351}  &  \textbf{217} &  \textbf{418} &  \textbf{818} &  \textbf{426} &  \textbf{298} & \textbf{180} &$\textbf{\text{-}}$ & $\textbf{\text{-}}$  &$\textbf{\text{-}}$ &$\textbf{\text{-}}$&$\textbf{\text{-}}$&$\textbf{\text{-}}$&$\textbf{\text{-}}$&$\textbf{\text{-}}$\\
\textbf{\textit{CiteSeer}} ($\rho\approx2.66$)& \textbf{264} & \textbf{590}  & \textbf{668} &\textbf{701} &\textbf{696} & \textbf{508} &$\textbf{\text{-}}$ & $\textbf{\text{-}}$ & $\textbf{\text{-}}$  &$\textbf{\text{-}}$ &$\textbf{\text{-}}$&$\textbf{\text{-}}$&$\textbf{\text{-}}$&$\textbf{\text{-}}$&$\textbf{\text{-}}$\\
\textbf{\textit{PubMed}} ($\rho\approx1.91$)&  \textbf{4103} & \textbf{7739}  & \textbf{7835}& $\textbf{\text{-}}$ & $\textbf{\text{-}}$ & $\textbf{\text{-}}$& $\textbf{\text{-}}$ & $\textbf{\text{-}}$ & $\textbf{\text{-}}$  &$\textbf{\text{-}}$ &$\textbf{\text{-}}$&$\textbf{\text{-}}$&$\textbf{\text{-}}$&$\textbf{\text{-}}$&$\textbf{\text{-}}$\\
\textbf{\textit{Amazon-Computers}} ($\rho\approx17.73$) & \textbf{436} &\textbf{2142}  &\textbf{1414} &\textbf{542} & \textbf{5158} &\textbf{308} & \textbf{487} &  \textbf{818} & \textbf{2156}  & \textbf{291} &$\textbf{\text{-}}$&$\textbf{\text{-}}$&$\textbf{\text{-}}$&$\textbf{\text{-}}$&$\textbf{\text{-}}$\\
\textbf{\textit{Coauthor-CS}} ($\rho\approx35.05$) & \textbf{708} &\textbf{462}  &\textbf{2050} &\textbf{429} & \textbf{1394} &\textbf{2193} & \textbf{371} &  \textbf{924} & \textbf{775}  & \textbf{118} &$\textbf{\text{1444}}$&$\textbf{\text{2033}}$&$\textbf{\text{420}}$&$\textbf{\text{4136}}$&$\textbf{\text{876}}$\\

\bottomrule[2.0pt]
\end{tabular}}
\label{full_graph_label_distribution}
\end{table}

\subsection{Architecture of GNNs}
\label{The Architecture of UNREAL}

We conducted evaluations using three classic GNN architectures, namely GCN \cite{kipf2016semi}, GAT \cite{velivckovic2017graph}, and GraphSAGE \cite{hamilton2017inductive}. The GNN models were constructed with different numbers of layers, namely $L=1, 2, 3$. To enhance the learning process, each GNN layer was accompanied by a BatchNorm layer with a momentum of 0.99, followed by a PRelu activation function \cite{he2015delving}. For the GAT architecture, we employed multi-head attention with 8 heads. We search for the best model on the validation set. The available choices for the hidden unit sizes were 64, 128, and 256.

\subsection{Evaluation Protocol}
We utilized the Adam optimizer \cite{kingma2014adam} with an initial learning rate of 0.01 or 0.005. To manage the learning rate, we employed a scheduler based on the approach outlined in \cite{song2022tam}, which reduced the learning rate by half when there was no decrease in validation loss for 100 consecutive epochs. Weight decay with a rate of 0.0005 was applied to all learnable parameters in the model. In the initial training iteration, we trained the model for 200 epochs using the original training set for Cora, CiteSeer, PubMed, or Amazon-Computers. However, for Flickr, the training was extended to 2000 epochs in the first iteration. Subsequently, in the remaining iterations, we trained the models for 2000 epochs using the aforementioned optimizer and scheduler. The best models were selected based on validation accuracy, and we employed early stopping with a patience of 300 epochs.

\subsection{Technical Details of ReVar}
\label{Implementation details}

For all datasets \textbf{\textit{Cora-Semi}}, \textbf{\textit{CiteSeer-Semi}}, \textbf{\textit{PubMed-Semi}}, \textbf{\textit{Computers-Semi}}, \textbf{\textit{Computers-Random}} and \textbf{\textit{CS-Random}}, the learning rate $\eta$ is chosen from $\left\{0.0001, 0.0005, 0.001, 0.005, 0.01, 0.1\right\}$. The temperature hyperparameter $\tau$ is chosen from $\left\{0.05, 0.08, 0.13, 0.16, 0.21, 0.23, 0.26\right\}$. The threshold $v$ is chosen from $\left\{0.6, 0.63, 0.66, 0.7, 0.8, 0.83, 0.9, 0.93, 0.96, 0.99\right\}$. The factor $\lambda_{1}$ of $\mathcal{L}_{\mathrm{VR}}$ is chosen from $\left\{0.25, 0.35, 0.5, 0.85, 1, 1.5, 2, 2.15, 2.65, 3\right\}$. The factor $\lambda_{2}$ of $\mathcal{L}_{\mathrm{IR}}$ is chosen from $\left\{0.35, 0.5, 1, 1.25, 1.5, 2.85, 3\right\}$. The factors of mask node properties and edges in $\tilde{G}$ are chosen from $\left\{0.4, 0.45, 0.5, 0.6, 0.65, 0.7\right\}$ and $\left\{0.4, 0.45, 0.5, 0.55, 0.6, 0.65, 0.7\right\}$. The factors of mask node properties and edges in $\tilde{G}^{\prime}$ are chosen from $\left\{0.1, 0.15, 0.2, 0.3, 0.4, 0.45\right\}$ and $\left\{0.1, 0.15, 0.2, 0.3, 0.35, 0.4, 0.45\right\}$.

\subsection{Technical Details of Baselines}
\label{baselines}

For the GraphSMOTE method \citep{zhao2021graphsmote}, we employed the branched algorithms whose edge predictions are discrete-valued, which have achieved superior performance over other variants in most experiments. Regarding the ReNode method \citep{chen2021topology}, we search hyperparameters within the lower bound of cosine annealing, $w_{\min } \in{0.25,0.5,0.75}$, and the upper bound range, $w_{\max } \in{1.25,1.5,1.75}$, as suggested in \cite{chen2021topology}. The PageRank teleport probability was fixed at $\alpha = 0.15$, which is the default setting in the released codes. As for TAM \citep{song2022tam}, we performed a hyperparameter search for the coefficient of the ACM term, $\alpha\in{1.25,1.5,1.75}$, the coefficient of the ADM term, $\beta\in{0.125, 0.25, 0.5}$, and the minimum temperature of the class-wise temperature, $\phi\in{ 0.8, 1.2}$, following the approach described in \cite{song2022tam}. The sensitivity to the imbalance ratio of the class-wise temperature, $\delta$, was fixed at 0.4 for all the main experiments. Additionally, consistent with \citep{song2022tam}, we implemented a warmup phase for 5 iterations, as we utilized model predictions for unlabeled nodes.

\subsection{Configuration}
\label{configuration}
All the algorithms and models are implemented in Python and PyTorch Geometric. Experiments are conducted on a server with an NVIDIA 3090 GPU (24 GB memory) and an Intel(R) Xeon(R) Silver 4210R CPU @ 2.40GHz.
\newpage
\section{Algorithm}
\label{PseudoCode}
\begin{algorithm}
	\renewcommand{\algorithmicrequire}{\textbf{Input:}}
	\renewcommand{\algorithmicensure}{\textbf{Output:}}
	\caption{$\textit{ReVar}$}
	\label{alg1}
	\begin{algorithmic}[1]
            \REQUIRE Imbalanced Graph $\mathbf{G}=(\mathbf{V}, \mathbf{E}, \mathbf{V_{L}})$, feature matrix $\mathbf{X}$, adjacency matrix $\mathbf{A}$, unlabeled set $\mathbf{V_{U}} = \mathbf{V}- \mathbf{V_{L}}$, label matrix $\mathbf{Y}$, feature matrix of labeled nodes and unlabeled nodes $\mathbf{X_{L}}$ and $\mathbf{X_{U}}$, the number of classes $\mathbf{k}$, the threshold $v$ for determining whether a node has confident prediction, temperature hyperparameter $\tau$, GNN model $f_{\theta}$, the classifier $h_{\theta}$, learning rate $ \eta $, The total number $T$ of epochs for model training. The $\operatorname{sim}(\cdot, \cdot)$ computes the cosine similarity between two vectors, $\emph{CE}$ represents the cross-entropy loss function, and $\mathbb{I}(\cdot)$ is an indicator function. 

            \FOR{$t = 0,1,\dots,T$}
            \STATE Generate two differently augmented views $\tilde{G}=(\tilde{\mathrm{A}}, \tilde{\mathrm{X}})$ and $\tilde{G}^{\prime}=(\tilde{\mathrm{A}}^{\prime}, \tilde{\mathrm{X}}^{\prime})$ from original graph $\mathbf{G}$.
            \STATE $\tilde{\textbf{O}}_{L} \leftarrow f_{\theta} (\tilde{\mathrm{A}},\tilde{\mathrm{X}}_{L})$, $\tilde{\textbf{O}}_{U} \leftarrow f_{\theta} (\tilde{\mathrm{A}}, \tilde{\mathrm{X}}_{U})$
            
            \STATE  $\tilde{\textbf{O}}^{\prime}_{L} \leftarrow f_{\theta}(\tilde{\mathrm{A}}^{\prime}, \tilde{\mathrm{X}}^{\prime}_{L})$, $\tilde{\textbf{O}}^{\prime}_{U} \leftarrow f_{\theta}(\tilde{\mathrm{A}}^{\prime}, \tilde{\mathrm{X}}^{\prime}_{U})$
        
            \STATE  $\textbf{\% Component 1: Supervised Loss}$
            \STATE  $\mathcal{L}_{sup} \leftarrow \frac{1}{2}\emph{CE}(h_{\theta}(\tilde{\textbf{O}}_{L}),\mathrm{Y}) + \frac{1}{2}\emph{CE}(h_{\theta}(\tilde{\textbf{O}}^{\prime}_{U}), \mathrm{Y})$
           
            \STATE  $\textbf{\% Component 2: Intra-Class Aggregation Regularization}$
            \STATE  $ \mathcal{L}_{\mathrm{IR}}\leftarrow -\frac{1}{{N}_{U}} \sum_{i=1}^{{N}_{U}} \operatorname{sim}(\tilde{\textbf{O}}_{i},\tilde{\textbf{O}}^{\prime}_{i}) - \frac{1}{{N}_{all}} (\sum_{i=1}^{{N}_{L}} \sum_{j=1 \atop (i,j)\in same}^{{N}_{L}} \operatorname{sim}(\tilde{\textbf{O}}_{i},\tilde{\textbf{O}}^{\prime}_{j}) + \sum_{i=1}^{{N}_{L}} \sum_{j=1, j\not=i \atop (i, j)\in same}^{{N}_{L}} \operatorname{sim}(\tilde{\textbf{O}}_{i}, \tilde{\textbf{O}}_{j})$

            \STATE  $\textbf{\% Component 3: Variance-constrained Optimization with Adaptive Regularization}$
            \FOR{i = 1, 2, $\dots$, $k$}
            \STATE  Compute the class centers $\tilde{\mathbf{C}}_{i}$ and $\tilde{\mathbf{C}^{\prime}}_{i}$ for class $i$
            \ENDFOR
            
            \STATE  $  \triangleright$   Obtain the Label Probability for Each Node
            \FOR{$i = 1, 2, \dots, \left| \mathbf{V}\right|$}
            \STATE  $ \tilde{\mathbf{p}}_{i} \leftarrow \operatorname{Softmax}(\tilde{\mathbf{O}}_{i} \cdot \left[ \tilde{\mathbf{C}}_{1},\dots,\tilde{\mathbf{C}}_{k}\right]^{T})$
            \STATE  $\tilde{\mathbf{p}}^{\prime}_{i} \leftarrow \operatorname{Softmax}\left(\tilde{\mathbf{O}}^{\prime}_{i} \cdot [ \tilde{\mathbf{C}^{\prime}}_{1},\dots,\tilde{\mathbf{C}^{\prime}}_{k}\right]^{T})$ 
            \ENDFOR
            \STATE  $  \triangleright$ Eliminate Nodes with Low Confidence.
            \STATE  $ \mathbf{V}_{\text {conf }} \leftarrow \{i \mid \mathbb{I}\left(\max \left(\tilde{\mathbf{p}}_{i}^{\prime}\right)>v\right)=1, \forall i \in V_{U}\}$
            \STATE  $  \triangleright$ Replace Labeled Node Prediction
            \FOR{$i = 1, 2, \dots, \left| \mathbf{V_{L}} \right|$}
            \STATE  $ \tilde{\mathbf{p}}^{\prime}_{i} \leftarrow \mathbf{Y}_{i}$
            \ENDFOR

            \STATE  $ \mathcal{L}_{\mathbf{VR}} \leftarrow \frac{1}{\left|V_{\mathbf{conf}}\right|} \sum_{i \in V_{\mathbf{conf}}} \emph{CE} \left(\tilde{\mathbf{p}}_{i}^{\prime}, \tilde{\mathbf{p}}_{i}\right)+\frac{1}{\left|V_{L}\right|} \sum_{v_{i} \in V_{L}} \emph{CE} \left(\mathbf{Y}_{i}, \tilde{\mathbf{p}}_{i}\right)$
            \STATE  $\mathcal{L}_{Training} \leftarrow \mathcal{L}_{sup} + \lambda_{1} \mathcal{L}_{VR} + \lambda_{2} \mathcal{L}_{IR}$
            \STATE  $ \theta \leftarrow \theta - \eta\left(\nabla\mathcal{L}_{Training}\right)$
            \ENDFOR 

	\end{algorithmic}  
        \label{Algorithm}
\end{algorithm}
\newpage
\section{Notations}
\label{Notation Table}
\begin{table}[!h]
    \caption{Notation table.}
    \centering
    \begin{tabularx}{\textwidth}{@{}lX@{}}
        \toprule
        \textbf{\emph{Indices}} \\
        \midrule
$n$ & The number of nodes, $n = |\mathbf{V}|$.  \\ 
$f$ & The node feature dimension. \\
$k$ & The number of different classes. \\

$D$ & The dimension of the embedding space, or the dimension of the last layer of GNNs. \\
        \midrule
        \textbf{\emph{Parameters}} \\
        \midrule
$\mathbf{G}$    & An undirected and unweighted graph. \\
$\mathbf{V}$      & The node set of $\mathbf{G}$. \\
$\mathbf{E}$    & The edge set of  $\mathbf{G}$.\\
$\mathbf{X}$     & The feature matrix of $\mathbf{G}$, $\mathbf{X} \in \mathbb{R}^{n \times f}$.  \\
$\mathbf{V}_{\mathbf{L}}$     & The set of labeled nodes of $\mathbf{G}$.  \\
$\mathbf{A}$     & The adjacency matrix of $\mathbf{G}$, $\mathbf{A} \in \{0,1\}^{n \times n}$.   \\
$\mathbf{N}(v)$   & The set of $1$-hop neighbors for node $v$.   \\
$\mathbf{V}_{\mathbf{U}}$   & The set of unlabeled nodes, $\mathbf{V_{U}} :=\mathbf{V}\setminus\mathbf{V_{L}}$.\\
$\mathbf{C}_{i}$   &  The labeled set for class $i$.\\
$\rho$  & Imbalance ratio of a dataset, $\rho = \max_{i} \left|\mathbf{C}_{i}\right| / \min_{i} \left|\mathbf{C}_{i}\right|$.\\

$\tilde{\mathbf{G}}$ & A view from the original graph $ \mathbf{G}$ , $\tilde{\mathbf{G}}=\left(\tilde{\mathbf{A}}, \tilde{\mathbf{X}}\right)$ .\\
$\tilde{\mathbf{G}}^{\prime}$ & Another view from the original graph $ \mathbf{G}$ , $\tilde{\mathbf{G}}^{\prime}=\left(\tilde{\mathbf{A}}^{\prime}, \tilde{\mathbf{X}}^{\prime}\right)$.\\
$ \mathrm{h}$ & Output of GNN network, $ \mathrm{h} = f_{\theta}(\mathbf{A}, \mathbf{X}) $. \\
$\mathcal{S}$ &  The probability distribution
reference matrix, $\mathcal{S}:=[C_1; C_2; \ldots ;C_k]$. \\
$C$ & The collection of class centers, $C:=(C_1; C_2; \ldots ;C_k)$. \\
$\mathrm{\pi}_{i}$ & The probability distribution for node $i$.\\
$y_{i}$ & The one-hot label vector for node $i$.\\

$\mathcal{L}_{\text {sup }}$ & Supervised loss.\\
$\mathcal{L}_{\mathrm{IR}}$ & Intra-class aggregation regularization. \\
$\mathcal{L}_{\mathrm{VR}}$ &  Variance-constrained optimization with adaptive regularization. \\
$\mathcal{L}_{\text {final}}$ & The sum of all losses.\\
$ V_{\text {conf }}$ & The set of nodes with confident predictions.
\\
$\lambda_{1}$ & The weight for $\mathcal{L}_{\mathrm{VR}}$.\\
$\lambda_{2}$ & The weight for $\mathcal{L}_{\mathrm{IR}}$.\\
$v$ & The threshold for determining whether a node has confident prediction.\\
$\tau$ & The temperature hyperparameter.\\
$\alpha$  &  The size threshold of nodes being added in each class per round.\\
$\eta$ & Learning rate of GNN model.\\
        \midrule
        \textbf{\emph{Functions}} \\
        \midrule
$\mathbb{I}\left(\cdot\right)$ & The indicater function.\\
$\operatorname{sim}\left(\cdot\right)$ & The function that computes the cosine similarity between two vectors.\\
$CE\left(\cdot\right)$ & The cross-entropy function.\\
$f_{\theta}$  & GNN model.\\
        \bottomrule
    \end{tabularx}
\end{table}

\end{document}